\documentclass[12pt]{article}
\DeclareUnicodeCharacter{2212}{\textendash}
\usepackage{framed}
\usepackage{algorithm}
\usepackage[noend]{algpseudocode}
\usepackage[english]{babel}
\usepackage[utf8x]{inputenc}
\usepackage[T1]{fontenc}
\usepackage{apacite}
\usepackage{dirtytalk}
\usepackage{algorithm}
\usepackage{tikz}
\usetikzlibrary{positioning}
\usepackage{booktabs}
\usepackage{subfiles} 
\usepackage{multirow}
\usepackage{siunitx}
\usepackage{algpseudocode}
\usepackage{mathrsfs}
\usepackage{listings}
\usepackage{color}
\usepackage{bbm}
\usepackage{adjustbox}
\usepackage{epsfig, epsf, graphicx}
\usepackage{tikz}
\usepackage{animate}
\usepackage{pstricks, pst-node, psfrag}
\usepackage{amssymb, amsmath, amsthm, bm}
\usepackage{verbatim,enumerate}
\usepackage{rotating, lscape}
\usepackage{diagbox}
\usepackage[doublespacing]{setspace}

\usetikzlibrary{arrows,calc,tikzmark}
\usepackage[colorlinks=true,linkcolor=blue,citecolor=blue]{hyperref}%
\usepackage{float}
\usepackage{natbib}
\usepackage[small]{caption}
\usepackage{subcaption}
\usepackage{lipsum}
\usepackage{threeparttable}
\usepackage{multirow}
\usepackage{lmodern}
\usepackage{multicol}
\setlength{\columnsep}{1cm}
\setlength{\columnwidth}{5cm}
\usepackage[toc,page]{appendix}
\graphicspath{{figures/}}

\makeatletter
\def\BState{\State\hskip-\ALG@thistlm}
\makeatother
\makeatletter
\newcounter{phase}[algorithm]
\newlength{\phaserulewidth}
\newcommand{\setphaserulewidth}{\setlength{\phaserulewidth}}

\makeatother
\setphaserulewidth{.7pt}
\algrenewcommand\algorithmicrequire{\textbf{Input:}}
\algrenewcommand\algorithmicensure{\textbf{Output:}}
\newcounter{case}[algorithm]
\newlength{\caserulewidth}
\newcommand{\setcaserulewidth}{\setlength{\caserulewidth}}

\makeatother
\setcaserulewidth{.7pt}
\setlength{\oddsidemargin}{-0.125in} \setlength{\topmargin}{-0.5in}
\setlength{\textwidth}{6.5in} \setlength{\textheight}{9in}

\setlength{\textheight}{9in} \setlength{\textwidth}{6.5in}
\setlength{\topmargin}{-40pt} \setlength{\oddsidemargin}{0pt}
\setlength{\evensidemargin}{0pt}

\setlength{\textheight}{9.4in} \setlength{\textwidth}{6.8in}
\setlength{\topmargin}{-71pt} \setlength{\oddsidemargin}{0pt}
\setlength{\evensidemargin}{-6pt}
\tolerance=500
\setlength{\topmargin}{-56pt} \setlength{\oddsidemargin}{-6pt}

\algdef{SE}[SUBALG]{Indent}{EndIndent}{}{\algorithmicend\ }%
\algtext*{Indent}
\algtext*{EndIndent}

\def\diag{\hbox{diag}}


\def\diag{\hbox{diag}}

\def\boxit#1{\vbox{\hrule\hbox{\vrule\kern6pt
			\vbox{\kern6pt#1\kern6pt}\kern6pt\vrule}\hrule}}

\def\bse{\begin{eqnarray*}}
	\def\ese{\end{eqnarray*}}
\def\be{\begin{eqnarray}}
\def\ee{\end{eqnarray}}
\def\bq{\begin{equation}}
\def\eq{\end{equation}}
\def\bse{\begin{eqnarray*}}
	\def\ese{\end{eqnarray*}}


			
\newcommand{\0}{\mathbf{0}}

\newcommand{\bbeta}{\bm{\beta}}

\newcommand{\bGamma}{\bm{\Gamma}}
\newcommand{\bgamma}{\bm{\gamma}}
\newcommand{\bH}{\mathbf{H}}
\newcommand{\bI}{\mathbf{I}}

\newcommand{\bmu}{\bm{\mu}}
\newcommand{\bOmega}{\bm{\Omega}}
\newcommand{\bPsi}{\bm{\Psi}}

\newcommand{\bomega}{\bm{\omega}}
\newcommand{\bDelta}{\bm{\Delta}}
\newcommand{\bP}{\mathbf{P}}


\newcommand{\bs}{\mathbf{s}}
\newcommand{\bSigma}{\bm{\Sigma}}

\newcommand{\btau}{\bm{\tau}}
\newcommand{\btheta}{\bm{\theta}}
\newcommand{\bTheta}{\bm{\Theta}}

\newcommand{\bu}{\mathbf{u}}
\newcommand{\bU}{\mathbf{U}}
\newcommand{\bW}{\mathbf{W}}
\newcommand{\bw}{\mathbf{w}}
\newcommand{\bx}{\mathbf{x}}
\newcommand{\bxi}{\bm{\xi}}
\newcommand{\bX}{{\mathbf{X}}}
\newcommand{\by}{\mathbf{y}}
\newcommand{\bY}{\mathbf{Y}}
\newcommand{\bz}{\mathbf{z}}
\newcommand{\bZ}{\mathbf{Z}}

\newcommand{\Prob}{\text{P}}
\newcommand{\R}{\mathbb{R}}

\newcommand\myeq{\mathrel{\stackrel{\makebox[0pt]{\mbox{\normalfont\tiny d}}}{=}}}

\theoremstyle{plain}

\newtheorem{prop}{Proposition}

\theoremstyle{definition}

\newtheorem{defi}{Definition}

\pagenumbering{arabic}

\begin{document}

\thispagestyle{empty} \baselineskip=28pt \vskip 5mm
\begin{center} {\Huge{\bf A Generalized Unified Skew-Normal Process with Neural Bayes Inference}}
	
\end{center}

\baselineskip=12pt \vskip 10
mm

\begin{center}\large
Kesen Wang\footnote[1]{\baselineskip=10pt Statistics Program,
King Abdullah University of Science and Technology,
Thuwal 23955-6900, Saudi Arabia.\\
E-mail: kesen.wang@kaust.edu.sa, marc.genton@kaust.edu.sa
} and Marc G. Genton\textcolor{blue}{$^{1}$}
\end{center}
\baselineskip=17pt \vskip 4mm \centerline{\today} \vskip 7mm

\begin{center}
{\large{\bf Abstract}}

\end{center}

In recent decades, statisticians have been increasingly { encountering} spatial data that exhibit non-Gaussian behaviors such as asymmetry and { heavy-tailedness}.  As a result, the assumptions of symmetry and fixed tail weight in Gaussian processes have become restrictive and may fail to capture the intrinsic properties of the data. To address the limitations of the Gaussian models, a variety of skewed models has been proposed, of which the popularity has grown rapidly. These skewed models introduce parameters that govern skewness and tail weight. Among various proposals in the literature, unified skewed distributions, such as the Unified Skew-Normal (SUN), have received considerable attention. In this work, we revisit a more concise and intepretable re-parameterization of the SUN distribution and apply the distribution to random fields by constructing a generalized unified skew-normal (GSUN) spatial process. We demonstrate { that the GSUN is a valid spatial process by showing its vanishing correlation in large distances} and provide the corresponding spatial interpolation method.  In addition, we develop an inference mechanism for the GSUN process using the concept of neural Bayes estimators with deep graphical attention networks (GATs) and encoder transformer. We show the superiority of our proposed estimator over the conventional CNN-based architectures regarding stability and accuracy by means of a simulation study and application to Pb-contaminated soil data. Furthermore, we show that the GSUN process is different from the conventional Gaussian processes and  Tukey $g$-and-$h$ processes, through the probability integral transform (PIT).

\baselineskip=14pt

\par\vfill\noindent
{\bf Keywords:}  Encoder transformer; Graphical attention network; Neural Bayes estimator; Non-Gaussian process;  Unified skew-normal distribution  

\clearpage\pagebreak\newpage \pagenumbering{arabic}
\baselineskip=26pt
\section{Introduction}

In the past decade, there has been a rising interest in applying the Skew-Normal (SN) distribution to model a spatial field because of the skewness parameter that can { capture} asymmetric data. In particular, \cite{kim2004bayesian} constructed a Bayesian predictor based on the SN distribution. In addition, \cite{dominguez2003multivariate}, \cite{allard2007new}, and \cite{rimstad2014skew} proposed to use the Closed Skew-Normal (CSN), a generalization of the SN distribution, to model skewed geostatistical data with various methods such as MCMC and method of moments presented for parameter estimation. Despite these fruitful studies, \cite{arellano2006unification} mentioned the non-identifiability and over-parameterization issue within the CSN distribution. Moreover, \cite{genton2012identifiability} conducted a comprehensive study on the non-identifiability problem of the SN spatial process and extended the conclusion to elliptically contoured random fields. Furthermore, \cite{minozzo2012existence} demonstrated that SN-type spatial processes do not satisfy the requirement of { vanishing correlations in large distances.} A possible remedy was developed by \cite{zhang2010spatial}, where a stationary process was constructed with SN as marginal distributions.  

To allow the spatial model to capture asymmetry and heavy tails, \cite{tagle2019non} applied the multivariate Skew-$t$ (ST) distribution to model daily wind data. Moreover, \cite{bevilacqua2021non} developed a ST process model that addresses the regression and dependence analysis of asymmetric and heavy-tailed spatial data. Although the ST spatial process possesses numerous desirable properties, it suffers from the same problematic { non-vanishing correlation in large distances} as the SN process. 

Apart from SN or ST spatial models, practitioners also sought to construct spatial models using the Unified Skew-Normal (SUN) distribution. To this end, a comprehensive summary of the existence of the SN and SUN family-based random fields can be found in \cite{mahmoudian2018existence}. A particularly noteworthy SUN-based spatial process is the Unified Skew Gaussian-Log-Gaussian (SUGLG) process proposed in \cite{zareifard2013non} with a Stochastic Approximation Expectation Maximization (SAEM) algorithm for parameter inference.  The SUGLG { process} is a more complicated model than the Gaussian process and can be used to model the asymmetric property of the spatial data and the SAEM algorithm was proposed to avoid the intractable computation of the integrals involved in the log-likelihood function. In addition, correlations vanish in large distances in the SUGLG process  because it is a convolution of a Gaussian process { (dominant) and a form of truncated Gaussian process (latent), where both processes follow such property.} Although the SUGLG can address numerous challenges existing in the SN and ST processes, the SAEM is likely to be unstable and time-consuming with large spatial dimensions. The simulation study and real data application conducted in \cite{zareifard2013non} were limited to sample sizes of about 100.  Moreover, the SUGLG model is restricted to a simplified sub-model of the SUN discussed in \cite{nonke}, which imposes the same dependence structure on the covariance of the observed and latent processes and the skewness matrix. Hence, in this work, we generalize the SUGLG and propose the Generalized Unified Skew-Normal (GSUN) process by involving a { separate} dependence structure also in the latent dimensions and designing a principled diagonal structure on the skewness matrix to improve its statistical interpretability and maintain vanishing correlations in large distances of the process. The skewness parameter accounts for information from both the observed and latent processes instead of the observed alone. Furthermore, we adopt a re-parameterization of the SUN distribution; the new parameterization is numerically more stable and geometrically more intuitive compared with the original parameterization proposed in \cite{arellano2006unification}. 

 \cite{sainsbury2023neural} demonstrated that a Graph Neural Network (GNN) can accurately approximate Bayes estimators, a function that maps the realizations of spatial processes to the point estimates of the corresponding parameters, which is amortized and efficient in inference. The idea of using neural networks to map realizations to the point estimates of the parameters is defined as the neural Bayes estimator \citep{sainsbury2024likelihood}. Furthermore, \cite{lenzi2023neural}, \cite{richards2023likelihood}, and \cite{walchessen2024neural} have applied { variants of the neural Bayes estimators} to the inference of complex processes and distributions, for which the likelihood functions are intractable or computationally intensive. A comprehensive review of neural Bayes estimators and amortized parameter inference can be found in \cite{zammit2024neural}. Following this direction, we combine a deep Graph Attention Network (GAT) \citep{velickovic2018graph} with a multi-layer encoder transformer architecture \citep{vaswani2017attention} to construct the neural Bayes estimator for the GSUN model, which is adapted to regularly and irregularly spaced grids. Moreover, we developed a simulation-based uncertainty quantification method for the parameter estimates.

This paper is organized as follows. Section~\ref{SUN_recap} revisits the SUN distribution and the more concise re-parameterization, laying down the ground for the proposition of the GSUN model. Section~\ref{GSUN_model} formally introduces the GSUN spatial process including its definition, validity, and spatial interpolation. Section~\ref{neural_4} gives a comprehensive background introduction to the neural Bayes estimator. Section~\ref{arc_rep} presents the network architecture and data representation methods adopted in this work. Section~\ref{simulation} conducts a series of comparative and evaluative simulation studies of the GSUN process to demonstrate its efficiency. Section~\ref{real_data} applies the GSUN model to a dataset of Pb-contaminated soils and draws a comparison between the GSUN and SUGLG model. Section~\ref{conclude} concludes and outlines the contributions of this work.
\section{Revisiting the SUN Distribution} \label{SUN_recap}
In this section, we briefly revisit the Unified Skew-Normal (SUN) distribution proposed in \cite{arellano2006unification} and introduce a more efficient parameterization documented in \cite{AG22}.  

\subsection{The SUN Distribution} \label{SUN_recap_1}
As previously mentioned, to address the non-identifiability issue existing in the CSN distribution, \cite{arellano2006unification} proposed a unified framework involving many skewed models as sub-cases by enforcing $\bar\bOmega^*$ below to be a correlation matrix. The SUN distribution arises from the selection mechanism described in \cite{arellano2006unification}, starting from a joint multivariate Gaussian random vector
\begin{align*}
    \bU = \begin{pmatrix}\bU_0\\
              \bU_1\end{pmatrix} 
    \sim {\cal N}_{m+d}(\0,\bar\bOmega^*), \qquad 
    \bar\bOmega^* = \begin{pmatrix}
                \Bar{\bGamma} & \bDelta^\top \\
                \bDelta & \Bar{\bOmega}
    \end{pmatrix}. 
\end{align*}
 Then, a random vector $\bZ = (\bU_1|\bU_0 + \btau > \0)$, where $\btau\in\R^m$ is a truncation parameter and $\bDelta$ represents the skewness matrix, is a non-shifted SUN random vector. With a local shift parameter $\bxi$,  $\bY=\bxi+\bomega\,\bZ$, where $\bOmega=\bomega\Bar{\bOmega}\bomega$, is a fully parameterized SUN random vector.  The probability density function (pdf) of $\bY$ is
\begin{align}
    f(\by)=\phi_d(\by-\bxi;\bOmega)\:\frac{\Phi_m(\btau + \bDelta^\top\Bar{\bOmega}^{-1}\bomega^{-1}(\by-\bxi);\Bar{\bGamma}-\bDelta^\top\Bar{\bOmega}^{-1}\bDelta)}{\Phi_m(\btau;\Bar{\bGamma})},
    \qquad \by\in\R^d.
    \label{sun-pdf}
\end{align}
In this work, we denote the SUN random vector with pdf defined in \eqref{sun-pdf} as ${\cal SUN}_{d,m}(\bxi,\bOmega,\bDelta,\btau,\bar\bGamma)$.
Numerous interesting and practical properties such as moments of the SUN are studied by \cite{arellano2006unification}, \cite{gupta2013some}, \cite{AA22}, and \cite{nonke}. 

\subsection{Re-parameterization of the SUN Distribution} \label{re-para}
In Section~\ref{SUN_recap_1}, we have denoted a SUN random vector by $\bY \sim {\cal SUN}_{d,m}(\bxi,\bOmega,\bDelta,\btau,\Bar{\bGamma})$, which is the original parameterization proposed in \cite{arellano2006unification}. Although such a parameterization bears many intriguing advantages,  the covariance matrix $\bOmega^*$ induces numerical instability in simulating realizations. Such a numerical instability can be observed from the \textit{rsun} function provided in the \textit{sn} $R$-package \citep{sn}, which simulates replicates of the SUN random vector using exactly this particular parameterization and stochastic representation. The  simulations often run into issues with rather large values of the entries in $\bDelta$, causing $\bOmega^*$ to be singular and hence to fail in Cholesky factorization, a crucial step to produce the samples. Alternatively, $\bY$ can also be represented with the convolution mechanism proposed in \citet{arellano2006unification}. In this sense, introducing two intermediate matrices $\textbf{B}_0$ and $\textbf{B}_1$ obscures the directions of the skewness of the random vector. In other words, $\bDelta$ introduces the skewness but fails to indicate the direction, making $\bDelta$ less interpretable. Meanwhile, this parameterization complicates simulations of random samples from the convolution mechanism.

To address these disadvantages, \cite{AG22} provided a simple way to construct a random vector with SUN distribution. In detail, they let $\boldsymbol{\omega}\boldsymbol{\Delta}=\bH\Bar{\bGamma}$ and 
$\bOmega=\bPsi+\bH\Bar{\bGamma}\bH^\top$. Then, they re-parameterized the definition of unified skew-normal distribution in \cite{AA22} using the convolution mechanism proposed in \cite{arellano2006unification} and denoted it as ${\cal SUN}_{d,m}(\bxi,\bPsi,\bH,\btau,\Bar{\bGamma})$. Here is a formal definition: 
\begin{defi}\label{defSUN}
A $d$-dimensional random vector $\bY$ has a unified skew-normal distribution, denoted as $\bY \sim {\cal SUN}_{d,m}(\bxi,\bPsi,\bH,\btau,\Bar{\bGamma})$, with location parameter \(\bxi\in \R^p\), dispersion matrix \(\bPsi \in \R^{d\times d}\), shape/skewness parameters \(\bH \in \R^{d \times m}\), latent truncation parameter \(\btau \in \R^m\), and positive definite latent correlation matrix \(\Bar{\bGamma} \in \R^{m \times m}\), if \(\bY=\bxi+\bH\bU+\bW\), where \(\bU\myeq(\bW_0|\bW_0 + \btau > \0)\), with \(\bW_0 \sim {\cal N}_m(\0,\Bar{\bGamma})\) and \(\bW \sim {\cal N}_d(\0,\bPsi)\) having independent, multivariate, Gaussian distributions.
\end{defi}

The new parameterization leads to the same distribution as with the original, see Section~S.1 in the Supplementary Materials for a detailed proof. Moreover, the new parameterization is numerically stable and eliminates the issues of Cholesky factorization in the original parameterization. In addition, the new skewness parameter $\bH$ provides a direct representation of the direction of the skewed mass. The convolution representation $\bY = \bxi + \bH\bU + \bW$ can be understood as a symmetric normal random vector $\bW$ plus a truncated normal random vector $\bU$, with $\bH$ determining the asymmetry introduced by $\bU$ and controlling the direction.

Lastly, the re-parameterization also allows for a more straightforward computation of the mean and variance. Because $\bY = \bxi + \bH\bU + \bW$, then $\mathbb{E}(\bY) = \bxi + \bH\mathbb{E}(\bU)$ and $\text{var}(\bY) = \bH\text{var}(\bU)\bH^\top + \bPsi$. Note that the mean and variance of multivariate truncated normal random vectors can be computed numerically with the $R$-package \textit{tmvtnorm} \citep{tmvtnorm}. 

\section{The GSUN Spatial Model} \label{GSUN_model}
In this section, we study the GSUN spatial model by first defining the GSUN spatial process and, then, deriving the kriging formula for spatial interpolation.
\subsection{GSUN Spatial Processes}
\cite{zhang2010spatial} discussed how the SN spatial model faces a non-ergodic issue, which is crucial for a valid and reasonable spatial process. The non-ergodic issue can also be generalized to the ST spatial model. The problem that causes SN and ST to be non-ergodic is that all locations in the study region share an identical univariate truncated normal or truncated $t$ random variable. As a result, the covariance or correlation between two locations persists even if they are significantly distant from each other.

In this section, we propose a more flexible construction of the SUN spatial model that satisfies the requirement of vanishing correlations in large distances and accounts for the latent spatial process of skewness. In detail, we define a GSUN spatial process as:
\begin{align*}
    Z(\bs) = W(\bs) + h(\bs)W^{+}(\bs),
\end{align*}
where $W(\bs)$ and $W^+(\bs) \overset{d}{=} (U(\bs)|U(\bs) > 0)$, with $U(\bs)$ being a Gaussian process, are independent Gaussian and non-negative Gaussian spatial processes with Mat\'ern covariance and correlation functions (see \cite{wang2023parameterization} for the choice of parameterization), respectively. Here $h(\bs)$, a location-specific scalar that dictates the amount of the non-negative Gaussian process $W^+(\bs)$ added to $W(\bs)$, { can be specified with any legitimate real values}. If observed at $n$ locations, then let $\bZ = \{Z(\bs_1),\dots,Z(\bs_n)\}^\top$, $\bW = \{W(\bs_1),\dots,W(\bs_n)\}^\top$, and $\bW^+ = \{W^+(\bs_1),\dots,W^+(\bs_n)\}^\top$. We have $\bW \sim {\cal N}_n(\0,\bSigma(\btheta_1))$ and $\bW^+ \sim {\cal TN}_n(\0;\0, \textbf{C}(\btheta_2))$, where ${\cal TN}_n(\textbf{a};\textbf{m}, \textbf{K})$ represents an $n$-dimensional truncated Gaussian distribution cut from below $\textbf{a}$ with mean vector $\textbf{m}$ and covariance matrix $\textbf{K}$; $\bSigma(\btheta_1)$ and $\textbf{C}(\btheta_2)$ are the Mat\'ern covariance and correlation matrices  with $\btheta_1 = (\sigma^2,\beta_1,\nu_1)^\top$ and $\btheta_2=(\beta_2,\nu_2)^\top$. In addition, we set $\bH = \text{diag}(h(\bs_1),\dots,h(\bs_n)) = \delta_1 \bI_n + \delta_2 \text{diag}\{(\sum_{i=1}^{n} \lambda_{i,\bSigma(\btheta_1)}\bP_{i,\bSigma(\btheta_1)} + \lambda_{i,\textbf{C}(\btheta_2)}\bP_{i,\textbf{C}(\btheta_2)})/(2n)\}$, where 
$(\lambda_{i,\bSigma(\btheta_1)},\bP_{i,\bSigma(\btheta_1)})$ and $(\lambda_{i,\textbf{C}(\btheta_2)},\bP_{i,\bSigma(\btheta_1)})$  are the $i$-th eigenvalue and normalized eigenvector of the covariance matrix $\bSigma(\btheta_1)$ and correlation matrix $\textbf{C}(\btheta_2)$. Here, $\delta_1 \in \R$ and $\delta_2 \in \R$ control the common skewness and the skewness induced from the principal components. { It is worth noticing that under this particular setting of $h(\bs)$, when $\delta_2 = 0$, the GSUN process is stationary because $h(\bs)$ will be the same across all locations. Otherwise, the GSUN process is non-stationary since the weighted principle components will mostly likely have varying marginal components and hence result in differing $h(\bs)$ over different locations.} In symbols, we have $\bZ \sim {\cal SUN}_{n,n}(\0,\bSigma(\btheta_1),\bH,\0,\textbf{C}(\btheta_2))$, hence $\bZ = \bW + \bH\bW^+$. Our model, compared to the SUN spatial model in \cite{zareifard2013non}, allows more flexibility for the latent skewed process $W^+(\bs)$. This process has its own distinct range and smoothness parameters, extending the dependence modeling capacity of the GSUN spatial process to the latent effects. For example, if there is extremely heavy rainfall in one region, it is likely to be similar in neighboring areas. Moreover, defining $\bH$ as a diagonal matrix composed of a linear combination of a common skewness effect and a scaled vector of a weighted average of the eigenvectors makes more statistical sense than simply scaling $\bSigma(\btheta_1)$ by a constant as introduced in \cite{zareifard2013non}. In such construction, $\bH$ can account for skewness in a shared direction and the direction of the principal components weighted by their eigenvalues. 

Now we demonstrate the { vanishing correlations in large distances} of the proposed models. First,
\begin{align*}
\text{cov}\{Z(\bs_i), Z(\bs_j)\} =  {\bSigma(\btheta_1})_{i,j} + \bH_{i,i} \text{var}(\bU)_{i,j}\bH_{j,j}.
\end{align*}
If we assume that locations $\bs_i$ and $\bs_j$ are sufficiently distant from each other, then ${\bSigma(\btheta_1})_{i,j}$ and $\text{var}(\bU)_{i,j}$ are { converging to} 0 due to the Mat\'ern covariance function.  Hence, $\text{cov}\{Z(\bs_i), Z(\bs_j)\} \approx~0$. 

The reason for $\text{var}(\bU)_{i,j}$ to be { close to} 0 at significantly large distances is that $\text{var}(\bU)$ is a covariance matrix of a multivariate truncated Gaussian random vector, where the multivariate Gaussian random vector has covariance matrix $\textbf{C}(\btheta_2)$. For Gaussian random vectors, if the marginals are uncorrelated, they are also independent and the truncation operation bears no effect on the induced independence (see Proposition \ref{trunc_ind} in the Supplementary Materials). 

\subsection{Spatial Interpolation}
We can rely on the kriging method for spatial interpolation. For instance, we note that the SUN has closed conditional distributions \citep{AA22, wang2024multivariate}. Let $$
\begin{pmatrix}
    \bZ_1 \\
    \bZ_2
\end{pmatrix} \sim {\cal SUN}_{n_1 + n_2,n}\left(\begin{pmatrix}
    \bxi_1 \\
    \bxi_2
\end{pmatrix}, \begin{pmatrix}
    \bPsi_{11} & \bPsi_{12} \\
    \bPsi_{21} & \bPsi_{22}
\end{pmatrix},\begin{pmatrix}
    \bH_1 \\
    \bH_2
\end{pmatrix},\btau,\bar\bGamma\right)
$$ with $n=n_1 + n_2$. Then, following from Proposition 3.2 in \cite{arellano2010multivariate} with a Gaussian density generator, we can plug in the new parameterization proposed in Section~\ref{re-para} to obtain the conditional distribution of the SUN as $(\bZ_1|\bZ_2 = \bz_2) \sim {\cal SUN}_{n_1,n}(\bxi_{1 \cdot 2},\bPsi_{1 \cdot 2}, \bH_{1 \cdot 2}, \btau_{1 \cdot 2}, \bar\bGamma_{1 \cdot 2})$, where 
\begin{align*}
    \bxi_{1 \cdot 2} & = \bxi_1 + \left(\bPsi_{12}\bPsi_{22}^{-1}+\bH_{1.2}\bar\bGamma_{1.2}\bH_{1.2}^\top\bPsi_{22}^{-1}\right)(\bz_2 - \bxi_2), \\
    \bH_{1 \cdot 2} & = (\bH_1 - \bPsi_{12}\bPsi_{22}^{-1}\bH_2) \bgamma_{1 \cdot 2}, \\
    \bPsi_{1 \cdot 2} & = \bPsi_{11} - \bPsi_{12}\bPsi_{22}^{-1}\bPsi_{21},\\
    \btau_{1 \cdot 2} & = \bgamma_{1 \cdot 2}^{-1}\{\btau + \bGamma_{1 \cdot 2}\bH_2^\top\bPsi_{22}^{-1}(\bz_2 - \bxi_2)\}, \\
    \bar\bGamma_{1 \cdot 2} & = \bgamma_{1 \cdot 2}^{-1}(\bar\bGamma^{-1} + \bH_2^\top \bPsi_{22}^{-1}\bH_2)^{-1}\bgamma_{1 \cdot 2}^{-1},\\
    \bgamma_{1 \cdot 2} & = \text{diag}\{(\bar\bGamma^{-1} + \bH_2^\top \bPsi_{22}^{-1}\bH_2)^{-1}\}^{1/2}. 
\end{align*}

Here, $\mathbf{Z}_1$ represents the values to be predicted, and $\mathbf{z}_2$ represents the observed values. { Interpolated} values can be obtained by calculating the first moment of $\mathbf{Z}_1|\mathbf{Z}_2$. Additionally, we can determine location-specific prediction intervals by computing the covariance matrix of $\mathbf{Z}_1|\mathbf{Z}_2$ and extracting the marginal variances from the diagonal. Both computations are efficient and straightforward as described in Section~\ref{re-para}. In addition, the computations of conditional distribution and expectation are implemented in the $sn$ $R$-package \citep{sn}.

\section{Neural Bayes Estimators} \label{neural_4}
\subsection{Background} \label{bayes_back}
Parameter inference for high-dimensional spatial models has long been challenging due to the significant computational costs involved in evaluating high-dimensional multivariate probability and cumulative distribution functions. The GSUN model, like others, requires the assessment of both multivariate Gaussian density and the ratios of two multivariate cumulative Gaussian distributions. Deep learning methods have become more popular as a potential solution to mitigate the heavy computation costs associated with distribution functions. In brief, these approaches mainly focus on approximating the likelihood function, the likelihood-to-evidence ratio, and the posterior distribution using deep neural networks \citep{sainsbury2023neural}. Furthermore, \cite{zammit2024neural} provided a detailed and comprehensive discussion of the theoretical basis and applications of deep learning methodologies in statistical inference and modeling.

The recent advancements in deep learning have led to the development of a new parameter { point} estimator called the neural Bayes estimator. This estimator uses deep neural networks to map a set of data samples to estimates of the underlying parameters. As described in the study by \cite{sainsbury2023neural}, the neural Bayes estimator is computationally efficient, approximately Bayes, likelihood-free, and amortized. Once trained on sufficient data, it allows for much faster inference compared to traditional methods like Markov Chain Monte Carlo (MCMC) or maximum likelihood estimation. The neural Bayes estimator was initially used in spatial statistics with deep Convolutional Neural Networks (CNNs) introduced in Chapter~9 of \cite{goodfellow2016deep}. However, this approach was limited to spatial data on regularly spaced grids. \cite{sainsbury2023neural} extended the capabilities of the neural Bayes estimator to work with irregularly spaced grids using graphical neural networks, greatly expanding its potential applications.

\subsection{Link to Bayes Risk}
 In this section, we aim to expand the concept of the neural Bayes estimator to our GSUN process. In particular, we have that 
\begin{align} \label{min_neural_bayes}
    \hat\bbeta \equiv \underset{\bbeta}{\text{arg min}} \quad \frac{1}{k}\sum_{k=1}^K  L(\bTheta^k,\hat\bTheta_{(\bz_k,\bbeta)}),
\end{align}
where $\bTheta = (\btheta_1,\btheta_2, \delta_1, \delta_2)$. Here, $\hat\bTheta_{(\bz_k,\bbeta)}$ denotes the point estimate of $\bTheta$ that depends on $\bz_k$ and $\bbeta$, where $\bz_k$ is the $k$-th realization of the GSUN spatial process with parameter $\bTheta^k$ and $\bbeta$ represents the weights of the deep neural network. Here, $L(\cdot,\cdot)$ is a non-negative loss function. As mentioned in \cite{sainsbury2023neural}, \eqref{min_neural_bayes} is a Monte Carlo approximation of the Bayes risk of a proposed estimator:
\begin{align} \label{bayes_risk}
    \int_{\bTheta}\int_{\bZ} L(\bTheta,\hat\bTheta_{(\bZ,\bbeta)})f(\bZ|\bTheta)\text{d}\bz \text{d}\Pi(\bTheta),
\end{align}
where $\Pi(\cdot)$ is a prior measure. It is quite clear that equation \eqref{bayes_risk} cannot be minimized through analytical methods. As a result, the Bayes estimator typically does not have a closed-form solution, which is why the concept of deep neural networks and { empirical risk minimization} are utilized. According to the universal approximation theorem \citep{hornik1989multilayer, zhou2020universality}, a sufficiently large and deep neural network can approximate complex functions, making it suitable for approximating the Bayes estimator. In practical terms, minimizing \eqref{min_neural_bayes} is effectively the same as minimizing \eqref{bayes_risk} when a sufficiently large training size { $K$} is available.
\subsection{Algorithm}
\begin{algorithm}[b!]
\caption{Training and Inference of Neural Bayes Estimators}\label{alg:bayes}
\textbf{Training} \\
\textbf{Require}: Sample size $n$, number of replicates $N$, spatial model $f(\bZ|\bTheta)$, prior $\Pi(\bTheta)$, neural network architecture for $\hat\bTheta_{(\bz^n,\bbeta)}$, $L(\cdot,\cdot)$ a non-negative loss function, a dropout rate $D$\\
\textbf{Procedure}:\\
\hspace*{5mm} (1) Simulate $\bTheta \sim \Pi(\bTheta)$ \\
\hspace*{5mm} (2) Simulate $N$ replicates of $\bZ^n \sim f(\bZ|\bTheta)$ \\
\hspace*{5mm} (3) Compute $\hat\bTheta_{(\bz^n,\bbeta)}$ with dropout rate $D$ on each layer \\
\hspace*{5mm} (4) Compute $L(\bTheta,\hat\bTheta_{(\bz^n,\bbeta)})$ \\
\hspace*{5mm} (5) Backward propagate and update $\bbeta$ \\
\hspace*{5mm} (6) Repeat (1) - (5) until a pre-specified criterion is met
\\
\textbf{Stopping Criterion}: \\
\hspace*{5mm}  The loss 
$L(\bTheta,\hat\bTheta_{(\bz^n,\bbeta)})<\epsilon_L$ { for $L$ consecutive simulations}\\
\\
\textbf{Inference}\\
\textbf{Require}: the observed sample $\bz$\\
\textbf{Procedure}:\\
\hspace*{5mm} Plug in the sample $\bz$ into the neural networks directly and obtain $\hat\bTheta_{(\bz^n,\bbeta)}$
\end{algorithm}
The training process of a neural Bayes estimator needs to involve a large number of simulations, { $K$}. Specifically, we start by simulating $\bTheta^k$ from its prior and then simulate $\bZ^{k,n}$, where $n$ denotes the sample size { and $k$ represents the count of training samples}, from the corresponding distribution. When it comes to choosing priors, \cite{lenzi2023neural} advised against using priors with concentrated weights on a particular value, such as the Gaussian prior. Therefore, in our training process, we use uniform priors for all parameters. Additionally, given the need to repeatedly simulate new parameters and data for the neural network, an \say{on-the-fly} simulation is an advantageous approach to prevent overfitting the network \citep{sainsbury2024likelihood} on specific data samples. In other words, for each training step, we simulate $\bTheta^k$ and $\bZ^{k,n}$ from scratch and then input the data into the proposed network to update $\bbeta$.

One common challenge in spatial statistics is the limited number of replicates $N$, which has led to the development of various covariance functions (for more information, refer to \cite{wang2023parameterization}). Conventionally, we can assume that $N$ = 1; however, based on the empirical evidence from the training stage, it appears that setting $N$ to a slightly larger value, such as 10, can accelerate the learning process. This could be because averaging across repetitions stabilizes the underlying characteristics of the generated data. As a result, we have implemented this approach in the training process of our neural Bayes estimator. Additionally, based on the findings of \cite{srivastava2014dropout}, randomly deactivating certain neurons during training can help prevent overfitting. Therefore, we have also introduced a dropout rate $D$ as a hyper-parameter; refer to Algorithm~\ref{alg:bayes} for a detailed description of the training algorithm.

As described in Algorithm \ref{alg:bayes}, the training stage involves numerous sampling repetitions of $\bTheta^k$ and subsequently $\bZ$ to achieve a uniform and sufficient coverage of the ranges of all parameters. Consequently, the Neural Bayes Estimator may need millions of training samples and a considerable amount of time to be properly trained. However, once the estimator is trained, the inference process becomes quite efficient because it is a straightforward plug-in procedure, which is also called  \textit{amortized inference} \citep{sainsbury2023neural}.

Additionally, when simulating $\bTheta$ from its prior distribution, it is important to avoid $(\delta_1,\delta_2)$ being close to $0$ at the same time. Indeed, if $(\delta_1,\delta_2)$ are both $0$, then the GSUN process reduces to a conventional Gaussian process and, therefore, the parameters $(\beta_2,\nu_2)$ become void and non-estimable.

\section{Architecture and Spatial Data Representation} \label{arc_rep}
In this section, we presents the two major blocks of our neural Bayes estimator for the GSUN process.

\subsection{Graphical Attention Networks}

Graphical Neural Networks (GNNs) arise naturally as a solution in the case of spatial processes. The realization of the study area can be represented directly as an undirected graph, where the nodes are specific locations, and the values are the corresponding realizations. The edges are drawn between two nodes if they are neighbors, thus representing the spatial information. Recent advancement in graphical neural networks \citep{kipf2016semi, gilmer2017neural} provided a framework to extend deep learning towards graph-structured data whether the goal is a node-level, edge-level or graph-level task. Among all graphical neural networks, the Graphical Attention Networks (GATs) by \cite{velickovic2018graph} significantly outperformed the previous GNN architectures such as the Graphical Convolutional Networks (GCNs) by \cite{kipf2016semi}. The main novelty of GATs is the introduction of an attention mechanism that allows the model to adaptively weight node neighbors concerning their importance; hence, more flexible and expressive representations are possible.

Consider an undirected graph $\mathcal{G} = (\mathcal{V}, \mathcal{E})$, where $\mathcal{V}$ is the set of nodes and $\mathcal{E}$ is the set of edges. Let $N_i = \{ j \in \mathcal{V} | (i,j) \in \mathcal{E} \}$ denote the set of neighbors of node $i$. Each node $i \in \mathcal{V}$ is associated with a feature vector $\mathbf{F}_i \in \mathbb{R}^f$, where $f$ is the dimensionality of the input feature space. The goal of GCNs is to learn the node embeddings $\mathbf{v}_i \in \mathbb{R}^{f'}$, where $f'$ is the dimensionality of the output feature space - by iteratively aggregating feature information from neighboring nodes. A common GCN layer aggregates node information by performing a weighted sum of neighboring node features:
\begin{equation*}
\mathbf{F}_i^{(l+1)} = \sigma \left( \sum_{j \in N_i \cup \{i\}} \frac{1}{\sqrt{d_i d_j}} \mathbf{G}^{(l)} \mathbf{F}_j^{(l)} \right),
\end{equation*}
where \( \mathbf{G}^{l} \) is a trainable weight matrix at layer \( l \), \( d_i \) and \( d_j \) denote the degrees of node \( i \) and \( j \), and \( \sigma \) is a nonlinear activation function. Although this method takes advantage of information from a local neighbourhood, it does so in a very uniform way, without discriminating between different nodes' neighbours, and this could make it difficult for the model to learn complex dependencies among nodes.

GATs remove this limitation by introducing the self-attention mechanism, allowing the nodes to assign different weights to their neighbors. For any pair of nodes $i$ and $j$, we first compute the unnormalized attention coefficient:
\begin{equation*}
e_{ij} = \text{LeakyReLU} \left\{ \mathbf{a}^\top (\mathbf{G} \mathbf{F}_i \| \mathbf{G} \mathbf{F}_j) \right\},
\end{equation*}
where $\mathbf{G} \in \mathbb{R}^{f'\times f}$ is a learnable weight matrix shared for all nodes, $\mathbf{a} \in \mathbb{R}^{2f'}$ is the attention vector, and $\|$ denotes the vectorization of the feature vectors of nodes $i$ and $j$. Herein, $\text{LeakyReLU}(x) = \max(0.2x, x)$ introduces the non-linearity. The attention coefficient $e_{ij}$ reflects the importance of node $j$'s features to node $i$.
The corresponding attention coefficients are then normalized across all neighbors of node $i$ using the softmax function:
\begin{equation} \label{softmax}
\alpha_{ij} = \frac{\exp(e_{ij})}{\sum_{k \in N_i} \exp(e_{ik})}.
\end{equation}
By considering these normalized attention coefficients, the updated representation for node $i$ is computed as a weighted sum of feature vectors of its neighbors:
\begin{equation*}
\mathbf{F}_i^{(l+1)} = \sigma \left( \sum_{j \in N_i \cup \{i\}} \alpha_{ij} \mathbf{G} \mathbf{F}_j^{(l)} \right),
\end{equation*}
where $\sigma(\cdot)$ denotes a non-linear activation function. The attention mechanism allows the model to learn which neighbors are more relevant for the task at hand, and provides more flexibility compared to GCNs.

In order to enhance the learning capacity of the model, GAT incorporates multi-head attention: it deploys $K$ independent attention mechanisms to calculate multiple series of attention coefficients; all resulting node representations obtained from each head are then combined by concatenation (intermediate layers) or averaging (when the layer is the last). Formally, multi-head mechanisms were defined as: 
\begin{equation*} 
\mathbf{F}_i^{(l+1)} = \|_{k=1}^K \sigma \left( \sum_{j \in N_i \cup {i}} \alpha_{ij}^{(k)} \mathbf{G}^{(k)} \mathbf{F}_j^{(l)} \right), 
\end{equation*} 
where $\|_{k=1}^K$ denotes the vectorization operation and $\alpha_{ij}^{(k)}$ are the attention coefficients from the $k$-th attention head. This approach improves the model's expressiveness because the model can grasp different aspects of node relationships through multiple attention heads.

\subsection{Encoder Transformer} \label{encoder}

The state-of-the-art Encoder architecture \citep{vaswani2017attention} has founded many approaches for modeling dependence structures, of which the main novelty is in using self-attention mechanisms to efficiently model long-range dependencies without any recurrence nor convolution.

The architecture of the Encoder consists of a stack of $L$ identical layers, each having two major sub-components: a multi-head self-attention mechanism and a feed-forward neural network (FFN).

The self-attention mechanism enables the model to learn the latent dependence structure of the data by first projecting the input data into three distinct spaces: the query ($\mathbf{Q}$), key ($\mathbf{K}$), and value ($\mathbf{V}$), where
\begin{equation*}
\mathbf{Q} = \mathbf{XW}_Q, \quad \mathbf{K} = \mathbf{XW}_K, \quad \mathbf{V} = \mathbf{XW}_V,
\end{equation*}
with $\mathbf{W}_Q, \mathbf{W}_K, \mathbf{W}_V \in \mathbb{R}^{d \times d_k}$ as learnable weight matrices. Then, the mechanism computes the scaled outer product between the queries and keys, which is defined as the attention scores:
\begin{equation*}
\text{Attention}(\mathbf{Q}, \mathbf{K}, \mathbf{V}) = \text{softmax} \left( \frac{\mathbf{QK}^\top}{\sqrt{d_k}} \right) \mathbf{V}.
\end{equation*}
Here, softmax($\cdot$) is identical to the softmax function mentioned in equation \eqref{softmax} but is applied row-wise to $\textbf{Q}\textbf{K}^\top/\sqrt{d_k}$. The attention scores are divided by $\sqrt{d_k}$ to ensure identical scales of the outer products and, therefore, even distribution of the attention scores when a softmax function is applied. In addition, such operation also stabilizes the gradients during training.

The Encoder utilizes multi-attention mechanism as a method for enlarging the model's capacity for capturing relations of various types. Instead of one Attention block, it has $h$ parallel attention heads, each performing attention with an independent query, key, and value matrices. The outputs of attention serve as vectors which are further concatenated and linearly transformed:
\begin{equation*}
\textbf{h} = \text{MultiHead}(\mathbf{Q}, \mathbf{K}, \mathbf{V}) = [\text{head}_1, \dots, \text{head}_h] \mathbf{W}_O,
\end{equation*}
where each attention head is computed as:
\begin{equation*}
\text{head}_i = \text{Attention}(\mathbf{Q}_i, \mathbf{K}_i, \mathbf{V}_i),
\end{equation*}
and $\mathbf{W}_O \in \mathbb{R}^{hd_k \times d}$ is a learnable output projection matrix.

Following the multi-head self-attention, each layer of the Encoder applies a FFN to the learned embeddings. In particular, this FFN consists of two linear transformations with a ReLU activation in between:
\begin{equation*}
\text{FFN}(\mathbf{h}) = \text{ReLU}(\mathbf{h} \mathbf{W}_1 + \mathbf{b}_1) \mathbf{W}_2 + \mathbf{b}_2,
\end{equation*}
where $\mathbf{W}_1 \in \mathbb{R}^{d \times d_{ff}}$, $\mathbf{W}_2 \in \mathbb{R}^{d_{ff} \times d}$, and $d_{ff}$ is the dimensionality of the hidden layer. The nonlinear transformation added by the FFN component further increases the capacity of the model and projects the embeddings to a higher dimension before lowering the dimension again to $d$.

To stabilize training and improve convergence, both the multi-head attention and FFN layers output are passed through a layer normalization operation following the work of \cite{ba2016layer}, and residual connections are added to promote gradient flow.  Formally, the output of the $l$-th encoder layer is computed as:
\begin{align*}
\mathbf{h}_i^{(l)} & = \text{LayerNorm} \left( \mathbf{h}_i^{(l-1)} + \text{MultiHead}(\mathbf{h}_i^{(l-1)}) \right), \\
\mathbf{h}_i^{(l+1)}& = \text{LayerNorm} \left( \mathbf{h}_i^{(l)} + \text{FFN}(\mathbf{h}_i^{(l)}) \right).
\end{align*}
These residual connections improve the model’s ability to propagate gradients during back-propagation, making deeper models easier to train.
\subsection{Graphical Representations of the Spatial Data and the Neural Bayes Estimator Network}
In this section, we visually illustrate the graphical representation of the spatial data in detail and demonstrate the architecture of the neural Bayes estimator for the GSUN  spatial process.
\subsubsection{Spatial Data} \label{data_rep}
Figure \ref{fig:visual_data} depicts 2-D spatial data denoted with an undirected graph ${\cal G} = ({\cal V}, {\cal \xi})$. This graphical representation is obtained by plotting the locations of the spatial data on the domain of interest and treating each location as a node; we connect two nodes (locations) if their distance is within a pre-specified radius, $R$.  Here, each node embedding ${ \textbf{E}_i} = (z_i,x_i,y_i)^\top$ with $i = 1,\dots,n$, where $z_i$ is a realization of the spatial process at location $(x_i,y_i) \in [0,1]^2$. When training the neural Bayes 
\begin{figure}[t!]
    \centering
    \includegraphics[width=0.4\linewidth]{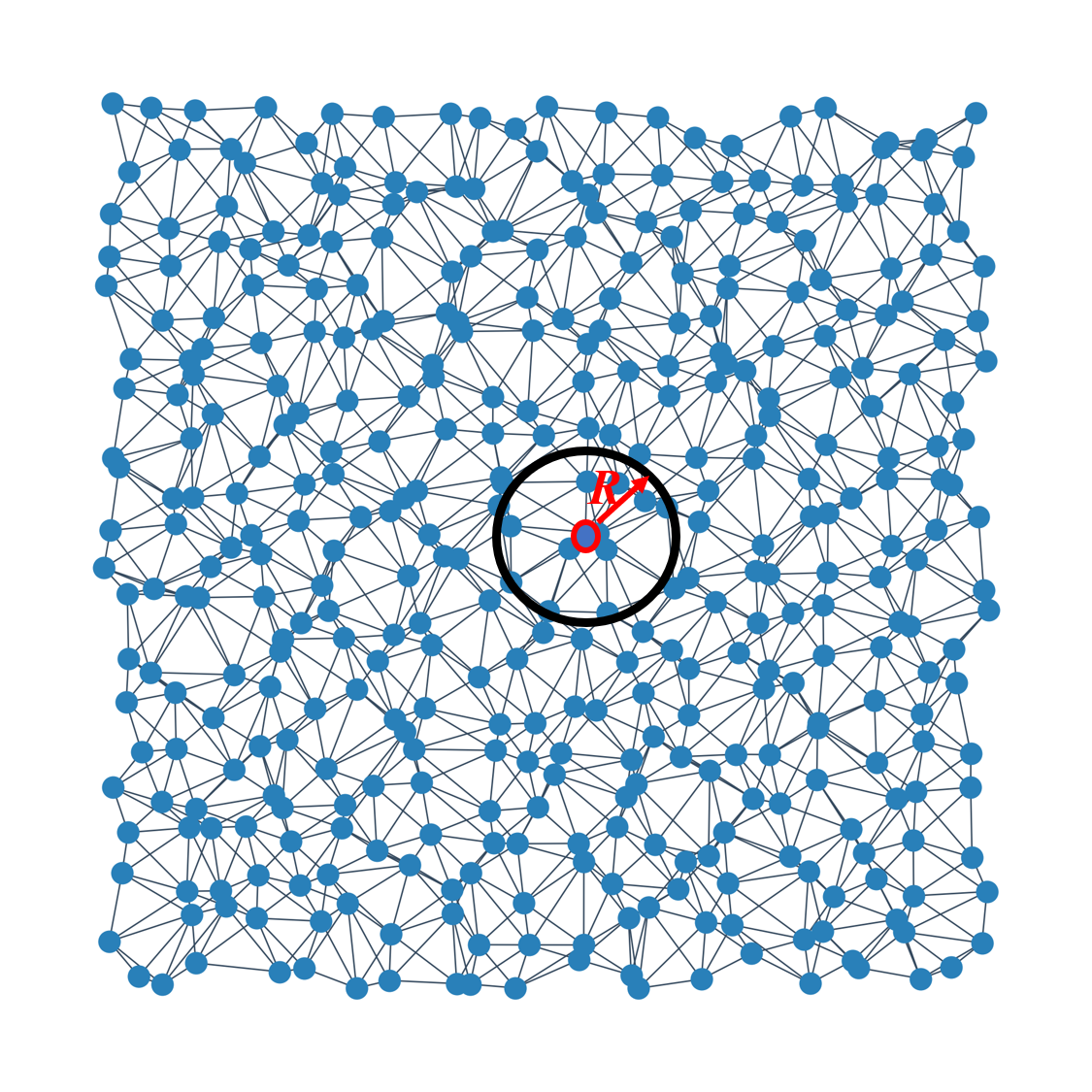}
    \caption{Graphical representation of the spatial data with undirected edges. The circle in black denotes a pre-specified radius $R$ to define the neighboring nodes. The red arrow denotes the radius, $R$.}
    \label{fig:visual_data}
\end{figure}
estimator, we set $R = 0.34$ so that the GATs are able to reach a sufficient coverage of the study region within 3 steps of aggregation (3 layers). Lastly, the edges ${\cal \xi}$ represent connectivity only because the spatial information has already been contained in the node-level embeddings and formatted into the input of the neural Bayes estimator, see Figure \ref{fig:trans-GAT}.
\subsubsection{Network Architecture}
 In this section, we visualize the architecture of the neural Bayes estimator in Figure \ref{fig:trans-GAT}. First, we format the input as two parts, ${\cal G}$ and $\textbf{D}$, where ${\cal G}$ is the graphical representation (defined in Section \ref{data_rep}) of the spatial data and $\textbf{D} \in \R^{n \times n}$ is the distance matrix for their corresponding locations. There are two paths of forward propogation in this network. First, ${\cal G}$ is passed through 3 layers of GATs, where each layer contains 8 attention heads ($n$-head = 8) and outputs the node-level embeddings as $32$, $256$, and $512$ dimensional vectors, respectively. Here, we set the output dimensions to powers of 2 for maximum 
 \begin{figure}[t!]
    \centering
    \includegraphics[width=0.8\linewidth]{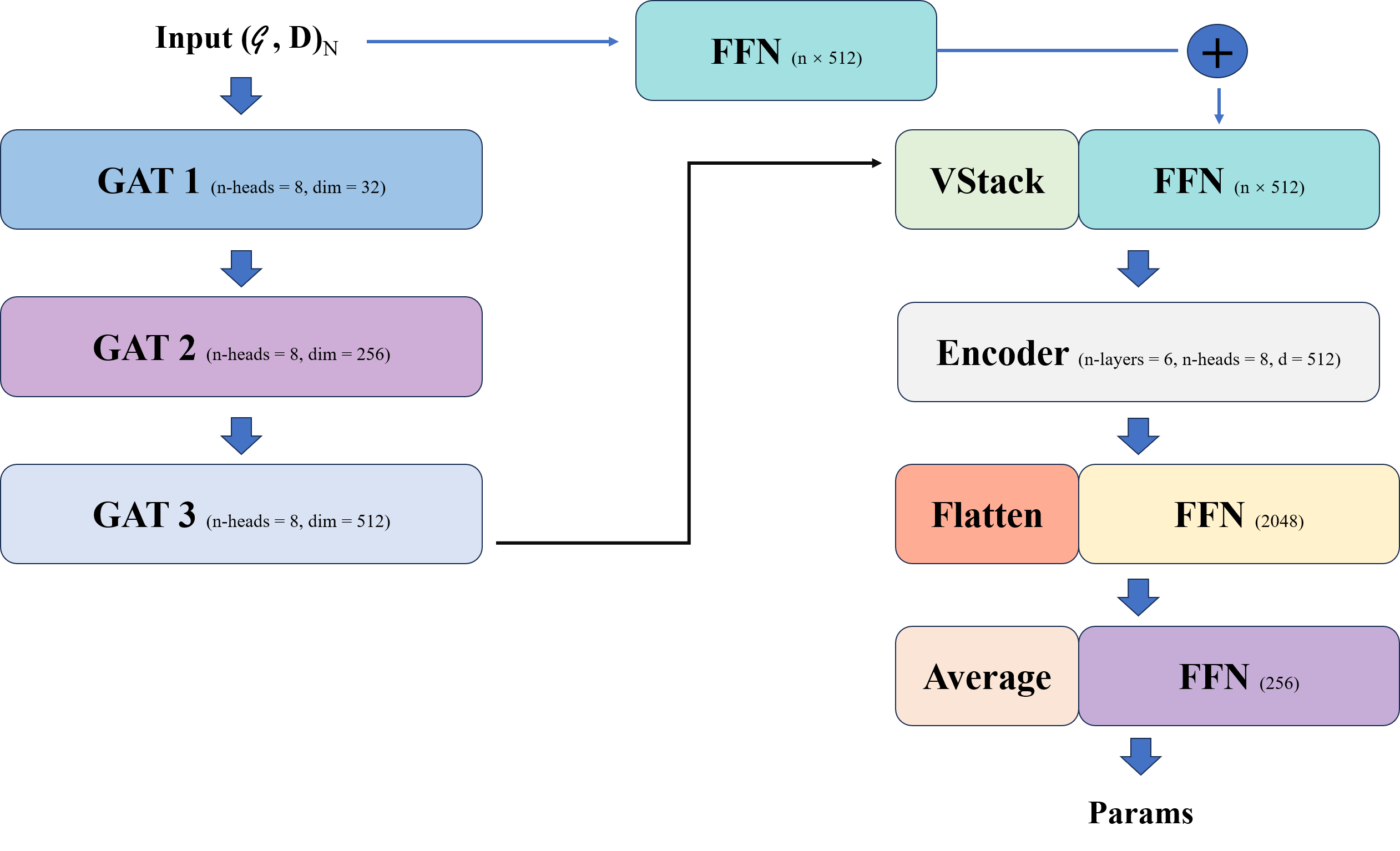}
    \caption{GAT- and Encoder-based neural Bayes estimator for the GSUN spatial process. $n$-heads denotes the number of attention heads for the multi-head attention mechanism. dim demonstrates the output dimension of each GAT layer. $n$-layers represents the number of encoder blocks. $d$ is the dimension of the input embeddings of the encoder block. The numbers in the parenthesis for each FFN layer illustrate its output dimension.}
    \label{fig:trans-GAT}
\end{figure}
General Processing Unit (GPU) capacity at the training stage. Then, the node-level embeddings are stacked in a column-wise manner in the \say{VStack} block.  Here, the dimensionally transformed distance matrix $\textbf{D}$ is added to the stacked embeddings. The network, then, passes the aggregated data to a FFN layer to be prepared as the input of the encoder block. As mentioned in Section \ref{encoder}, the encoder block, consisting of 6 encoders with 8 attention heads for each and 512 for the dimension of the input embeddings $d$, will further capture and extract dependencies within the data.  Moreover, the output is then collapsed into a 1-D vector and passed through a FFN layer for the last prediction layer.  Because there are $N$ replicates of $({\cal G}, \textbf{D})$, the network will process each replicate in parallel, { a concept that has been defined as $deepsets$ in \cite{zaheer2017deep}}. Hence, before going through the final prediction layer, the data are averaged across the batch dimension. Finally, the prediction layer maps the data to the parameter estimates $\hat\bTheta$. We set $d$ and the output sizes of all FFN layers as powers of 2 for maximum GPU capacity and they are tuned according to the convergence efficiency during training. The hyper-parameter tuning of the network architecture is demonstrated in Figure~\ref{training}. 
\begin{figure}[t!]
\centering
\begin{subfigure}{0.49\textwidth}
  \centering
  \includegraphics[width=1\textwidth,]{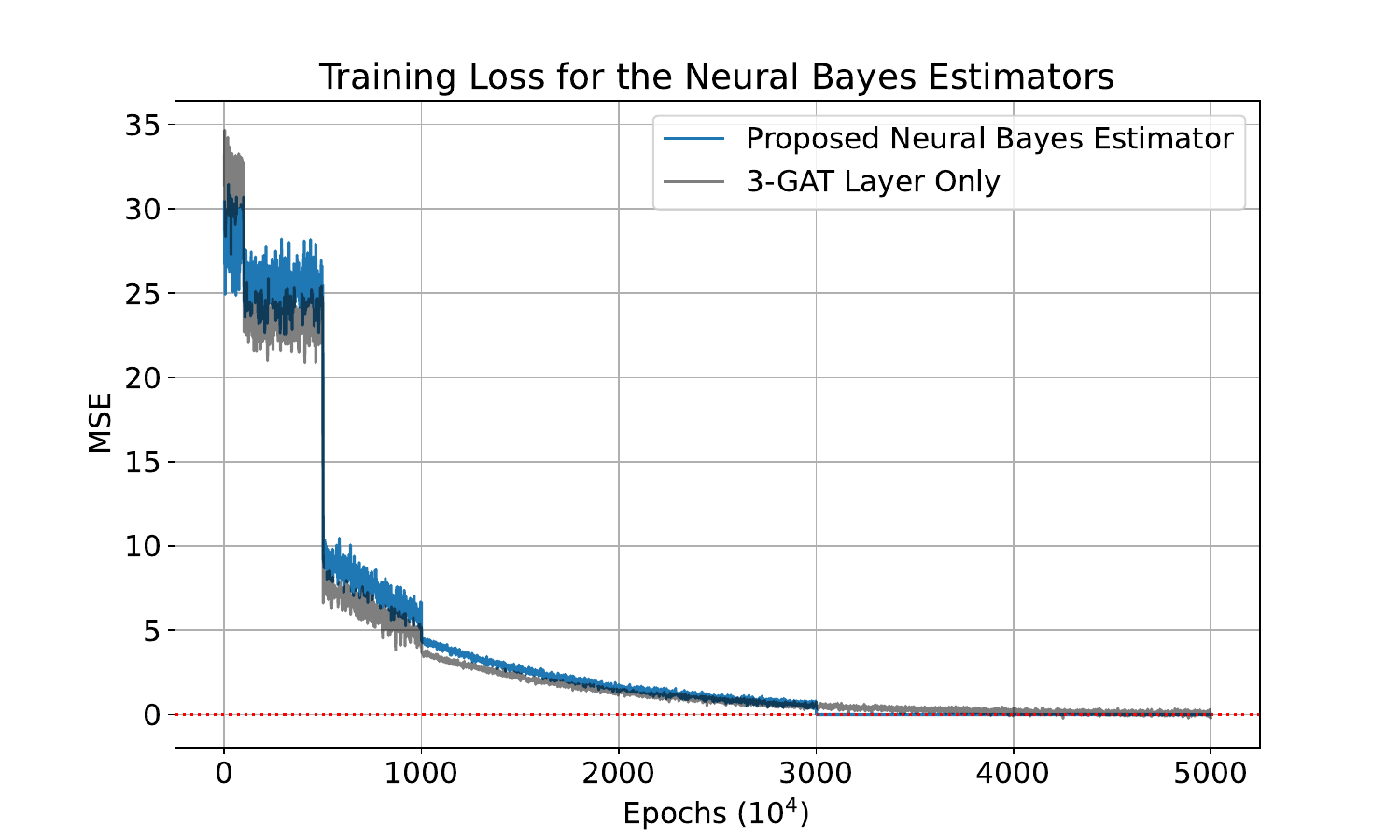}
  \caption{Training loss for the proposed neural Bayes estimator with and without an encoder block.} 
  \label{training_loss_trans}
\end{subfigure}
\begin{subfigure}{0.49\textwidth}
  \centering
  \includegraphics[width=1\textwidth,]{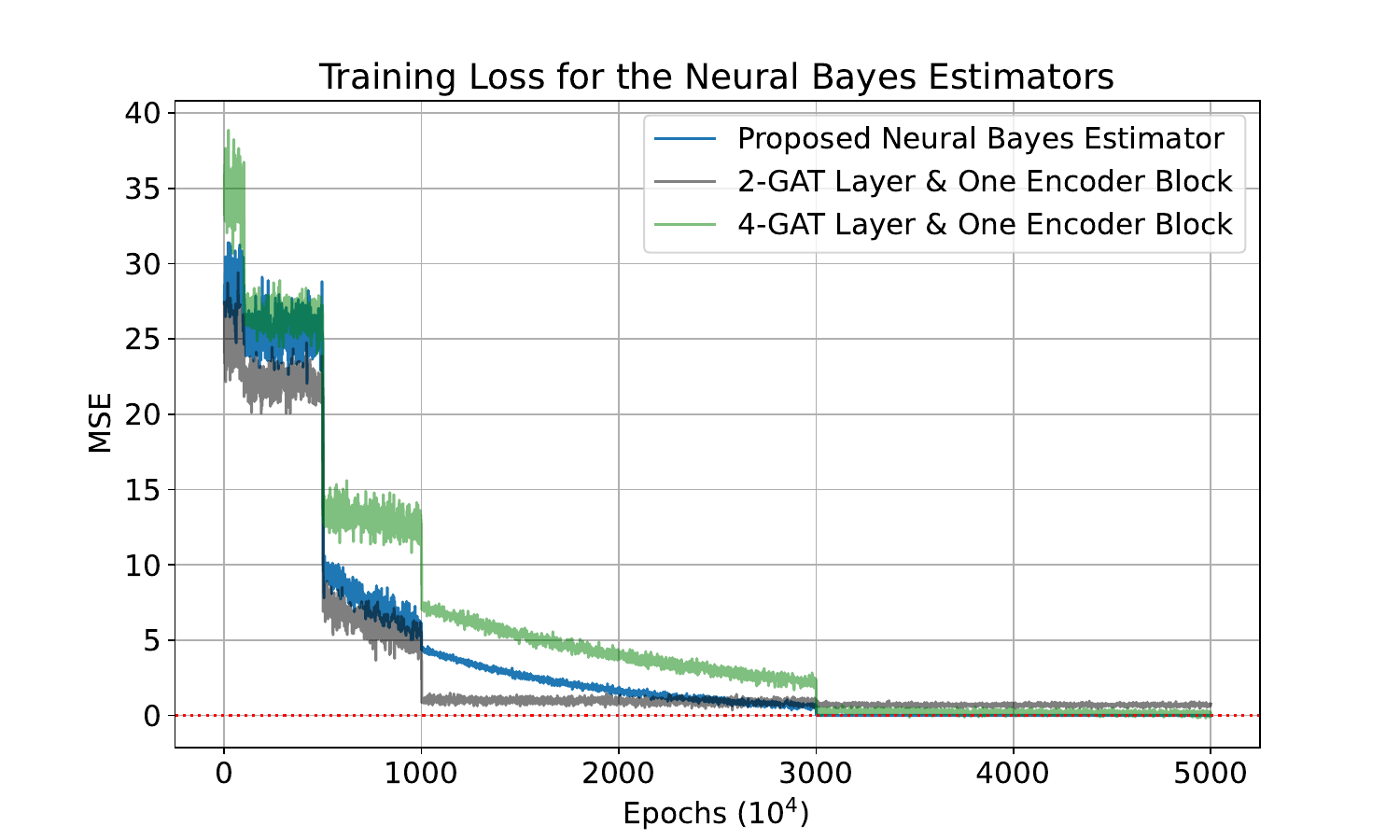}
  \caption{Training loss curves for the proposed neural Bayes estimators with various GAT layers.  }
  \label{training_loss_GAT}
\end{subfigure}
\caption{Hyper-parameter tuning for the proposed neural Bayes estimator. The red line denotes the threshold $10^{-5}$. The learning rates start as $10^{-3}$ and are manually adjusted (multiplied by 0.1) at $(100,500,1000,3000) \times 10^4$ epochs. The radius $R$ is adapted to the differing numbers of GAT layers such that the aggregations can sufficiently cover the study region within the specified number of layers.}
\label{training}
\end{figure}
Figure~\ref{training_loss_trans} shows that having an encoder transformer block indeed helps the Bayes estimator to converge, which can be attributed to the enhanced modeling capacity and more profound understandings of the inherent dependence structure. Otherwise, the estimator experiences relatively large volatility even with $5\times 10^7$ simulated training data. In detail, Figure \ref{training_loss_GAT} has demonstrated that having more layers of a particular architecture does not always lead to better efficiency. Redundant layers can cause the neural Bayes estimator to converge at a slower rate (more simulations needed to be properly trained).
\section{Simulation Study} \label{simulation}
In this section, we illustrate and analyze the performance of our proposed neural Bayes estimator for the GSUN spatial process. First, we compare the inference performance of our network with a CNN-based network architecture, which has been used for ammotized inference in \cite{goodfellow2016deep} and \cite{sainsbury2024likelihood} for Gaussian processes. Second, we show that the GSUN process has its own distinct features through the probability integral transform (PIT)  when compared to Gaussian and Tukey $g$-and-$h$ \citep{xu2017tukey} processes. Lastly, we demonstrate the uncertainty quantification of our neural Bayes estimator and verify its validity.  
\subsection{Analysis of inference results} \label{train_simulation}
We compare the inference results of our neural Bayes estimator with the inference results obtained from a CNN-based estimator, see Figure \ref{arc_neural_bayes} in Section S.2 of the Supplementary Materials. In addition, Section S.2 also includes a detailed description and explanation of the network structure and the associated ammortized inference. Both Bayes estimators are sufficiently and identically trained such that their empirical Bayes risks fall below $10^{-5}$ for a sufficiently large number of consecutive simulations, see Figure \ref{training_loss} in the Supplementary Materials. 
\begin{figure}[b!]
\centering
\begin{subfigure}{0.32\textwidth}
  \centering
  \includegraphics[width=1\textwidth,]{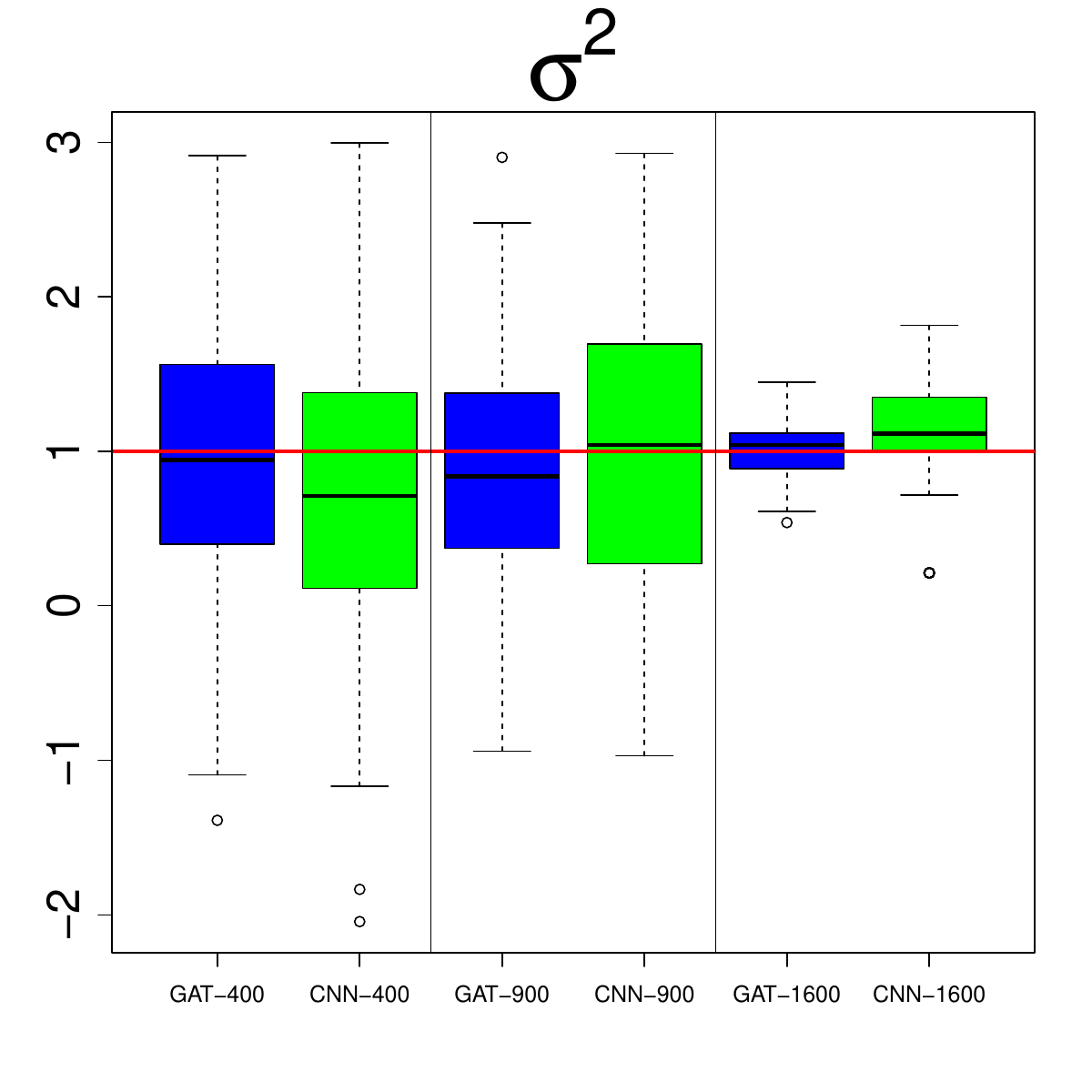}
\end{subfigure}
\begin{subfigure}{0.32\textwidth}
  \centering
  \includegraphics[width=1\textwidth,]{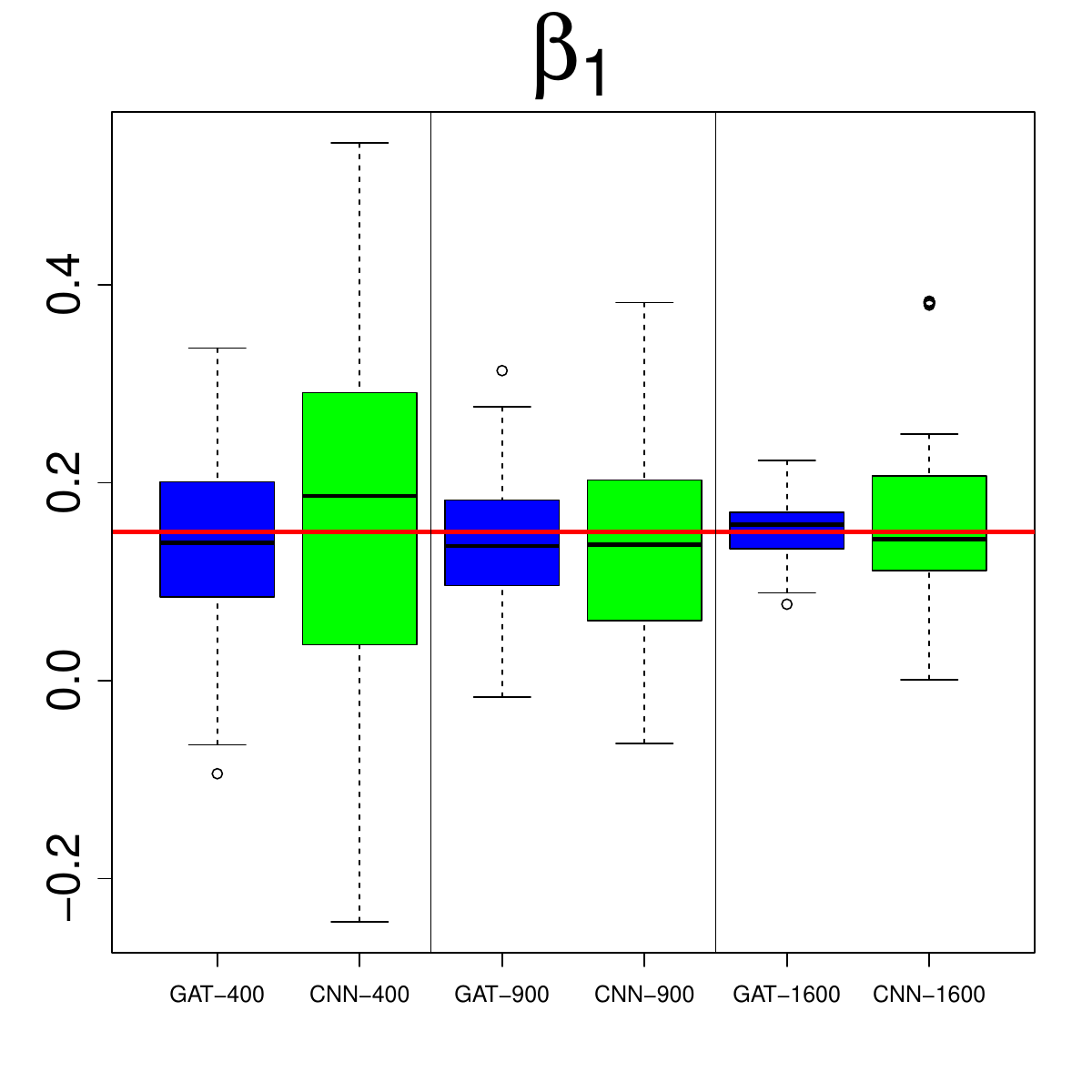}
\end{subfigure}
\begin{subfigure}{0.32\textwidth}
    \centering
    \includegraphics[width = 1\textwidth,]{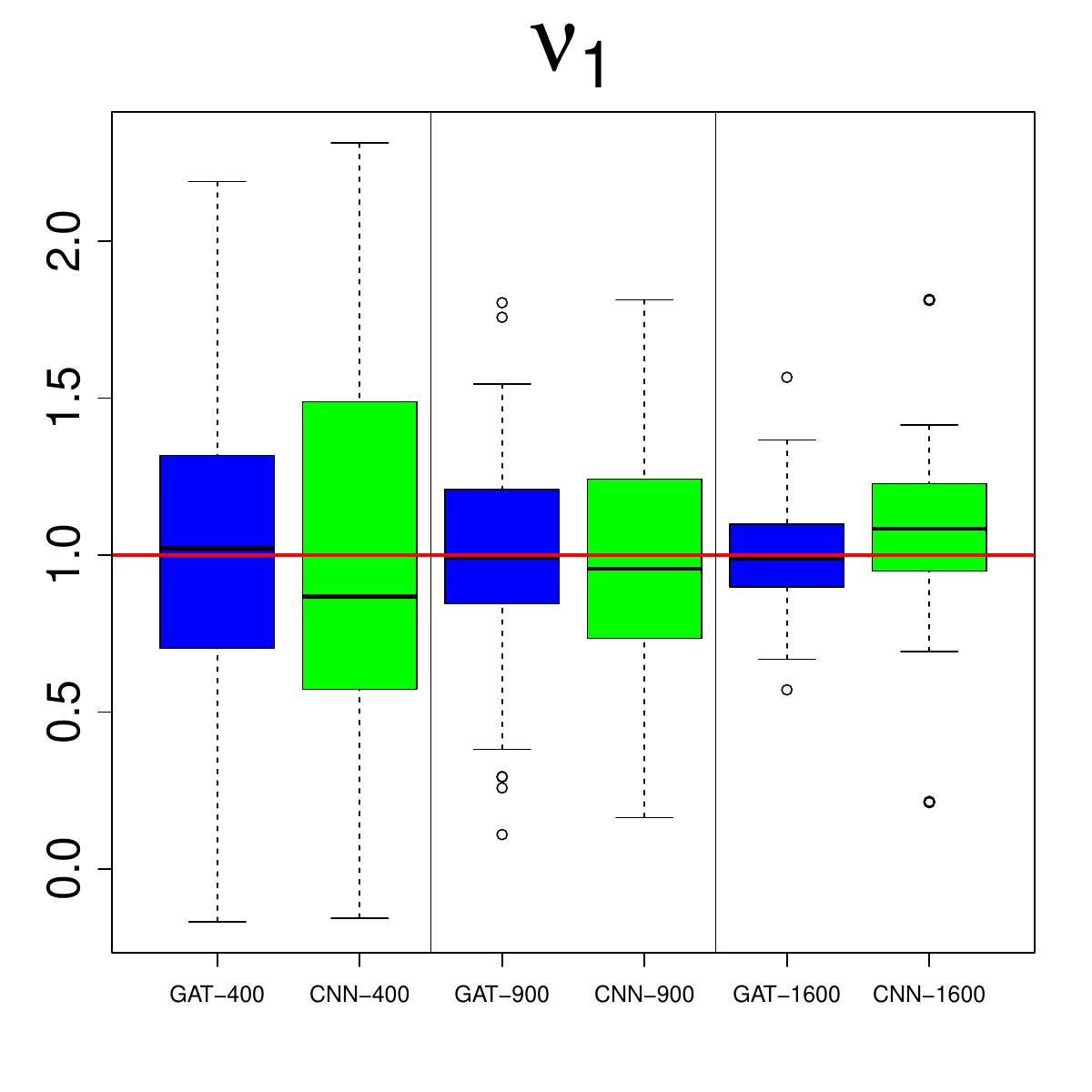}
\end{subfigure}
\begin{subfigure}{0.32\textwidth}
    \centering
    \includegraphics[width = 1\textwidth,]{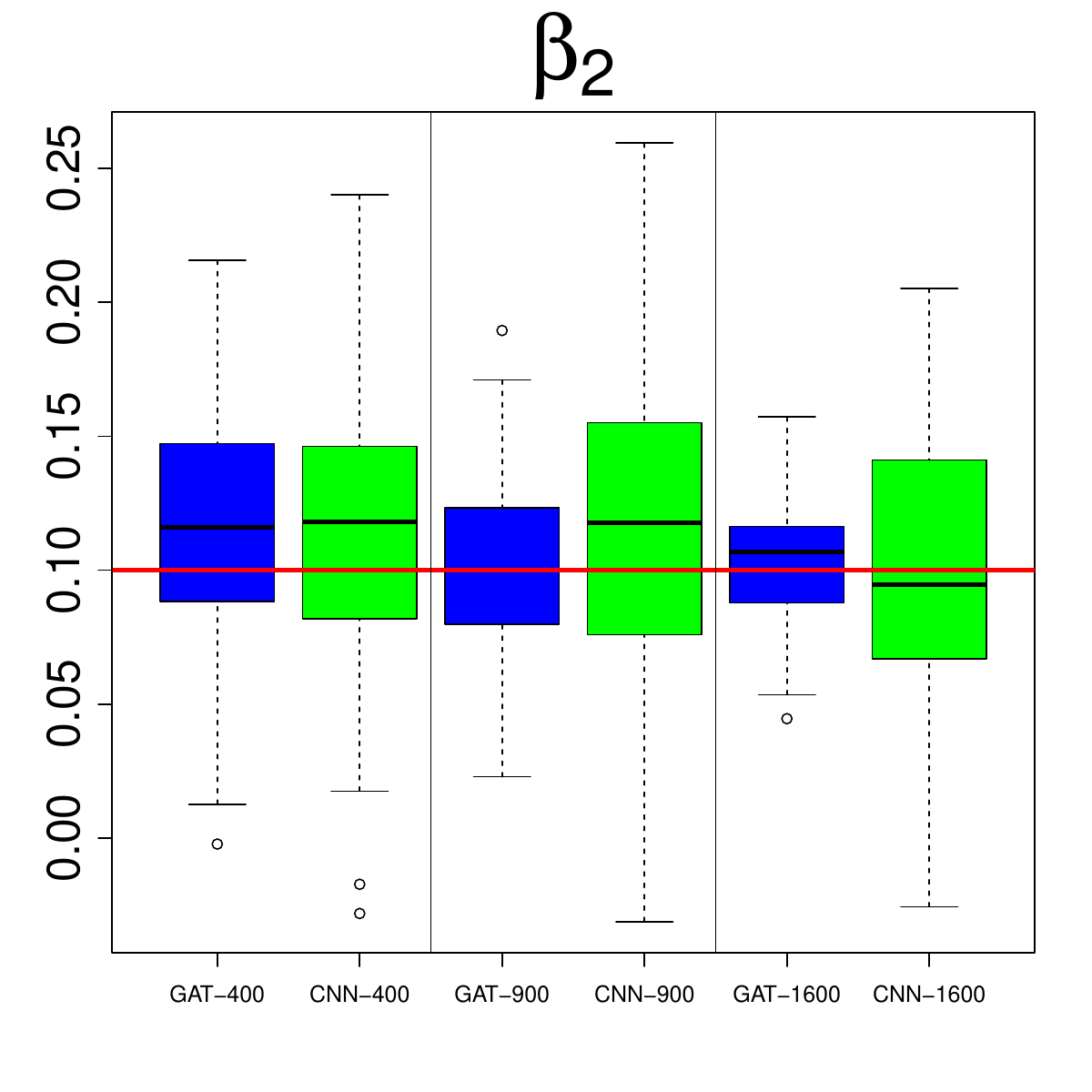}
\end{subfigure}
\begin{subfigure}{0.32\textwidth}
    \centering
    \includegraphics[width = 1\textwidth,]{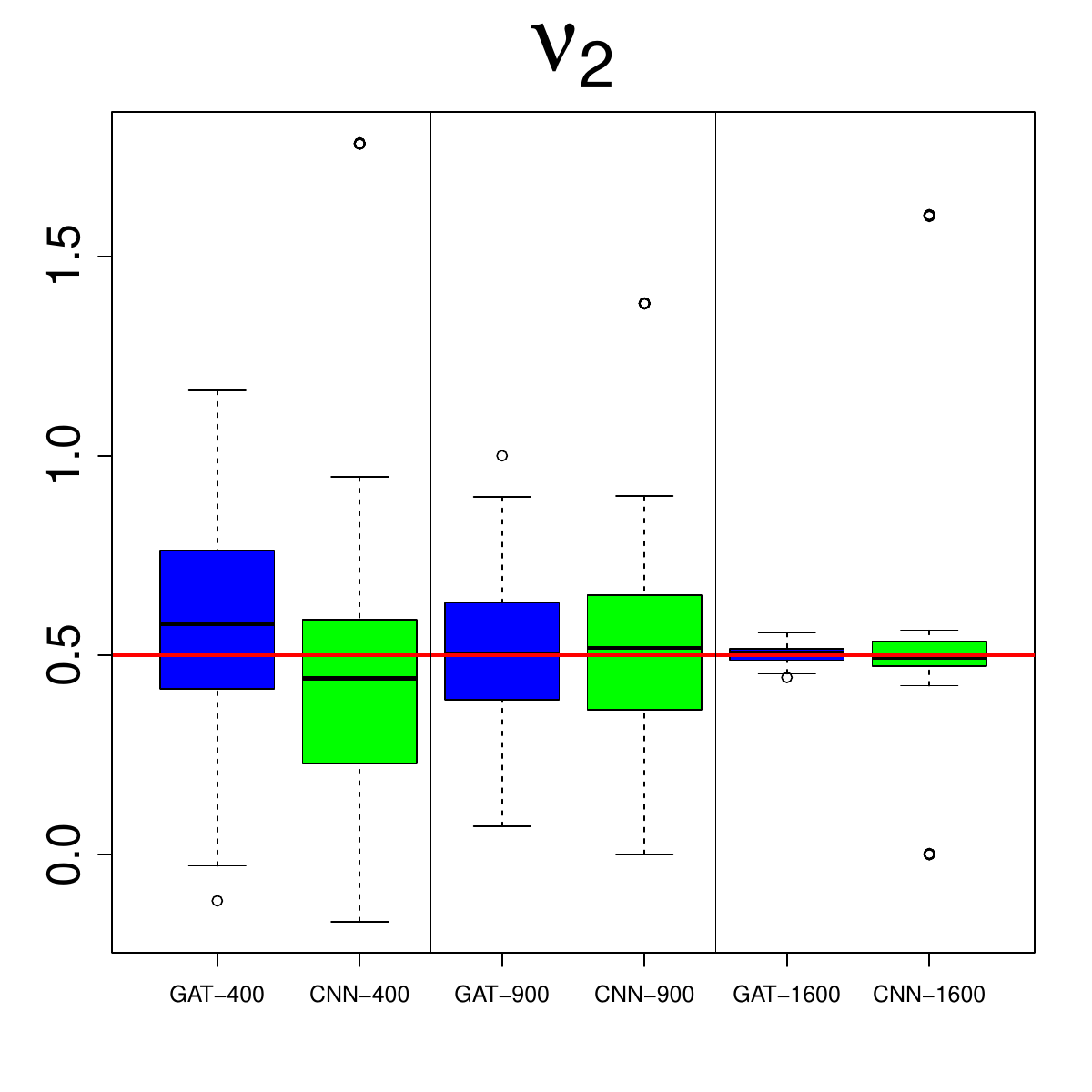}
\end{subfigure}
\begin{subfigure}{0.32\textwidth}
    \centering
    \includegraphics[width = 1\textwidth,]{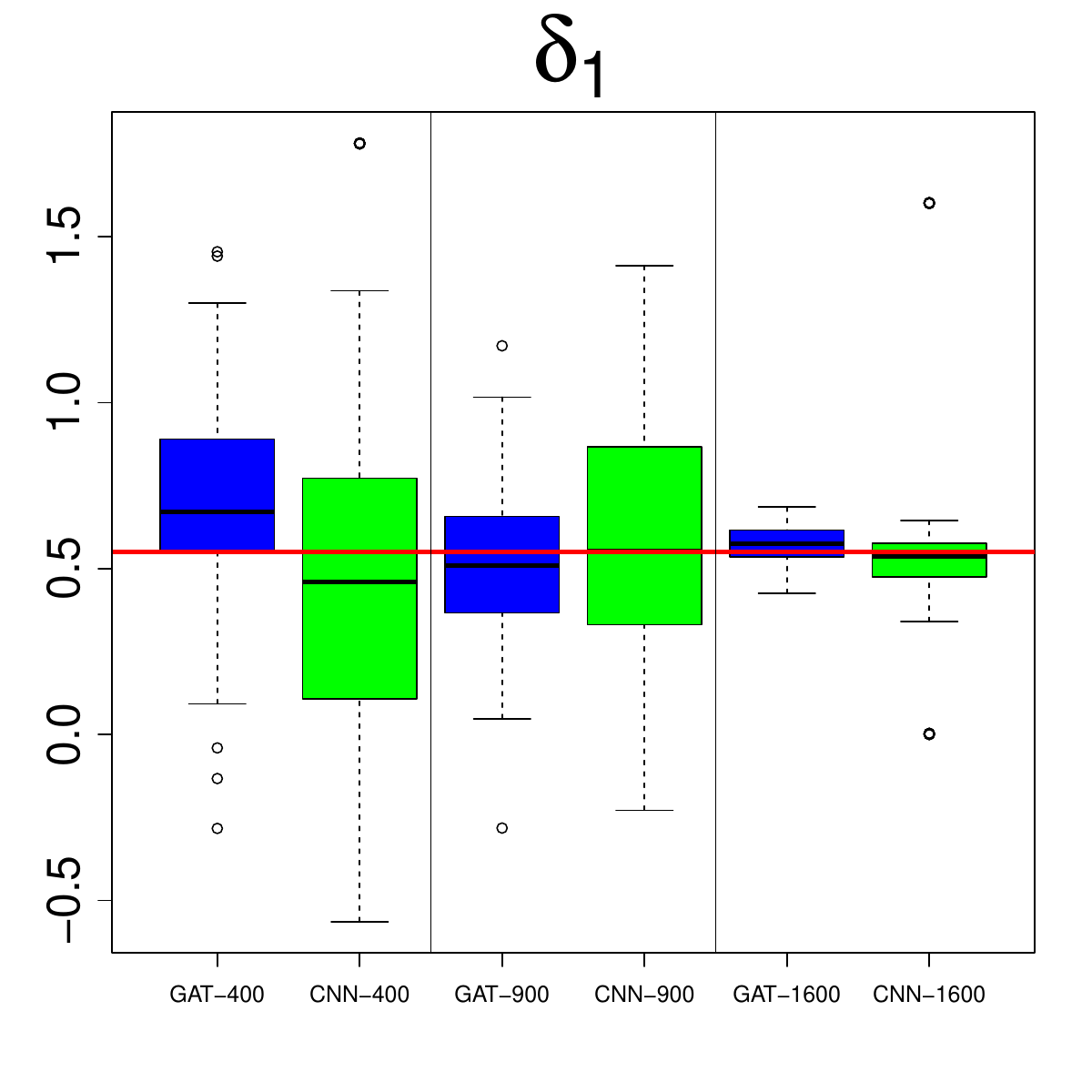}
\end{subfigure}
\begin{subfigure}{0.32\textwidth}
    \centering
    \includegraphics[width = 1\textwidth,]{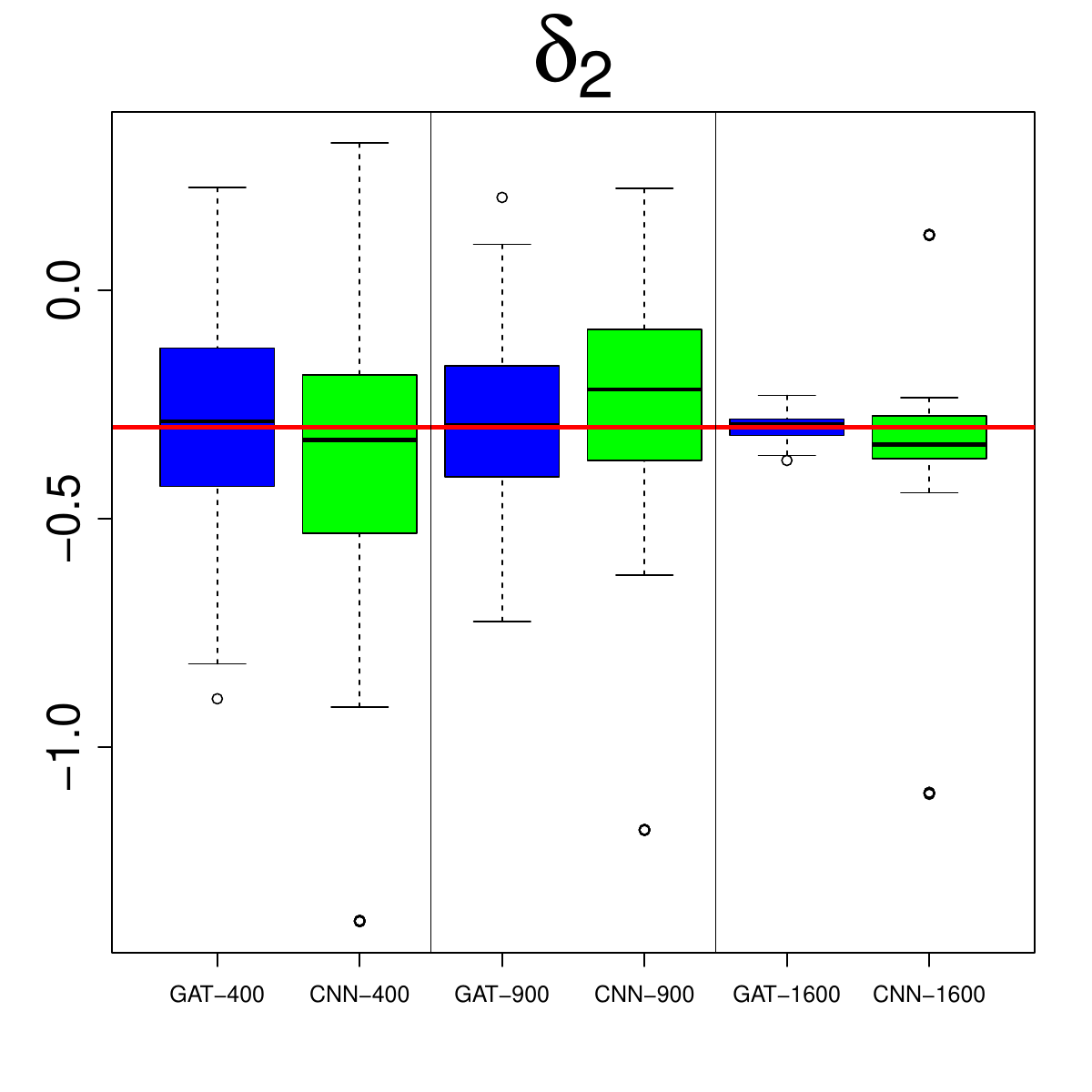}
\end{subfigure}
\begin{subfigure}{0.32\textwidth}
    \centering
    \includegraphics[width = 1\textwidth,]{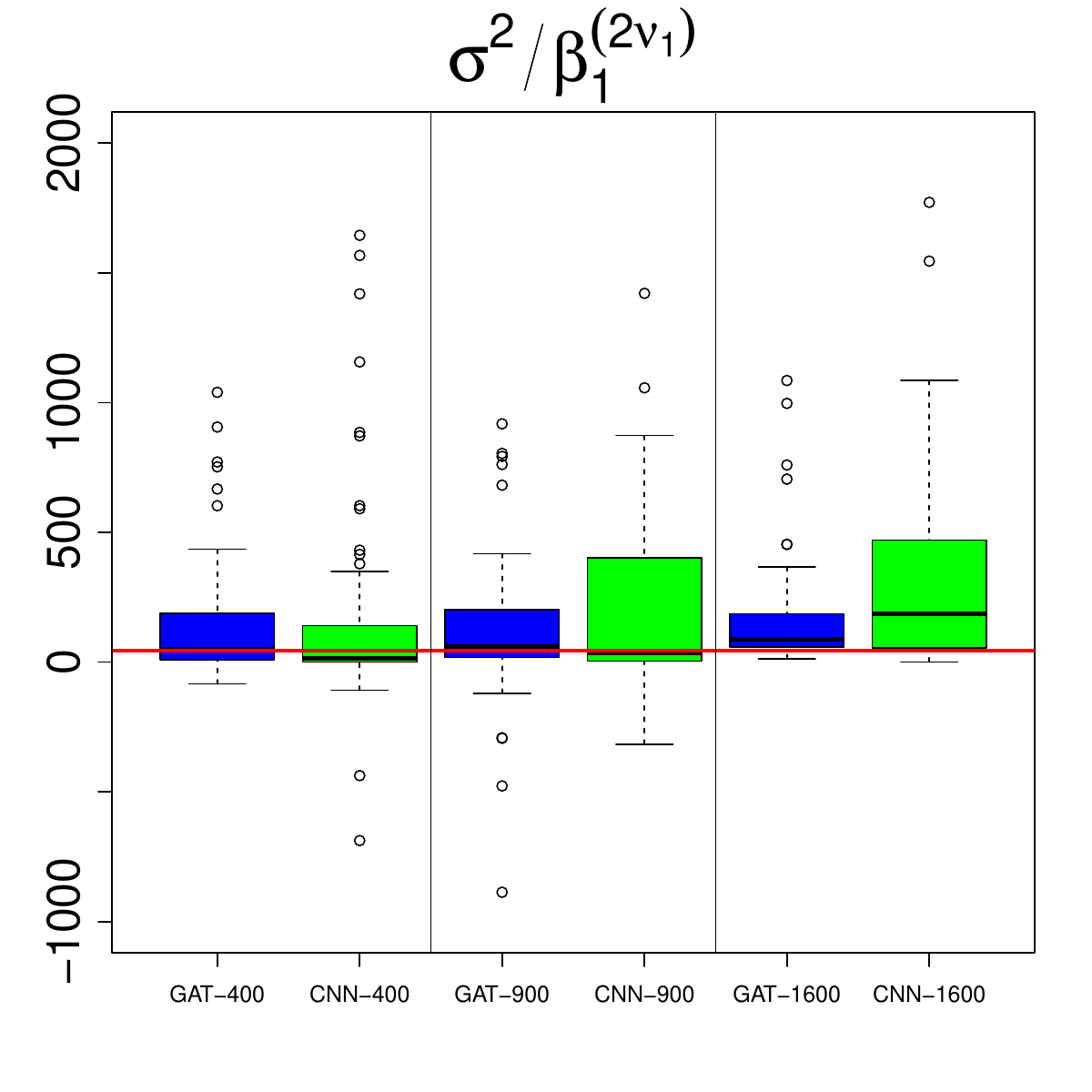}
\end{subfigure}
\begin{subfigure}{0.32\textwidth}
    \centering
    \includegraphics[width = 1\textwidth,]{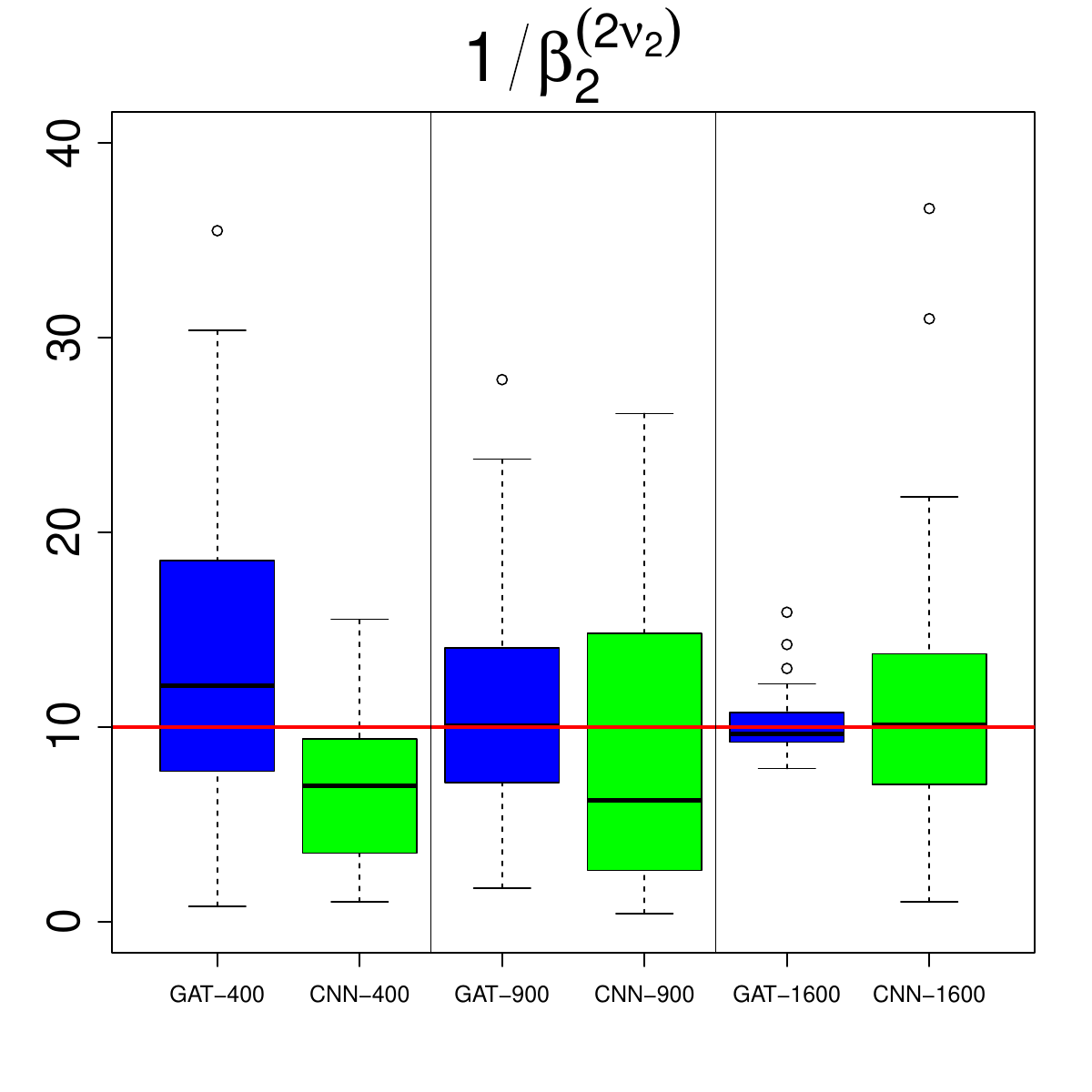}
\end{subfigure}
\caption{The boxplots of the parameter estimates obtained from the proposed neural Bayes estimator (marked in Blue and denoted as GAT-$n$) and the CNN-based Bayes estimator (marked in Green and denoted as CNN-$n$) for $N = 500$ replicates of $n = 400,900, 1600$ realizations of the GSUN process in $[0,1]^2$ simulated from $\bTheta = (1, 0.15, 1, 0.1, 0.5, 0.55, -0.3)^\top$. The red lines denote the true values.}
\label{inference}
\end{figure}

To train the neural Bayes estimator, we simulate $\sigma^2 \sim {\cal U}(0.3,3)$, $\beta_1 \sim {\cal U}(0.01,1)$, $\nu_1 \sim {\cal U}(0.3,2)$, $\beta_2 \sim {\cal U}(0.01,1)$, $\nu_2 \sim {\cal U}(0.3,2)$, $\delta_1 \sim {\cal U}(-3,3)$, and $\delta_2 \sim {\cal U}(-3,3)$, where ${\cal U}(l,u)$ denotes a uniform distribution on the interval $(l,u)$. We choose these intervals to maintain a reasonable coverage of the ranges of the true values while avoiding exploding the number of samples needed from simulation to train the network. 

When $\delta_1 = 0$ and $\delta_2 = 0$, the GSUN spatial model reduces to the conventional Gaussian process. As a result, the range ($\beta_2$) and smoothness ($\nu_2$) parameters for the latent truncated process become non-estimable. Therefore, we remove them when calculating $L(\bTheta^n,\hat\bTheta_{(\bz^n,\bbeta)})$, which is chosen as the Mean Square Error (MSE) loss. During the training process, we relax the condition for such situations to avoid weakly estimable cases, low-quality data that can potentially pollute the neural network, by removing $\beta_2$ and $\nu_2$ from the loss when $(\delta_1,\delta_2) \in [-0.1,0.1]^2$.  

To obtain the inference results of the trained neural Bayes estimator, we simulate $N = 1$ replicate of $\bZ \sim {\cal SUN}_{n,n}(\0, \bSigma(\btheta_1),\bH,\0, \textbf{C}(\btheta_2))$ with $n =400,900,1600$ and $\bTheta = (1,0.15,1,0.1,0.5,0.55,\\ -0.3)^\top$ from randomly generated locations. It is important to note that, here, we set the sample size $n = 100$ only for the purpose of training the estimator more efficiently. During inference, one can easily restructure the data into multiple sets of $n = 100$ samples and format these datasets as replicates for input. By doing this, the estimator is not limited to a fixed sample size $n = 100$.

In Figure~\ref{inference}, we can see the empirical posterior distribution $\Pi(\hat\bTheta|\bZ,\bbeta)$ for both Bayes estimators are centered around specific values close to the true values and become more concentrated as sample size increases.

In addition, the neural Bayes estimator proposed in this work outperforms the conventional CNN-based structure regarding accuracy and stability. Figure \ref{inference} illustrates that the empirical distributions of $\hat{\bTheta}$ computed using the proposed estimator has smaller IQRs across all parameters and the corresponding means are closer to the true values compared to those computed with the CNN-based estimator for most of the cases. Furthermore, $\sigma^2/\beta_1^{2\nu_1}$ and $1/\beta_2^{2\nu_2}$, computed using the proposed estimator, better comply with the theory of consistent estimation for in-domain asymptotics proposed in \cite{zhang2004inconsistent}. Such advantages can be attributed to the more expressive graphical representation of spatial data and the well-crafted architectures that allow for a more profound understanding
of the dependence structure and a stronger modeling capacity.
\subsection{Uncertainty Quantification}
In this section, we explore the uncertainty quantification method for our proposed neural Bayes estimator. In \cite{sainsbury2023neural}, the authors have mentioned that one way to quantify the uncertainty of the neural Bayes estimator is to train another neural Bayes estimator
\begin{algorithm}[b!]
\caption{Uncertainty Quantification of Neural Bayes Estimator}\label{alg:uc}
\textbf{Require}: Trained neural Bayes estimator $\hat\bbeta$, graphical spatial data $({\cal G},\textbf{D})$, number of bootstrap samples $j$, number of replicates $k$\\
\textbf{Procedure}:\\
\hspace*{5mm} (1) Take $j$ bootstrap samples on $({\cal G},\textbf{D})$ \\
\hspace*{5mm} (2) Input $({\cal G},\textbf{D})_i, \forall i = 1,\dots, j$ into $\hat\bbeta$ to obtain $\hat{\bTheta}^i$ \\
\hspace*{5mm} (3) Compute the average of $\hat{\bTheta}^i$ for $i = 1,\dots,j$ and denote it as $\bar\bTheta$ \\
\hspace*{5mm} (4) Simulate $k$ replicates of $\Tilde{\bZ} \sim f(\Tilde{\bZ}|\bar{\bTheta})$ and denote it as $\Tilde\bz^k$ \\
\hspace*{5mm} (5) Input $\Tilde \bz^k$ independently into $\bbeta$ to obtain $\hat{\bTheta}_{(\Tilde{\bz},\bbeta)}^k$ \\
\hspace*{5mm} (6)  Compute the empirical quantiles using $\hat{\bTheta}_{(\Tilde{\bz},\bbeta)}^k$ 
\end{algorithm}
that can learn various quantiles of the estimates based on the given sample realizations. Nonetheless, such method requires training numerous independent networks because it is difficult for a particular network to learn to output all quantiles (continuous) at once. Therefore, in this work, we present a simulation-based approach, which can accurately and efficiently approximate the quantiles and originate { from a similar idea to the bootstrap approach proposed in \cite{sainsbury2024likelihood}.} A detailed description is illustrated in Algorithm \ref{alg:uc}.

\begin{figure}[b!]
\centering
\begin{subfigure}{0.32\textwidth}
  \centering
  \includegraphics[width=1\textwidth,]{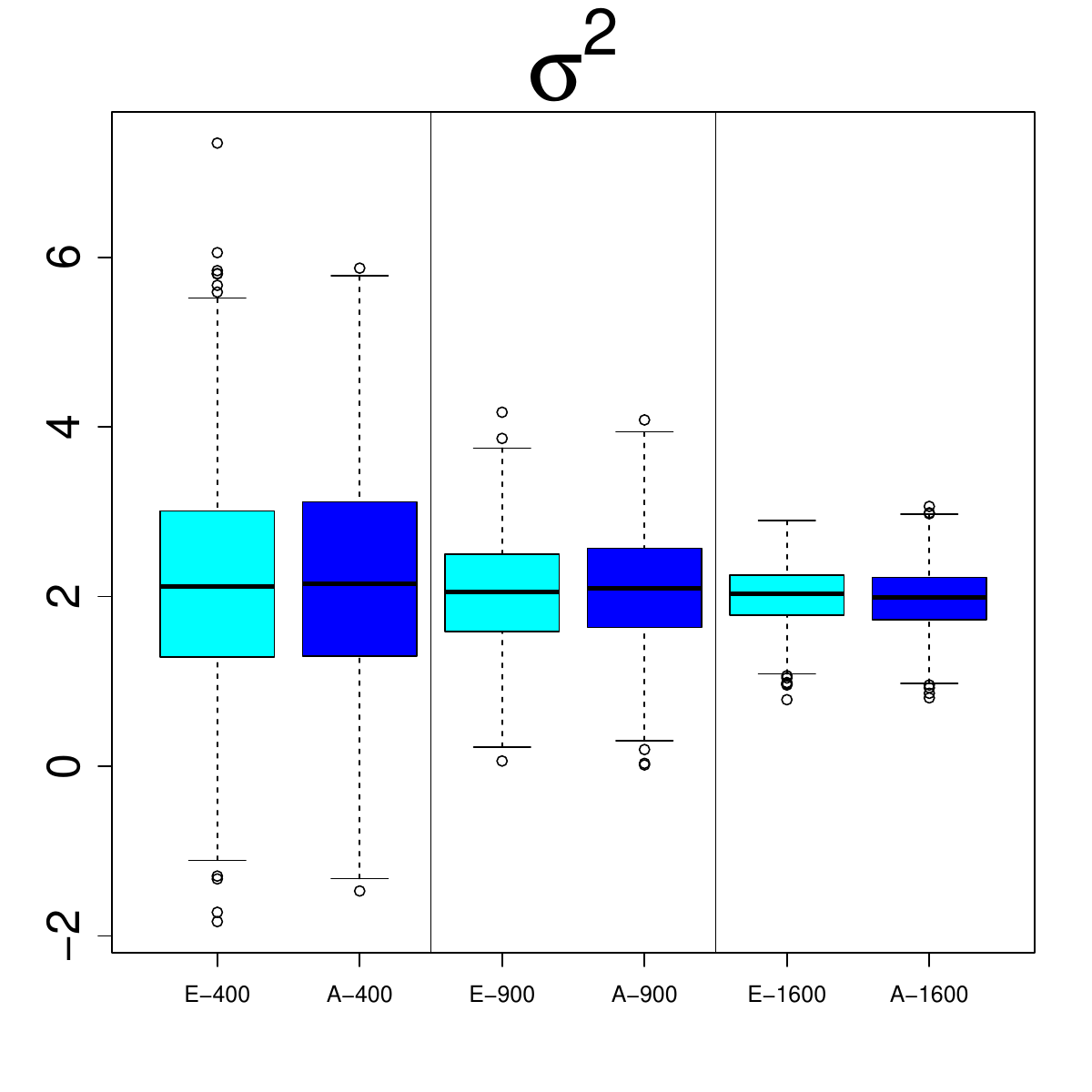}
\end{subfigure}
\begin{subfigure}{0.32\textwidth}
  \centering
  \includegraphics[width=1\textwidth,]{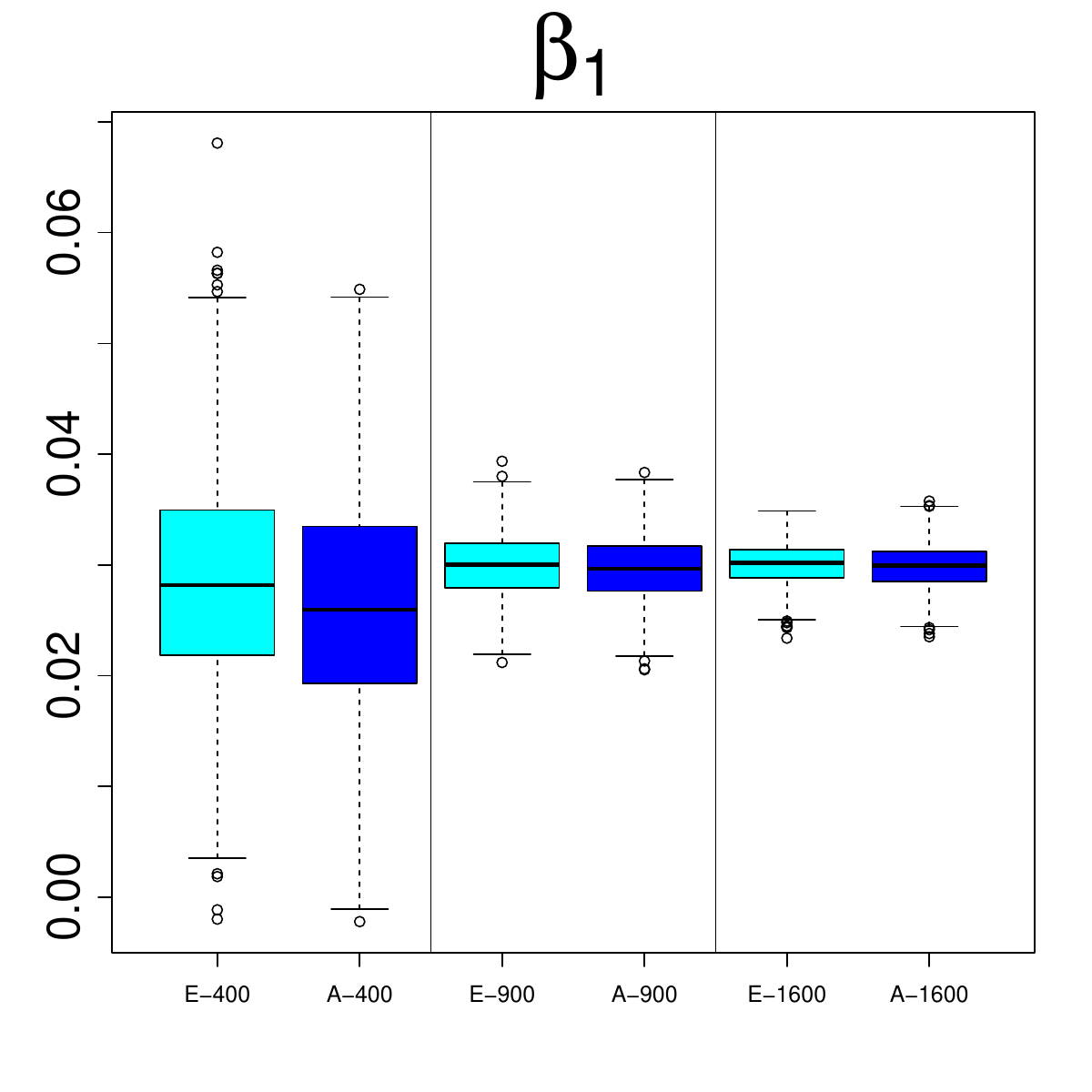}
\end{subfigure}
\begin{subfigure}{0.32\textwidth}
    \centering
    \includegraphics[width = 1\textwidth,]{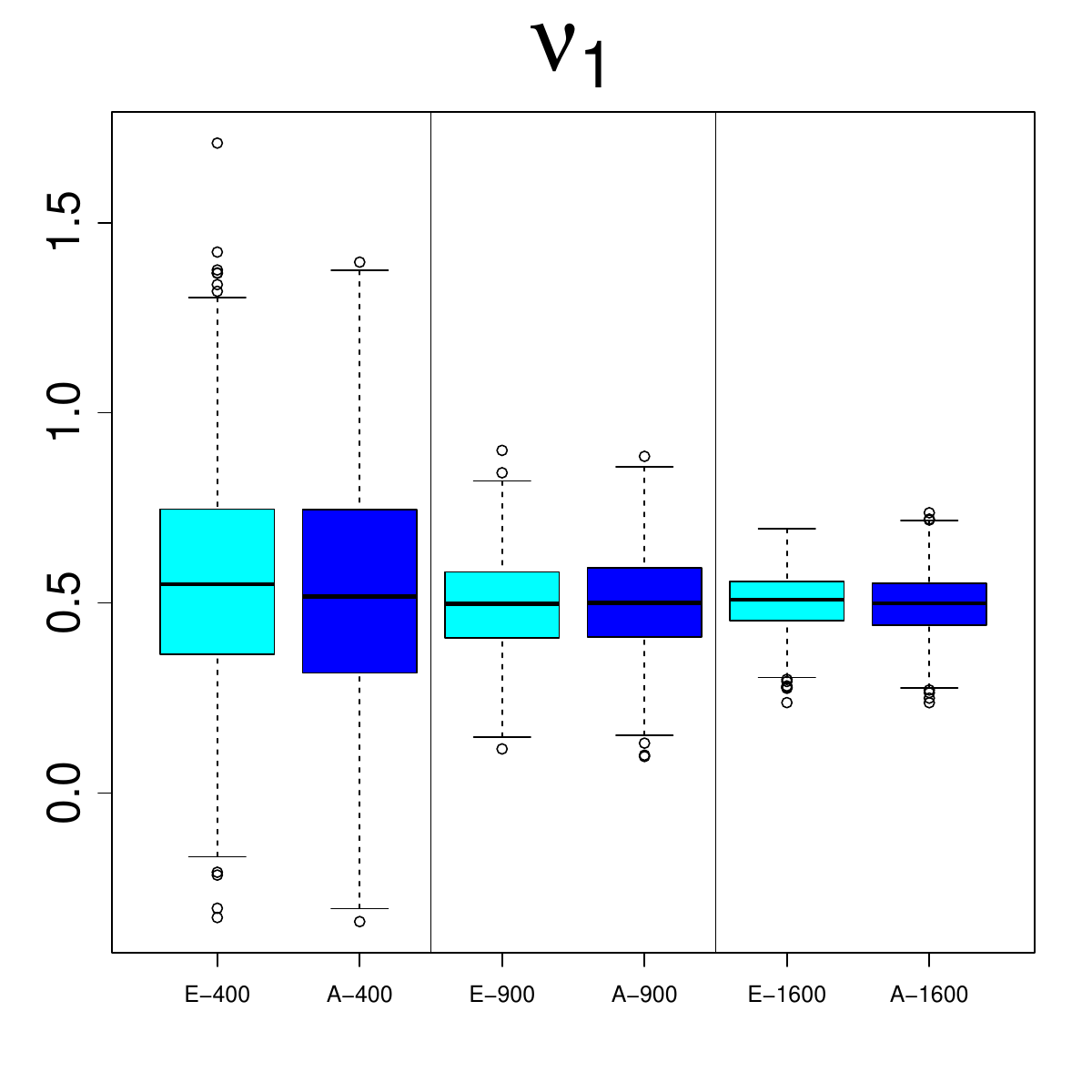}
\end{subfigure}
\begin{subfigure}{0.32\textwidth}
    \centering
    \includegraphics[width = 1\textwidth,]{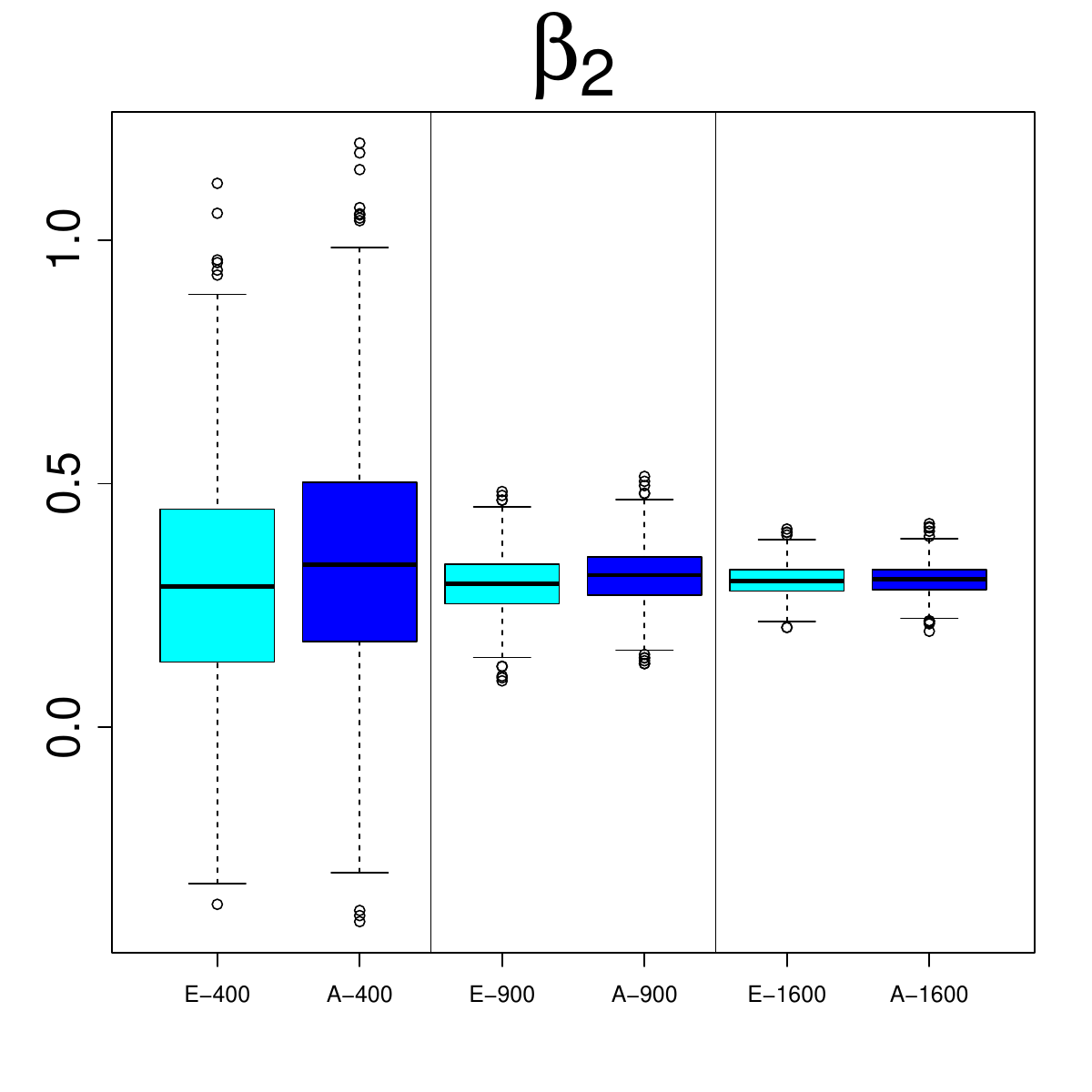}
\end{subfigure}
\begin{subfigure}{0.32\textwidth}
    \centering
    \includegraphics[width = 1\textwidth,]{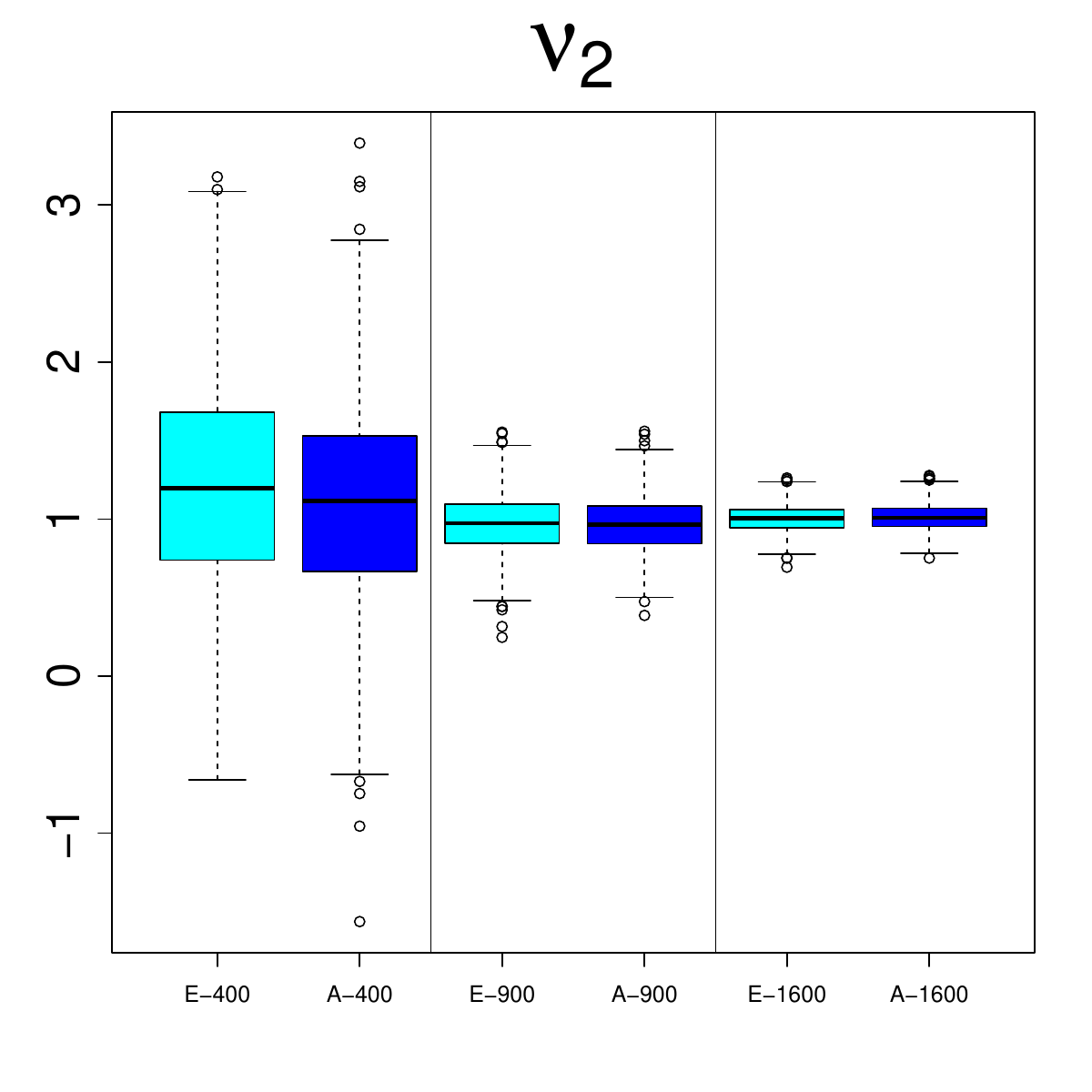}
\end{subfigure}
\begin{subfigure}{0.32\textwidth}
    \centering
    \includegraphics[width = 1\textwidth,]{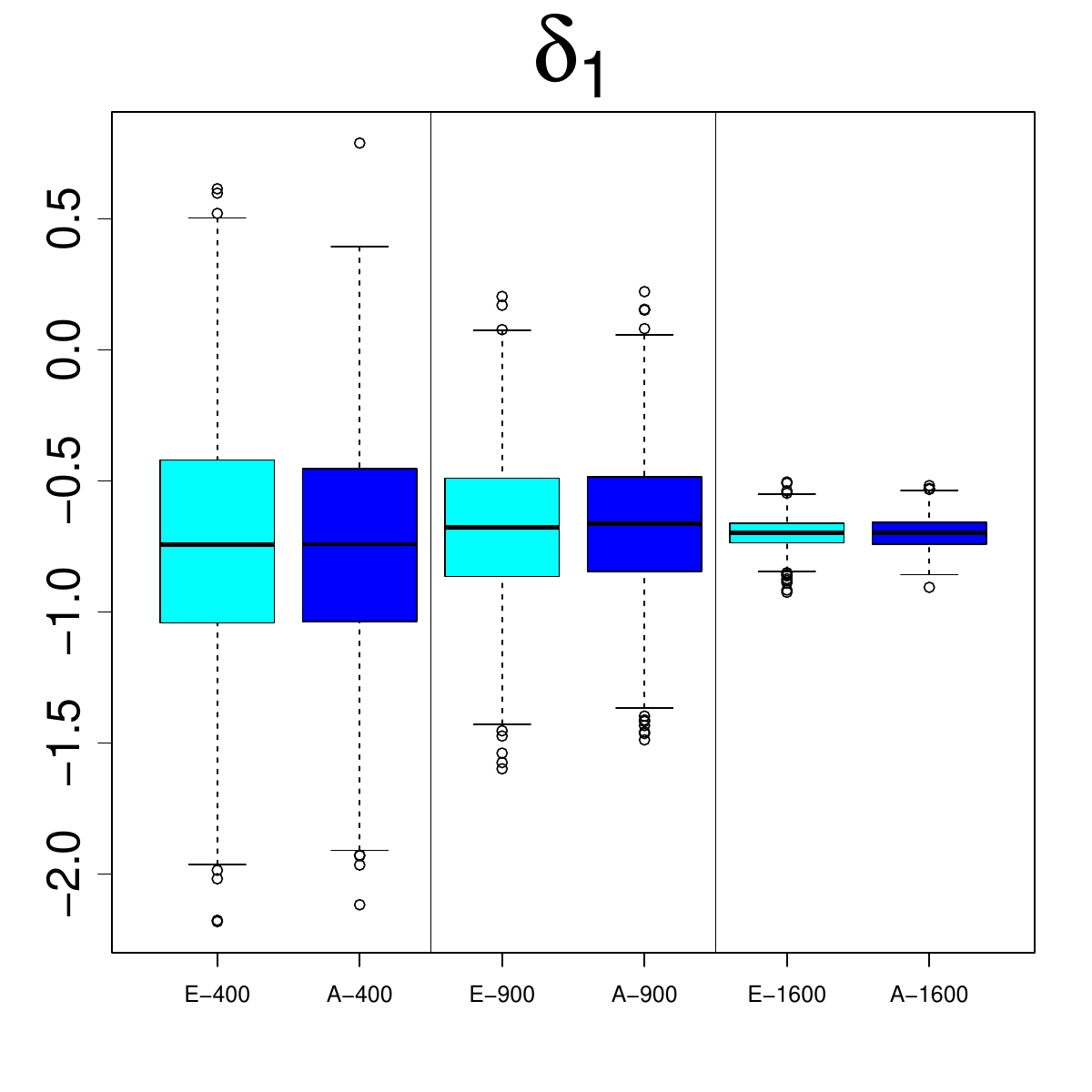}
\end{subfigure}
\begin{subfigure}{0.32\textwidth}
    \centering
    \includegraphics[width = 1\textwidth,]{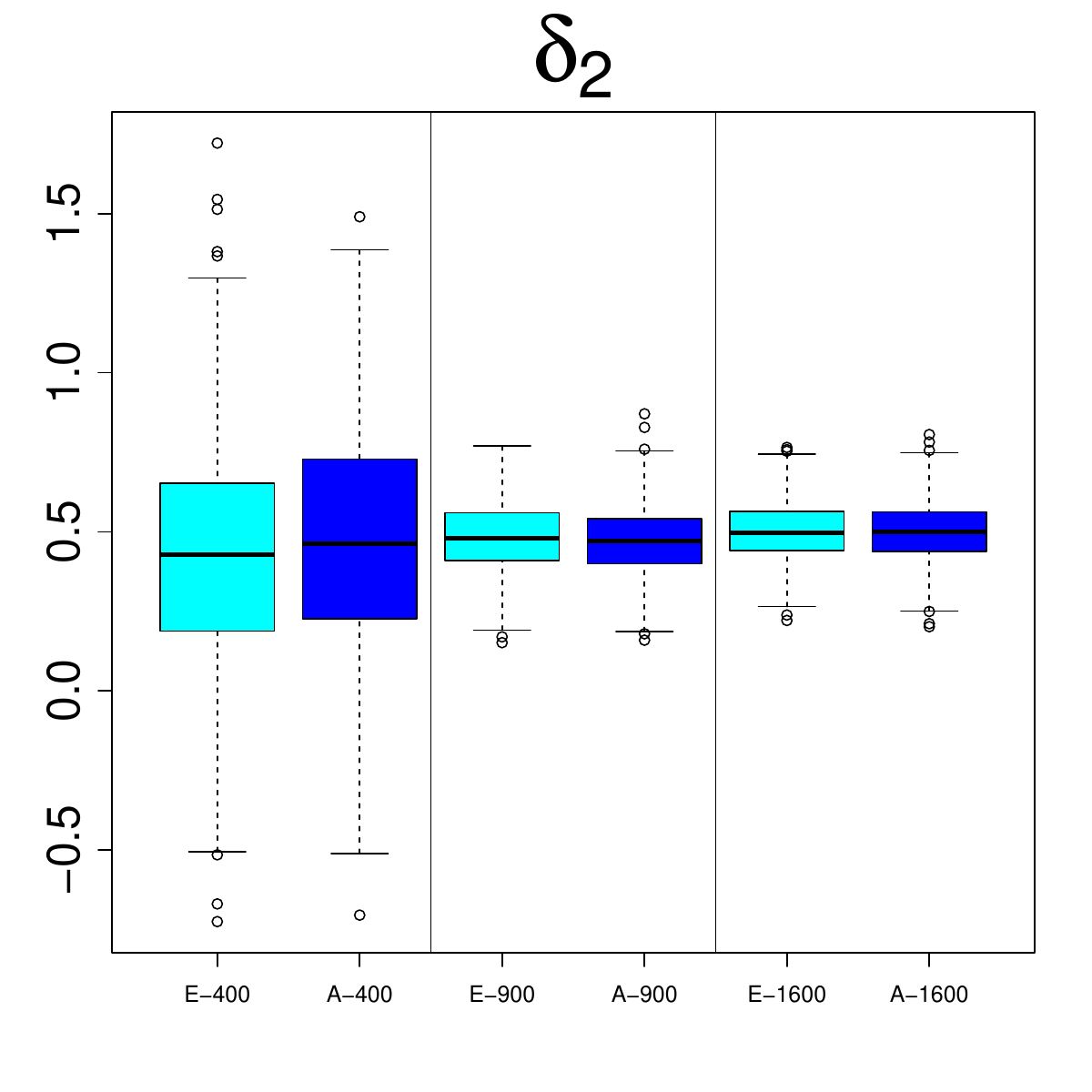}
\end{subfigure}
\begin{subfigure}{0.32\textwidth}
    \centering
    \includegraphics[width = 1\textwidth,]{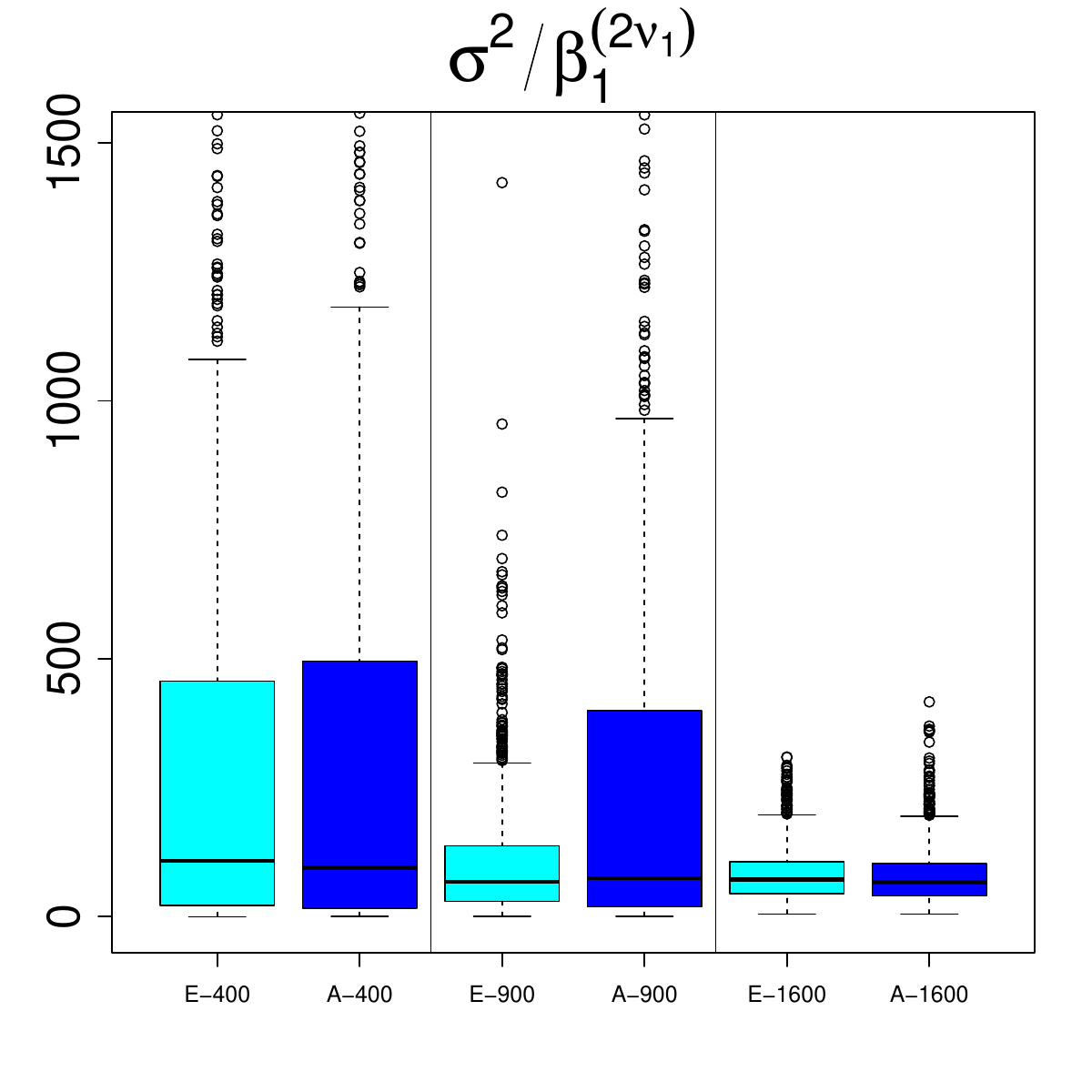}
\end{subfigure}
\begin{subfigure}{0.32\textwidth}
    \centering
    \includegraphics[width = 1\textwidth,]{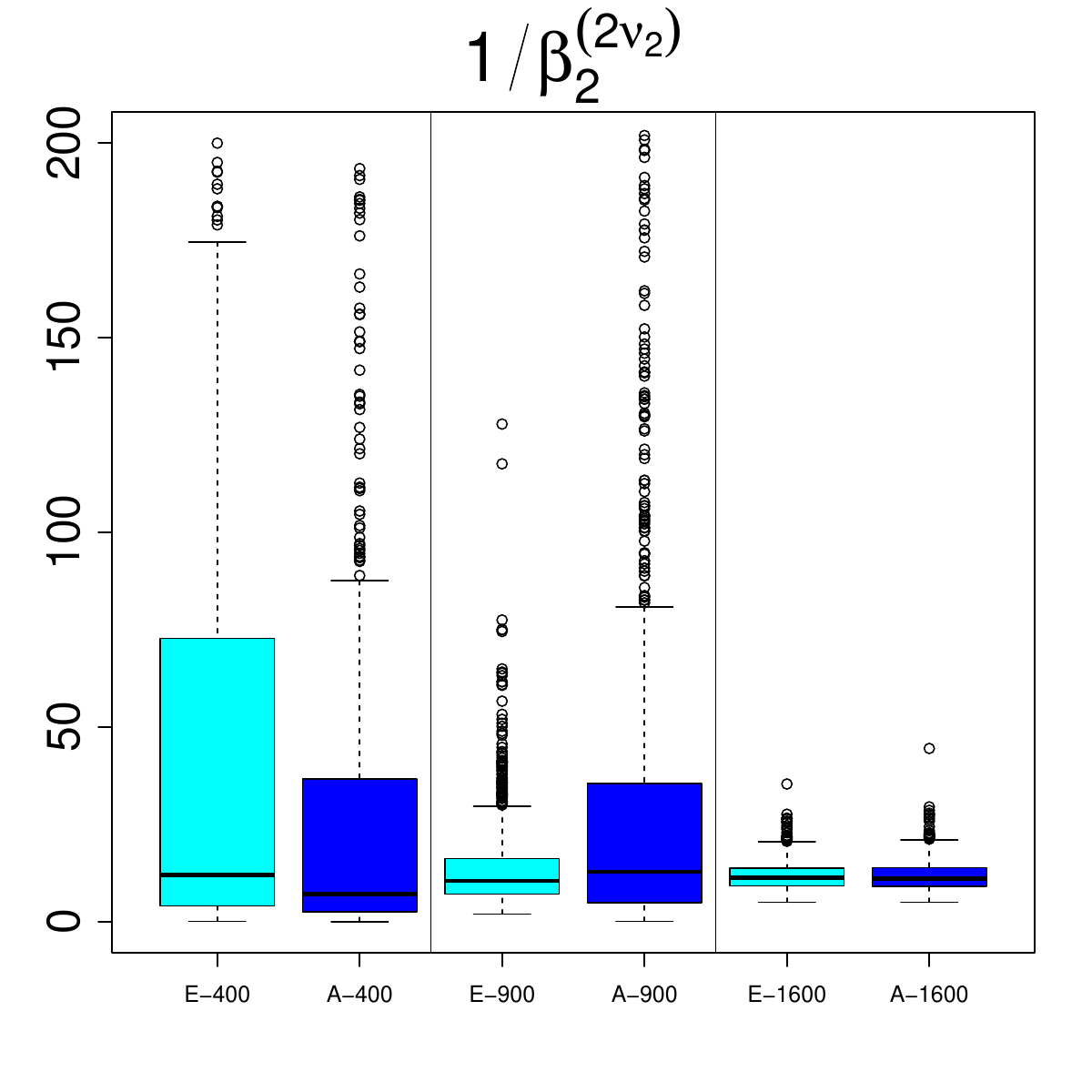}
\end{subfigure}
\caption{The empirical densities of $\hat{\bTheta}_{(\bz_n,\bbeta)}^k$ (marked in Cyan and denoted as E-$n$) and $\hat{\bTheta}_{(\Tilde{\bz}_n,\bbeta)}^k$ (marked in Blue and denoted as A-$n$) obtained from the proposed neural Bayes estimator with  $N = 500$ replicates of $n = 400,900,1600$ realizations of the GSUN process in $[0,1]^2$ simulated from $\bTheta = (2, 0.03, 0.5, 0.3, 1, -0.7, 0.5)^\top$ and Algorithm \ref{alg:uc}, respectively.}
\label{uc}
\end{figure}

To validate our proposed algorithm, we generate one replicate of $\bZ_n$ with $\bTheta = (2, 0.03, 0.5, 0.3, 1, \\  -0.7, 0.5)^\top$ with $n = 400,900,1600$ and, then, apply Algorithm \ref{alg:uc}. To demonstrate the accuracy of our uncertainty quantification method, we plot the empirical distributions of $\hat{\bTheta}_{(\Tilde{\bz}_n,\bbeta)}^k$ for $k = 500$ with a bootstrap sample size $j = 1000$, together with the empirical distributions of $\hat{\bTheta}_{(\bz_n,\bbeta)}^N$, where $N = 500$, { for coverage analysis}.

Figure \ref{uc} demonstrates that the empirical distributions of $\hat{\bTheta}_{(\Tilde{\bz}_n,\bbeta)}^k$ becomes increasingly similar to the empirical distributions of $\hat{\bTheta}_{(\bz_n,\bbeta)}^N$ as sample size $n$ increases. In particular, when $n =1,600$, the two empirical distributions almost precisely overlap for all parameters except for $(\sigma^2,\beta_1,\nu_1)$. As for $(\sigma^2,\beta_1,\nu_1)$ at $n = 1,600$, there are slightly visible deviations between the two empirical distributions but they still overlap across a major bulk. Hence, the empirical quantiles of $\hat{\bTheta}_{(\Tilde{\bz}_n,\bbeta)}^{k}$ can serve as an accurate estimate for the quantiles of $\hat{\bTheta}_{(\bz_n,\bbeta)}^N$. In addition, in our simulation study, $j$ does not play a significant role when $j > 500$. However, if $j < 500$, the empirical distributions of $\hat{\bTheta}_{(\Tilde{\bz}_n,\bbeta)}^k$ can considerably deviate from the empirical distributions of $\hat{\bTheta}_{(\bz_n,\bbeta)}^N$ due to potential biases. Hence, it would be safer to set $j$ to be a large number. 

\subsection{Comparative Study with Gaussian and Tukey \textit{g}-and-\textit{h} Processes}
In this section, we conduct a comparative study on the GSUN random field in contrast to the Gaussian and Tukey $g$-and-$h$ random fields \citep{xu2017tukey} regarding the probability integral transformations (PIT) to demonstrate its practicality.

Gaussian random fields are commonly used for modeling spatial data and dependence structures \citep{haran2011gaussian}. They rely on a trend structure and a valid covariance function, such as the Matérn covariance family, where various parameterizations are discussed in \cite{wang2023parameterization}. Their practicality has been repeatedly verified in real-life applications. In this regard, it is consequential to note that the Gaussianity assumption has intrinsic limitations in terms of symmetry and tail weight, which may not hold for many realistic datasets that exhibit skewness and heavy-tail properties, such as precipitation \citep{mondal2023tile}.

To address these concerns, \cite{xu2017tukey} proposed Tukey $g$-and-$h$ random fields, defined as follows:
$$
Y(\mathbf{s}) = \xi + \mathbf{X}(\mathbf{s})^\top\boldsymbol{\beta} + \omega T(\mathbf{s}),
$$
where $\xi \in \mathbb{R}$ is the shift parameter, $\omega > 0$ is a scale parameter, and $\mathbf{X}(\mathbf{s}),\boldsymbol{\beta} \in \mathbb{R}^p$ are the covariate vector observed at location $\mathbf{s}$ and its regression coefficient. Here, $T(\mathbf{s}) = \tau_{g,h}\{Z(\mathbf{s})\}$ and $\tau_{g,h}(z) = g^{-1}(\text{exp}(gz) -  1)\text{exp}(hz^2/2)$ is the Tukey's $g$-and-$h$ transformation introduced in 
{\setlength{\parindent}{0cm}
\begin{figure}[H]
\centering
\begin{subfigure}{0.25\textwidth}
  \centering
  \includegraphics[width=1\textwidth,]{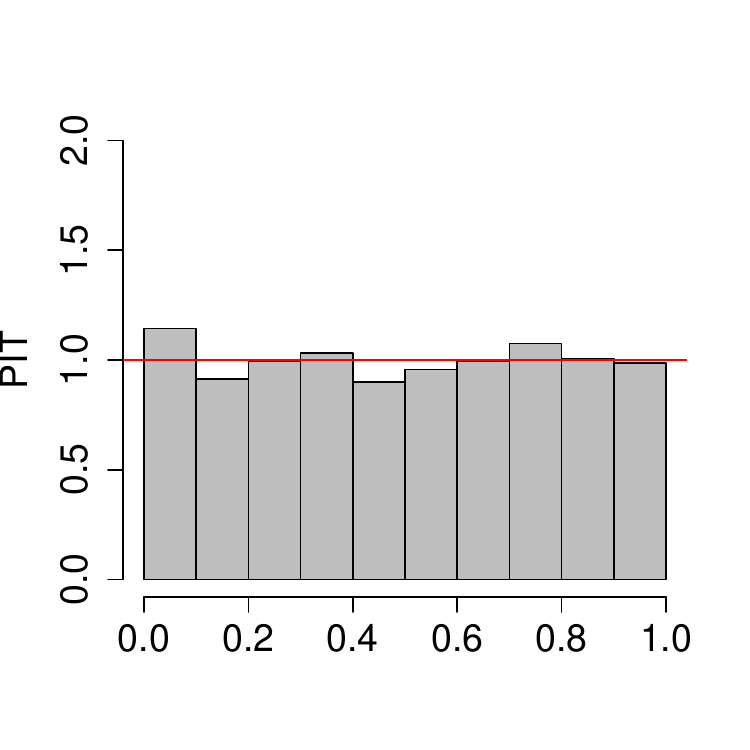}
  \caption{GSUN - GSUN.}
  \label{pit_1}
\end{subfigure}
\begin{subfigure}{0.25\textwidth}
  \centering
  \includegraphics[width=1\textwidth,]{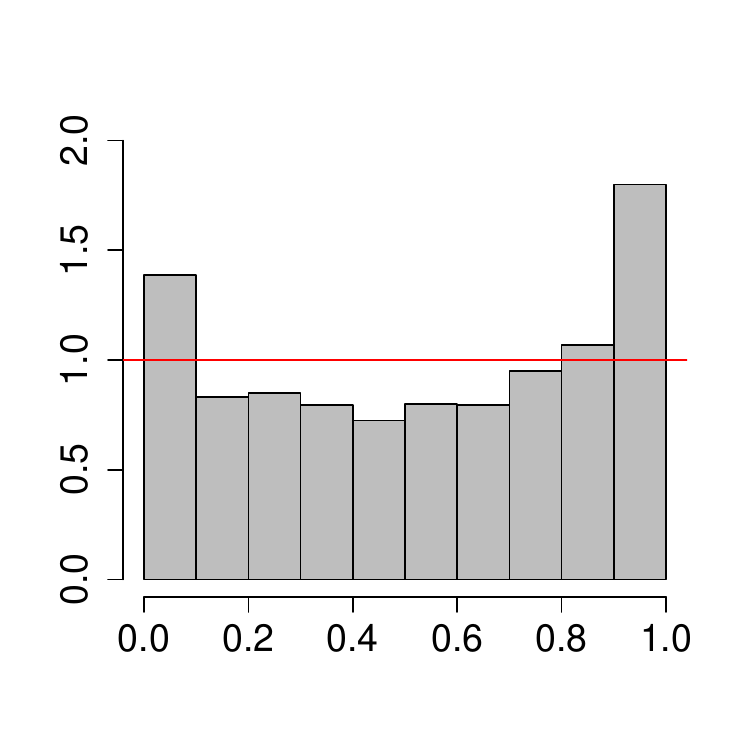}
  \caption{Gaussian - GSUN.}
  \label{pit_2}
\end{subfigure}
\begin{subfigure}{0.25\textwidth}
    \centering
    \includegraphics[width = 1\textwidth,]{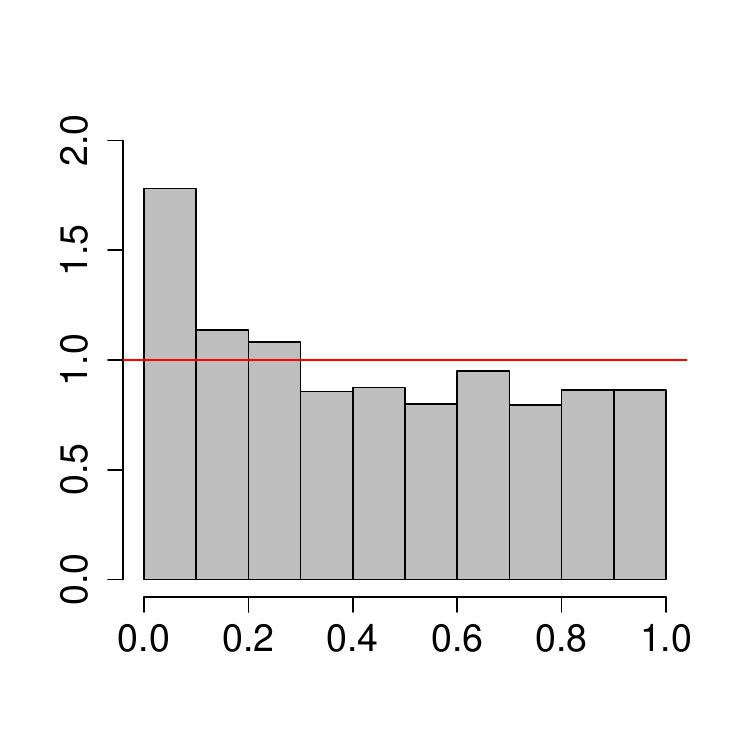}
    \caption{TGH - GSUN.}
    \label{pit_3}
\end{subfigure}
\begin{subfigure}{0.25\textwidth}
    \centering
    \includegraphics[width = 1\textwidth,]{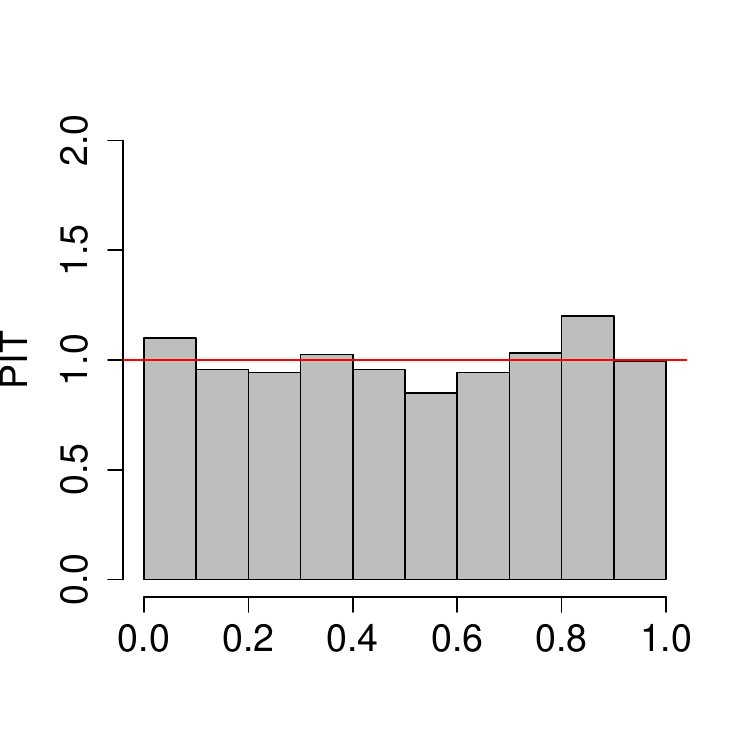}
    \caption{GSUN - Gaussian.}
    \label{pit_4}
\end{subfigure}
\begin{subfigure}{0.25\textwidth}
    \centering
    \includegraphics[width = 1\textwidth,]{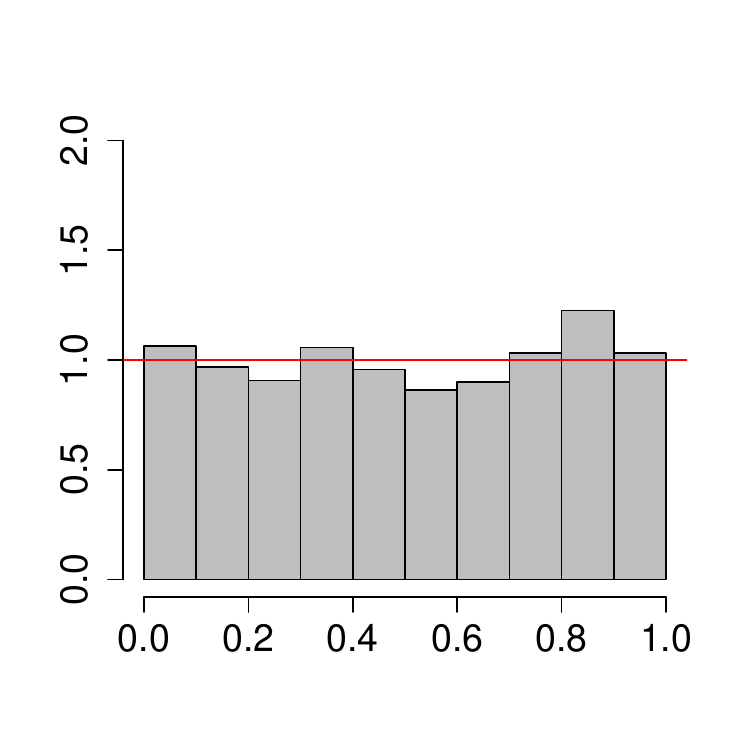}
    \caption{Gaussian - Gaussian.}
    \label{pit_5}
\end{subfigure}
\begin{subfigure}{0.25\textwidth}
    \centering
    \includegraphics[width = 1\textwidth,]{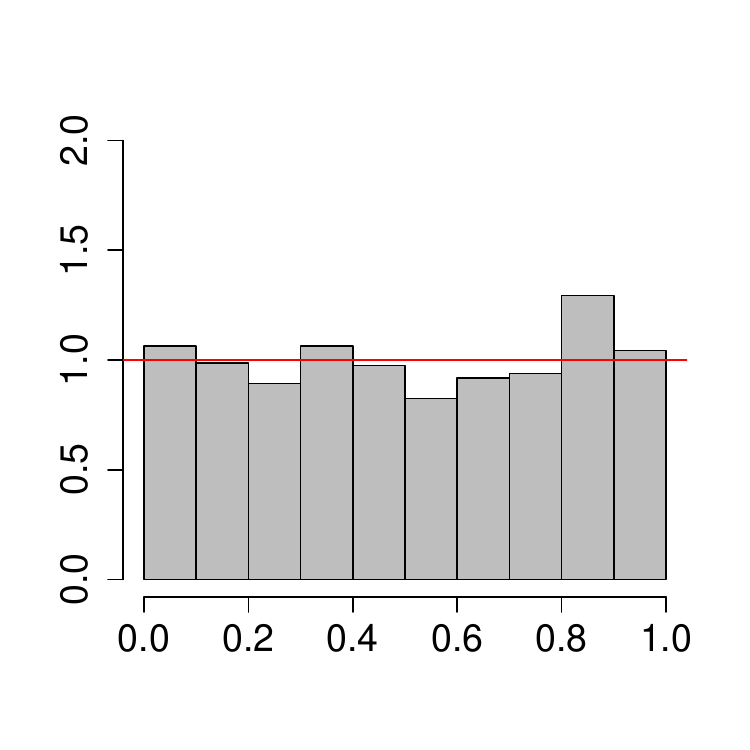}
    \caption{TGH - Gaussian.}
    \label{pit_6}
\end{subfigure}
\begin{subfigure}{0.25\textwidth}
    \centering
    \includegraphics[width = 1\textwidth,]{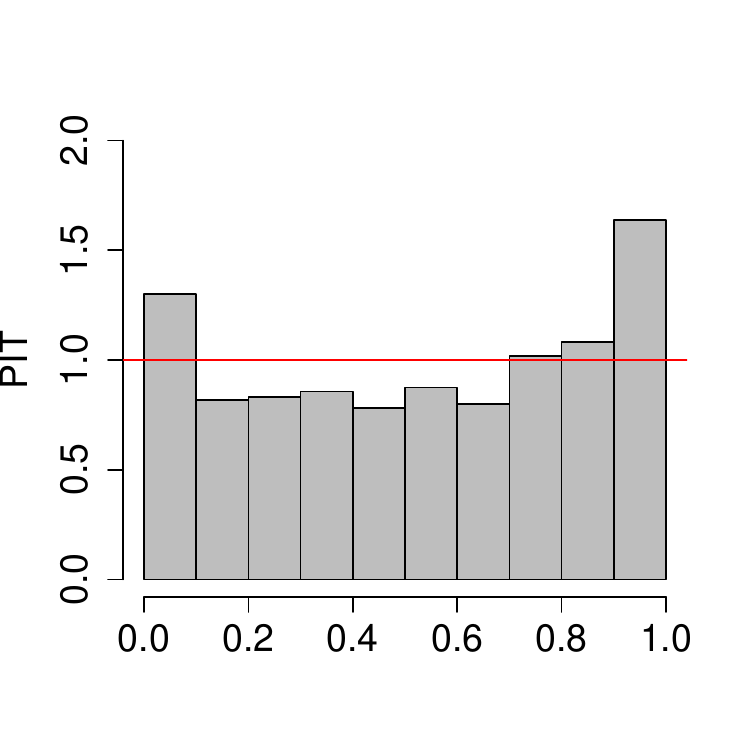}
    \caption{GSUN - TG.}
    \label{pit_7}
\end{subfigure}
\begin{subfigure}{0.25\textwidth}
    \centering
    \includegraphics[width = 1\textwidth,]{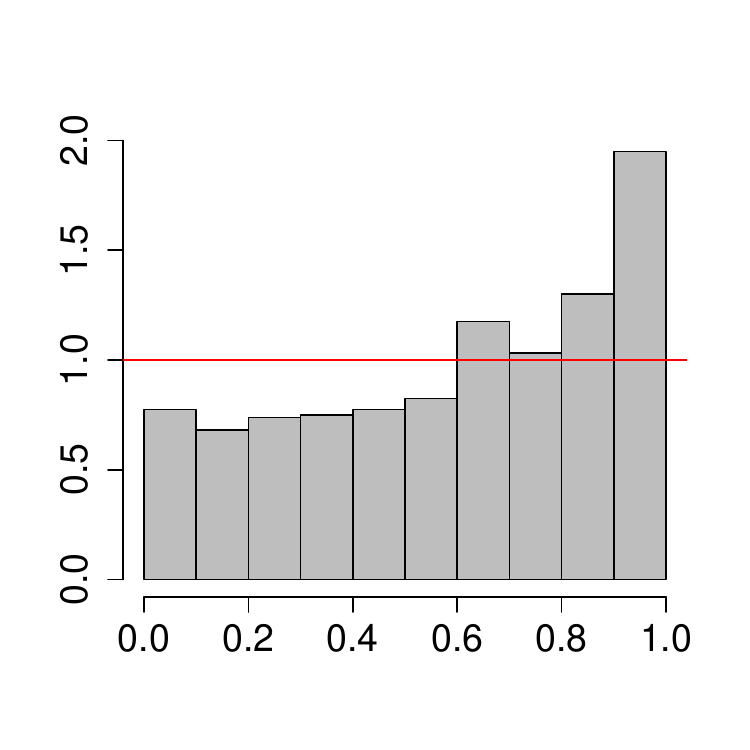}
    \caption{Gaussian - TG.}
    \label{pit_8}
\end{subfigure}
\begin{subfigure}{0.25\textwidth}
    \centering
    \includegraphics[width = 1\textwidth,]{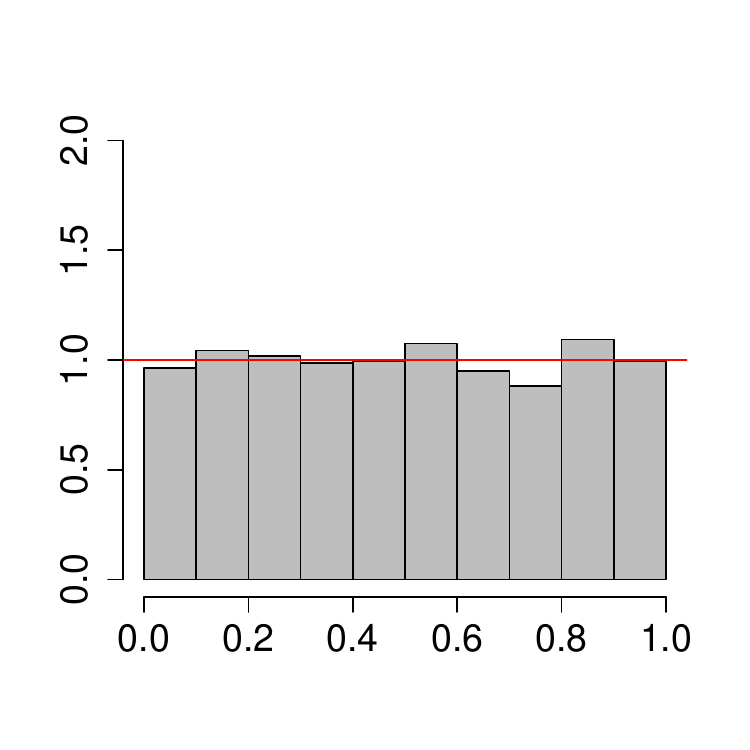}
    \caption{TG - TG.}
    \label{pit_9}
\end{subfigure}
\begin{subfigure}{0.25\textwidth}
    \centering
    \includegraphics[width = 1\textwidth,]{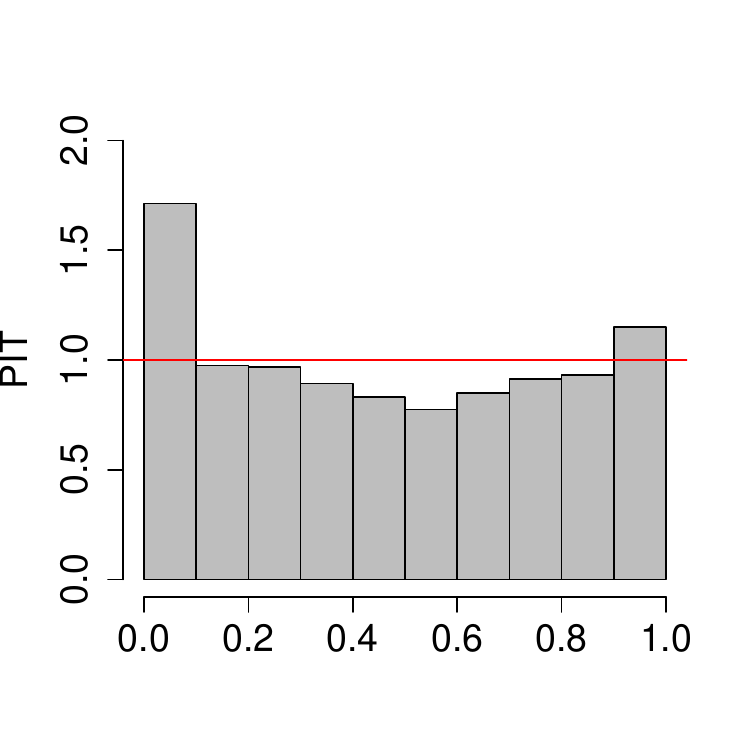}
    \caption{GSUN - TGH.}
    \label{pit_10}
\end{subfigure}
\begin{subfigure}{0.25\textwidth}
    \centering
    \includegraphics[width = 1\textwidth,]{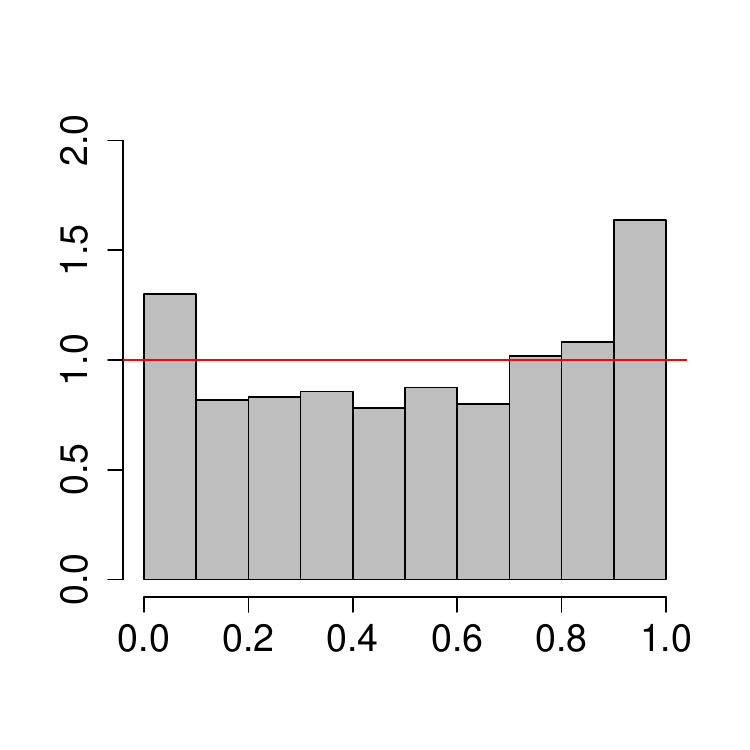}
    \caption{Gaussian - TGH.}
    \label{pit_11}
\end{subfigure}
\begin{subfigure}{0.25\textwidth}
    \centering
    \includegraphics[width = 1\textwidth,]{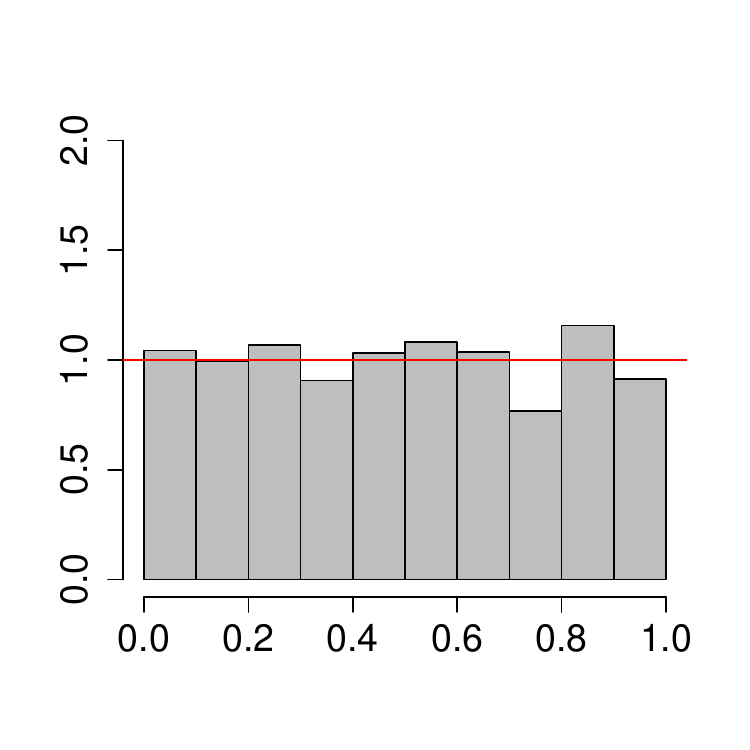}
    \caption{TGH - TGH.}
    \label{pit_12}
\end{subfigure}
\caption{The PIT results for a sample realization of the GSUN, Gaussian, Tukey $g$ (TG), and Tukey $g$-and-$h$ (TGH) processes simulated from $\bTheta = (2, 0.03, 0.5, 0.01, 1, 1, −3)^\top$ for the GSUN, $\btheta_g = (\sigma^2,\beta,\nu)^\top = (2,0.03,0.5)^\top$ for the Gaussian, $\btheta_{tg} = (\sigma^2,\beta,\nu,g)^\top = (2,0.03,0.5, 0.3)^\top$, and $\btheta_{tgh} = (\sigma^2,\beta,\nu,g,h)^\top = (2,0.03,0.5, 0.3,0.4)^\top$ for the TG and TGH at $n = 1600$ randomly generated locations on $[0,1]^2$ using the GSUN, Gaussian, TG, and TGH models. The caption, Model - Model, indicates that we apply the cdf of the former model to the realization generated using the latter.}
\label{PIT}
\end{figure}
\cite{tukey1977exploratory}, with $Z(\bs)$ as a Gaussian process. In this case, $g$ dictates the amount and direction of skewness (to the right if $g > 0$ and to the left if $g < 0$) and $h$ governs the tail behavior (where a larger $h$ leads to heavier tail weights). Lastly, if a random vector $\bZ \sim {\cal N}_n(\bmu_n,\bSigma_n)$, 

$\mathbf{T} = \tau_{g,h}(\bZ) \sim {\cal GH}_n(\bmu_n,\bSigma_n,g,h)$, where ${\cal GH}_n(\bmu_n,\bSigma_n,g,h)$ denotes a multivariate Tukey $g$-and-$h$ distribution with shift $\bmu_n$, dispersion matrix $\bSigma_n$, and $g,h$ for skewness and tail-heaviness.}

To apply the Probability Integral Transform (PIT), we need to calculate the cumulative distribution functions (cdfs) for each case. The cdf for the Gaussian distribution is well-known and used regularly. The cdf for the GSUN random field can be derived easily using Proposition~\ref{p6} in the Supplementary Materials. For the TGH 
random field, we can apply the inverse Tukey's $g$-and-$h$ transformation $\tau_{g,h}^{-1}\{Y(\bs)\}$ to obtain the underlying Gaussian field $Z(\bs)$ and then use the Gaussian cdf for PIT.

Moreover, we can use the Maximum Likelihood Estimation (MLE)-based inference  
as implemented in \cite{abdulah2018exageostat} and \cite{mondal2023tile} for parameter estimation of the Gaussian and TGH spatial models.

Figure \ref{PIT} demonstrates the PIT results under three different spatial models. In particular, Figure~\ref{pit_1} shows that the PIT is quite uniformly distributed compared to the results shown in Figures~\ref{pit_2} and \ref{pit_3}, which contain obvious concentrations either at the center or tail. Hence, the GSUN spatial process has its { own distinct asymmetry and tail-weight differing} from Gaussian and TGH fields.  

The same conclusion can also be made in Figures~\ref{pit_7} and \ref{pit_10}, where we can see a striking difference between TGH and GSUN. Figures~\ref{pit_4} and \ref{pit_6} demonstrate that both GSUN and TGH involve the Gaussian process as a special case. In addition, Figures~\ref{pit_3} and \ref{pit_7} indicate that the GSUN and TG typically have differing mean and tail weight; TG has a heavier tail weight even if there is only skewness ($h = 0$) in the model. Figure~\ref{pit_10} indicates that when $h$ is rather large, TGH possesses much heavier tails than the GSUN, further highlighting the difference between TGH and GSUN.

\section{Application to Pb-contaminated Soil Data} \label{real_data}
In this section, we apply the GSUN to the same real data as in \cite{zareifard2013non}, which contains 117 observations of Pb-contaminated areas in soils of a region of north Iran. In detail, we fit the GSUN and the SUGLG model proposed in \cite{zareifard2013non} to obtain the parameter estimates and then, plot the PIT results with the fitted models.  
\begin{figure}[b!]
\centering
\begin{subfigure}{0.44\textwidth}
  \centering
  \includegraphics[width=1\textwidth,]{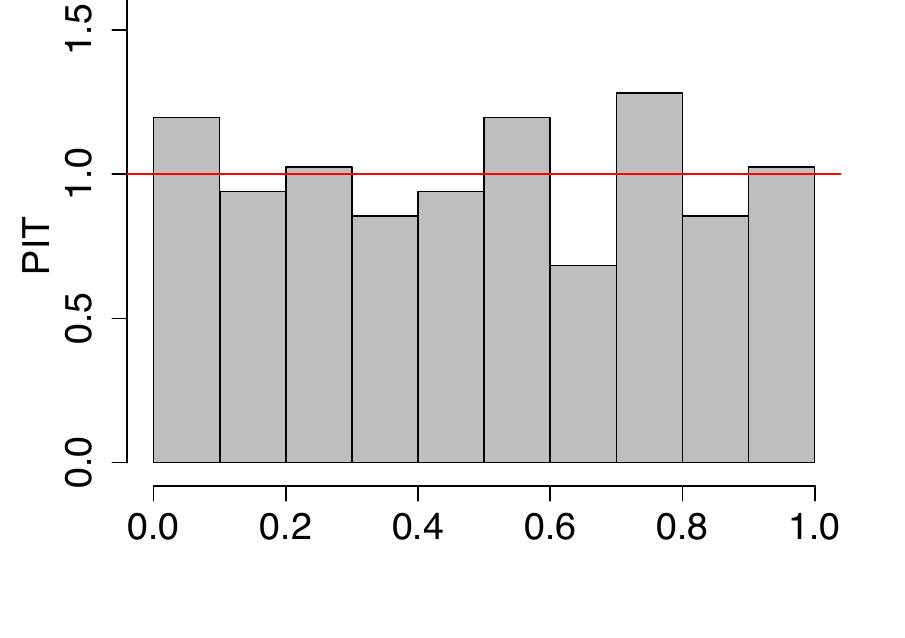}
  \vspace{-10mm}
\caption*{GSUN}
\end{subfigure}
\begin{subfigure}{0.44\textwidth}
  \centering
  \includegraphics[width=1\textwidth,]{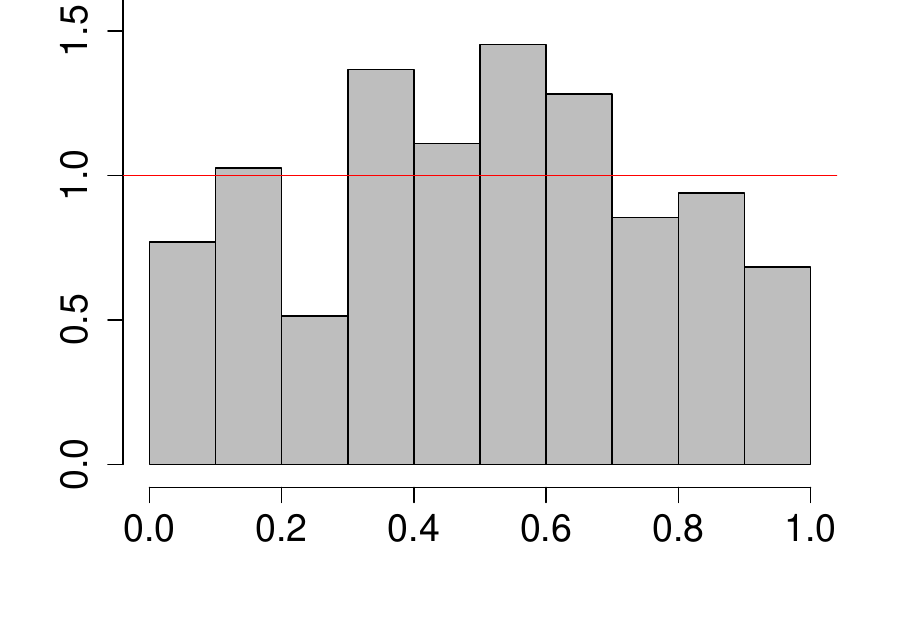}
  \vspace{-10mm}
  \caption*{SUGLG}
\end{subfigure}
\caption{The PIT results for the 117 observations of Pb-contaminated areas in soils of a region of north Iran using GSUN and SUGLG spatial models.}
\label{rPIT}
\end{figure}

Figure~\ref{rPIT} shows that the PIT with the GSUN model is closer to a uniform distribution compared to the PIT with SUGLG. Such results are anticipated because the GSUN model allows for more flexibility in the latent process that 
involves its own range and smoothness parameters, whereas the SUGLG model assumes the same covariance structure for the observed and latent processes.  In addition, GSUN uses one additional parameter to control the skewness compared to SUGLG and the skewness matrix $\bH$ in GSUN is an aggregation of the principal components of the covariance matrices of the observed and latent processes (the skewness matrix is set as a scalar multiple of the correlation matrix of the observed process in SUGLG), carrying more statistical intuition and interpretability and therefore, allowing for a better fit of the data. 

\section{Conclusion} \label{conclude}
In this work, we used a more convenient parameterization of the SUN distribution. Here, $\bH$ represents the direction of skewness, providing a straightforward representation. This parameterization also helps to avoid the numerical instability issue from the original parametrization. Additionally, we proposed a GSUN spatial model derived from a concise re-parameterization. This model ensures { vanishing correlations in large distances} and offers a more statistically interpretable skewness matrix. The GSUN process is more general compared with the conventional Gaussian process model. It includes the Gaussian process as a special case and models the range and smoothness of the latent process that governs the skewness, in contrast to keeping them fixed as in \cite{zareifard2013non}.

Due to the complexity of the parameter inference of the SUN distribution, we adopted a neural Bayes estimator to obtain point summaries of the parameters. This was achieved using a GAT- and Encoder-structured neural network while minimizing the Bayes risk. The neural Bayes estimator can be time-consuming in the training process and requires millions of simulations. However, once sufficiently trained, the inference stage is quite efficient. To this end, the proposed neural Bayes estimator in this work outperforms the conventional CNN-based architectures in terms of accuracy and stability, indicating the more enhanced modeling capacity of GAT and Encoder together with the graphical representation of spatial data. 

Moreover, we compared the GSUN model with Gaussian and Tukey $g$-and-$h$ models,  two popularly used spatial models, to demonstrate its uniqueness through PIT plots. We applied the GSUN spatial process to 
117 observations of Pb-contaminated areas in soils of a region of north Iran and showcased its better fit over the SUGLG process.

\section*{Acknowledgement}
This publication is based upon work supported by King Abdullah University of Science and Technology Research Funding (KRF) under Award No. ORFS-2022-CRG11-5069.

\newpage
\section*{Supplementary Materials}

\setcounter{figure}{0}
\setcounter{table}{0}
\renewcommand{\thefigure}{S\arabic{figure}}
\renewcommand{\thetable}{S\arabic{table}}
\subsection*{S.1 Proof of Equivalence for the Re-parameterization of the SUN}
\begin{proof}
 Given Definition \ref{defSUN}, here is the detailed proof:
 \begin{align*}
   (\bY|\bU=\bu)&\myeq\bxi+\bH\bu+\bW,  
 \end{align*}
 which is a normal random vector with $\text{E}(\bY|\bU=\bu)=\bxi+\bH\bu$ and $\text{cov}(\bY|\bU=\bu)=\bPsi$. Next, by Definition \ref{defSUN}, we have $\bU=(\bW_{0}|\bW_{0} + \btau > \0)\sim{\cal TN}_{q}(-\btau; \0,\Bar{\bGamma})$, where ${\cal TN}_q(-\btau;\0,\bar\bGamma)$ denotes a $q$-dimensional truncated multivariate normal from below $-\btau$ with mean $\0$ and covariance $\bar\bGamma$. Therefore, the pdf of $(\bY|\bU=\bu)$ is
\begin{align*}
    f(\by|\bu)=\phi_p\left(\by;\bxi+\bH\bu,\bPsi\right).
\end{align*}
We can also derive the pdf of $\bU$ as 
\begin{align*}
    g(\bu) 
    &= \frac{\Prob(\bU=\bu,\bU + \btau > \0)}{\Prob(-\bU  < \btau)}=\frac{\phi_{m}(\bu;\0,\Bar{\bGamma})}{\Phi_{m}(\btau;\0,\Bar{\bGamma})}=\frac{\phi_{m}(\bu;\0,\Bar{\bGamma})}{\Phi_{m}(\btau;\0,\Bar{\bGamma})},\quad \bu + \btau > \0.
\end{align*}
Therefore, for the pdf of $\bY$ we have that
 \begin{align*}
     f(\by)&=\int_{\bu + \btau >\0}f(\by|\bu)g(\bu)\text{d}\bu\\
           &=\frac{1}{\Phi_{m}(\btau;\0,\Bar{\bGamma})}\int_{\bu + \btau > \0}\phi_d\left(\by;\bxi+\bH\bu,\bPsi\right)\phi_m(\bu;\0,\Bar{\bGamma})\text{d}\bu.
 \end{align*}
Here the marginal-conditional representation of a $(d+m)$-variate normal pdf yields the following identity:
\begin{align*}
    \phi_{d}\left(\by;\bxi+\bH\bu,\bPsi\right)\phi_q(\bu;\0,\bar\bGamma)=
    \phi_d\left(\by;\bxi,\bPsi+\bH\bGamma\bH^\top\right)\phi_m(\bu;\bmu,\bSigma),
\end{align*}
where
\begin{align*}
&\bmu = \Bar{\bGamma}\bH^\top\left(\bPsi+ \bH\Bar{\bGamma}\bH^\top\right)^{-1}\left(\by-\bxi\right), \\
&\bSigma=\Bar{\bGamma}-\Bar{\bGamma}\bH^\top\left(\bPsi+\bH\Bar{\bGamma}\bH^\top\right)^{-1}\bH\Bar{\bGamma}.
\end{align*}
Following from the previous result, 
\begin{align*}
f(\by)&=\frac{\phi_d\left(\by;\bxi,\bPsi+\bH\Bar{\bGamma}\bH^\top\right)}{\Phi_{m}(\btau;\0,\Bar{\bGamma})}\int_{\bu + \btau > \0}\phi_m(\bu;\bmu,\bSigma)\text{d}\bu\\
    &=\frac{\phi_p\left(\by;\bxi,\bPsi+\bH\Bar{\bGamma}\bH^\top\right)}{\Phi_{m}(\btau;\0,\Bar{\bGamma})}\int_{\textbf{v}+\bmu + \btau > \0}\phi_m(\textbf{v};\0,\bSigma)\text{d}\textbf{v},
\end{align*}
where by symmetry
\begin{align*}
\int_{\textbf{v}+\bmu + \btau >\0}\phi_m(\textbf{v};\0,\bSigma)\text{d}\textbf{v}=\int_{\textbf{v}>-\bmu - \btau}\phi_m(\textbf{v};\0,\bSigma)\text{d}\textbf{v}=\int_{\textbf{v}<\bmu + \btau}\phi_m(\textbf{v};\0,\bSigma)\text{d}\textbf{v}=\Phi_m(\bmu;\0,\bSigma).
\end{align*}
 Putting all the pieces together we have 
\begin{align*}
f(\by)=\phi_p\left(\by;\bxi,\bPsi+\bH\Bar{\bGamma}\bH^\top\right)\frac{\Phi_{m}(\bmu+ \btau;\0,\bSigma)}{\Phi_{m}(\btau;\0,\Bar{\bGamma})},
\end{align*}
which matches exactly with the pdf of the SUN random vector calculated in \cite{arellano2006unification} if we replace the parameters accordingly.  
\end{proof}
We also present the selection representation \citep{arellano2006unification} of the SUN for \(\bY\). 
\begin{prop} \label{select}
    If $\bY \sim {\cal SUN}_{d,m}(\bxi,\bPsi,\bH,\btau,\Bar{\bGamma})$, then 
\begin{align*}
    \bY \myeq (\bW_*|\bW_0 + \btau > \0), \text{with} \begin{pmatrix}
    \bW_*\\
    \bW_0
    \end{pmatrix}\sim {\cal N}_{d+m}\left\{\begin{pmatrix}
    \bxi\\
    \0
    \end{pmatrix},\begin{pmatrix}
    \bPsi + \bH\bar\bGamma\bH^\top & \bH\bar\bGamma\\
    \bar\bGamma\bH^\top & \bar\bGamma
    \end{pmatrix}\right\}.
\end{align*}
\end{prop}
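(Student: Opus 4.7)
The plan is to start from the convolution representation in Definition~\ref{defSUN}, namely $\bY = \bxi + \bH\bU + \bW$ with $\bU \myeq (\bW_0 \mid \bW_0 + \btau > \0)$, and to reverse-engineer the joint Gaussian vector $(\bW_*, \bW_0)$ appearing in the proposition. Define $\bW_* := \bxi + \bH\bW_0 + \bW$, where now $\bW_0 \sim {\cal N}_m(\0,\bar\bGamma)$ is taken \emph{untruncated} and independent of $\bW \sim {\cal N}_d(\0,\bPsi)$. The aim is then to show that conditioning $(\bW_*, \bW_0)$ on the event $\{\bW_0 + \btau > \0\}$ produces a first component with the same distribution as $\bY$.

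First I would verify that $(\bW_*, \bW_0)$ is jointly Gaussian with precisely the mean vector and covariance matrix claimed in the proposition. This is a routine linear-algebra computation: linearity of expectation gives $\text{E}(\bW_*) = \bxi$ and $\text{E}(\bW_0) = \0$, and the covariance structure follows directly from $\text{cov}(\bW_*) = \bH \bar\bGamma \bH^\top + \bPsi$, $\text{cov}(\bW_0) = \bar\bGamma$, and $\text{cov}(\bW_*, \bW_0) = \bH\bar\bGamma$, where the cross-covariance uses the independence of $\bW$ and $\bW_0$.

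Second, I would establish the distributional identity $(\bW_* \mid \bW_0 + \btau > \0) \myeq \bY$. The key observation is that $\bW$ is independent of $\bW_0$, so conditioning on an event that involves only $\bW_0$ leaves the conditional law of $\bW$ unchanged and preserves the independence between $\bW$ and $\bW_0$. Consequently,
\begin{align*}
(\bW_* \mid \bW_0 + \btau > \0) \myeq \bxi + \bH\bigl(\bW_0 \mid \bW_0 + \btau > \0\bigr) + \bW = \bxi + \bH\bU + \bW,
\end{align*}
which is exactly $\bY$ by Definition~\ref{defSUN}.

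The main (minor) obstacle is the bookkeeping around independence under conditioning on an event that involves $\bW_0$ alone. I would make this rigorous by observing that the joint density factors as $f_{\bW_*, \bW_0}(\bw_*, \bw_0) = f_{\bW}(\bw_* - \bxi - \bH\bw_0)\, f_{\bW_0}(\bw_0)$, so restricting to $\{\bw_0 + \btau > \0\}$ only reweights the marginal of $\bW_0$ while leaving the conditional law of $\bW_* \mid \bW_0$ intact. As an independent sanity check, one could alternatively integrate $\bw_0$ out of the truncated joint density and recover exactly the SUN pdf derived in Section~S.1, confirming the selection representation.
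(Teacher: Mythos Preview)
Your argument is correct, but it takes a different route from the paper. The paper proves the proposition by direct pdf computation: it writes $f(\by)=\Prob(\bW_0+\btau>\0\mid\bW_*=\by)\,f_{\bW_*}(\by)/\Prob(\bW_0+\btau>\0)$, uses the Gaussian conditional law of $\bW_0\mid\bW_*=\by$ to evaluate the numerator as $\Phi_m(\bmu+\btau;\0,\bSigma)$, and then matches the result against the SUN pdf already derived in Section~S.1. Your approach instead \emph{constructs} the joint vector by setting $\bW_*=\bxi+\bH\bW_0+\bW$, checks the moments to identify the joint Gaussian law, and then exploits the independence of $\bW$ and $\bW_0$ to pass the conditioning on $\{\bW_0+\btau>\0\}$ through the linear map, landing directly on the convolution form $\bxi+\bH\bU+\bW$ of Definition~\ref{defSUN}. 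Your route is more structural and avoids any explicit density manipulation or use of conditional Gaussian formulas; it also makes transparent \emph{why} the selection and convolution representations coincide, namely that the selection event involves only the component that is independent of $\bW$. The paper's route, by contrast, gives an independent verification at the density level and reuses the quantities $\bmu$ and $\bSigma$ already computed in S.1, so it is perhaps more self-contained within the supplement. Your final remark about integrating out $\bw_0$ as a sanity check is in fact the paper's main argument.
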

\begin{proof}
We prove this by the equivalence of pdf. In particular, we compute the pdf of $(\bW_*|\bW_0 + \btau > \0)$ and have that 
\begin{align*}
    f(\by) & =\frac{\Prob(\bW_0 + \btau > \0|\bW_*=\by)f_{\bW_*}(\by)}{\Prob(\bW_0 > \0)}=\frac{\Prob(-\bW_0 < \btau|\bW_*=\by)f_{\bW_*}(\by)}{\Prob(-\bW_{0} < \btau)}\\
    &=\frac{\Prob(-\bW_{0} + \bmu < \bmu + \btau|\bW_*=\by)f_{\bW_*}(\by)}{\Prob(-\bW_{0} < \btau)}\\
    &=\phi_d\left(\by;\bxi,\bPsi+\bH\Bar{\bGamma}\bH^\top\right) \frac{\Phi_{m}(\bmu+\btau;\0,\bSigma)}{\Phi_{m}(\btau;\0,\bar\bGamma)},
\end{align*}
which is identical to the pdf derived from Definition \ref{defSUN}.
\end{proof}
\subsection*{S.3 Vanishing correlation of Non-negative Gaussian Field}
\begin{prop} \label{trunc_ind}
    Let $\bW^+ \sim {\cal TN}_n(\0;\0,\textbf{D})$, where $\textbf{D} = \diag(D_1, \dots, D_n)$. Then, the marginal random variables of $\bW^+$ are independent.
\end{prop}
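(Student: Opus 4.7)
The plan is to show that the joint density of $\bW^+$ factorizes as a product of functions of single coordinates; independence then follows from the standard characterization via density factorization.

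First I would write the density explicitly. Since $\bW^+$ is the truncation of $\bW \sim {\cal N}_n(\0, \textbf{D})$ to the positive orthant $\{\bw : \bw > \0\}$, its density is
\begin{equation*}
f_{\bW^+}(\bw) = \frac{\phi_n(\bw; \0, \textbf{D})}{\Prob(\bW > \0)}\,\mathbbm{1}(\bw > \0).
\end{equation*}
Because $\textbf{D} = \diag(D_1, \dots, D_n)$ is diagonal, the quadratic form in the Gaussian density splits as a sum of univariate terms, so the multivariate Gaussian density factorizes as $\phi_n(\bw; \0, \textbf{D}) = \prod_{i=1}^n \phi_1(w_i; 0, D_i)$. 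The indicator also factorizes, $\mathbbm{1}(\bw > \0) = \prod_{i=1}^n \mathbbm{1}(w_i > 0)$.

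Next I would handle the normalizing constant. Since the untruncated coordinates $W_i$ are independent (diagonal covariance plus joint normality) with mean zero, we have $\Prob(\bW > \0) = \prod_{i=1}^n \Prob(W_i > 0) = 2^{-n}$. Plugging in gives
\begin{equation*}
f_{\bW^+}(\bw) = \prod_{i=1}^n \frac{\phi_1(w_i; 0, D_i)\,\mathbbm{1}(w_i > 0)}{1/2},
\end{equation*}
which is a product of univariate half-normal densities. Therefore the joint density factorizes into marginals and the components of $\bW^+$ are mutually independent.

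There is no real obstacle here; the argument rests only on two observations: diagonal covariance implies the parent Gaussian factorizes, and the truncation region $\{\bw > \0\}$ is a product of univariate half-lines. If the truncation region were not a Cartesian product (for example, truncation to an ellipsoid or a half-space not aligned with the axes) or if $\textbf{D}$ had off-diagonal entries, the argument would break, so it is worth emphasizing in the write-up that both ingredients are essential and together suffice.
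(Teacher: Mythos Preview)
Your proof is correct and follows essentially the same approach as the paper's: both factorize the truncated-normal density into a product of univariate half-normal densities, using that a diagonal covariance makes the Gaussian kernel, the truncation indicator, and the normalizing constant all factor. Your version is slightly more explicit (writing out the indicator and evaluating the normalizing constant as $2^{-n}$), and your closing remark on where the argument would fail is a nice addition not present in the paper.
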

\begin{proof}
    Given that $\bW^+ \sim {\cal TN}_n(\0;\0,\textbf{D})$, we have the pdf of $\bW^+$ as 
    \begin{align*}
    f_{\bW^+}(\bw^+) & = \frac{\phi_n(\bw^+;\0,\textbf{D})}{\Phi_n(\0;\0,\textbf{D})} = \prod_{i = 1}^n \frac{\phi(w^+_i;0,D_i)}{\Phi(0;0,D_i)} = \prod_{i = 1}^n f_{W^+_i}(w^+_i),
    \end{align*}
where $W^+_i \sim {\cal TN}(0;0,D_i)$.
\end{proof}
\subsection*{S.4 CDF of the Re-parameterized SUN distribution}
\begin{prop}\label{p5}
    If $\bX \sim {\cal SUN}_{p,q}(\bxi,\bPsi,\bH,\btau,\bar\bGamma)$, then 
\begin{align*}
    \bX\myeq(\bW_*|\bW_0 + \btau > \0), \text{with} \begin{pmatrix}
    \bW_*\\
    \bW_0
    \end{pmatrix}\sim {\cal N}_{p+q}\left\{\begin{pmatrix}
    \bxi\\
    \0
    \end{pmatrix},\begin{pmatrix}
    \bPsi + \bH\bar\bGamma\bH^\top & \bH\bar\bGamma\\
    \bar\bGamma\bH^\top & \bar\bGamma
    \end{pmatrix}\right\}. 
\end{align*}
\end{prop}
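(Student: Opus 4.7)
The plan is to establish the claimed selection representation by demonstrating equality of probability density functions, mirroring the strategy already used for Proposition~\ref{select}. Specifically, I would compute $f(\by\mid\bW_0+\btau>\0)$ directly from the joint Gaussian distribution stated on the right-hand side and check that it coincides with the SUN pdf implied by Definition~\ref{defSUN}.

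First, I would read off the marginal $\bW_*\sim{\cal N}_d(\bxi,\bPsi+\bH\bar\bGamma\bH^\top)$ from the upper-left block of the stated joint covariance, giving $f_{\bW_*}(\by)=\phi_d(\by;\bxi,\bPsi+\bH\bar\bGamma\bH^\top)$. The normalizing constant is immediate: $\Prob(\bW_0+\btau>\0)=\Phi_m(\btau;\0,\bar\bGamma)$ by symmetry of the centered Gaussian. Next I would apply the standard Gaussian conditioning formula to the joint distribution to obtain $(\bW_0\mid\bW_*=\by)\sim{\cal N}_m(\bmu,\bSigma)$, where $\bmu=\bar\bGamma\bH^\top(\bPsi+\bH\bar\bGamma\bH^\top)^{-1}(\by-\bxi)$ and $\bSigma=\bar\bGamma-\bar\bGamma\bH^\top(\bPsi+\bH\bar\bGamma\bH^\top)^{-1}\bH\bar\bGamma$ -- precisely the quantities already computed in the Section~S.1 proof. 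Bayes' rule then gives $f(\by\mid\bW_0+\btau>\0) = \Prob(\bW_0+\btau>\0\mid\bW_*=\by)\,f_{\bW_*}(\by)/\Prob(\bW_0+\btau>\0)$.

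The main obstacle is converting $\Prob(\bW_0+\btau>\0\mid\bW_*=\by)$ into a form that matches the SUN numerator. Shifting via $\bW_0'=\bW_0-\bmu$, which is ${\cal N}_m(\0,\bSigma)$ conditional on $\bW_*=\by$, the event becomes $\bW_0'>-\bmu-\btau$; by central symmetry of the Gaussian, this has probability $\Phi_m(\bmu+\btau;\0,\bSigma)$. Plugging back yields exactly the SUN pdf derived at the end of Section~S.1, completing the argument. This symmetry manipulation is essentially the same step that was the crux of the equivalence proof, so the algebra can be recycled and no genuinely new difficulty arises. As a sanity-check complement, I would also verify the construction direction: starting from $\bX=\bxi+\bH\bU+\bW$ in Definition~\ref{defSUN} and defining $\bW_*=\bxi+\bH\bW_0+\bW$ with the unconditional $\bW_0$, a direct covariance computation recovers the claimed joint covariance, and conditioning on $\{\bW_0+\btau>\0\}$ substitutes $\bU$ for $\bW_0$ in the linear representation, confirming that $(\bW_*\mid\bW_0+\btau>\0)\myeq\bX$.
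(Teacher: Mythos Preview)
Your proposal is correct and follows essentially the same route as the paper: both compute the density of $(\bW_*\mid\bW_0+\btau>\0)$ via Bayes' rule, identify the conditional $(\bW_0\mid\bW_*=\by)\sim{\cal N}_m(\bmu,\bSigma)$ with the same $\bmu,\bSigma$ from Section~S.1, and use the central-symmetry trick to turn $\Prob(\bW_0+\btau>\0\mid\bW_*=\by)$ into $\Phi_m(\bmu+\btau;\0,\bSigma)$. Your additional sanity check via the linear construction $\bW_*=\bxi+\bH\bW_0+\bW$ is not in the paper but is a harmless (and useful) extra verification.
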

\begin{proof}
We prove this by the equivalence of mpdf.  We compute the mpdf of $(\bW_*|\bW_0 > \0)$ and we have that 
\begin{align*}
    f(\bx) & =\frac{\Prob(\bW_0 + \btau > \0|\bW_*=\bx)f_{\bW_*}(\bx)}{\Prob(\bW_0 + \btau > \0)} =\frac{\Prob(-\bW_{0} + \bmu < \bmu + \btau|\bW_*=\bx)f_{\bW_*}(\bx)}{\Prob(-\bW_{0}+\bmu < \bmu + \btau)}\\
&=\phi_p\left(\bx;\bxi,\bPsi+\bH\bar\bGamma\bH^\top\right) \frac{\Phi_{q}(\bmu+\btau;\0,\bSigma)}{\Phi_{q}(\btau;\0,\bar\bGamma)}.
\end{align*}
Notice that the result computed here agrees with the mpdf of the SUN distribution computed by definition. Hence, $\bX\myeq(\bW_*|\bW_0 + \btau > \0)$.
\end{proof}
\begin{prop}\label{p6}
    The cdf of \(\bX \sim {\cal SUN}_{p,q}(\bxi,\bPsi,\bH,\btau,\bar\bGamma)\) is 
\begin{align*}
    F(\bx)=\frac{\Phi_{p+q}(\bx_*;\bxi_*,\bOmega_*)}{\Phi_{q}(\btau;\0,\bar\bGamma)},\quad \bx\in \R^p,
\end{align*}
where $\bx_*=\left(\bx^\top,\btau^\top\right)^\top,\bxi_*=(\bxi^\top,\0^\top)^\top$, and $\bOmega_* = \begin{pmatrix}
\bPsi+\bH\bar\bGamma\bH^\top & -\bH\bar\bGamma\\
-\bar\bGamma\bH^\top & \bar\bGamma
\end{pmatrix}.$    
\end{prop}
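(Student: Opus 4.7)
The plan is to derive the cdf directly from the selection representation established in Proposition~\ref{p5}, reducing the computation to a single $(p+q)$-variate Gaussian orthant probability.

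First I would write
\begin{align*}
F(\bx) = \Prob(\bX \leq \bx) = \Prob(\bW_* \leq \bx \mid \bW_0 + \btau > \0) = \frac{\Prob(\bW_* \leq \bx,\; \bW_0 + \btau > \0)}{\Prob(\bW_0 + \btau > \0)},
\end{align*}
using the stochastic representation $\bX \myeq (\bW_* \mid \bW_0 + \btau > \0)$, where $(\bW_*,\bW_0)$ is the $(p+q)$-variate Gaussian vector whose joint covariance is spelled out in Proposition~\ref{p5}. By the symmetry of the centered Gaussian $\bW_0 \sim {\cal N}_q(\0,\bar\bGamma)$, the denominator equals $\Prob(-\bW_0 < \btau) = \Phi_q(\btau;\0,\bar\bGamma)$, which matches the denominator appearing in the statement.

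Next I would turn the numerator into a Gaussian cdf by applying the linear change of variable $\tilde\bW_0 = -\bW_0$ and keeping $\bW_*$ fixed. Since the marginal of $\bW_0$ is centered, $\tilde\bW_0$ is also centered Gaussian with covariance $\bar\bGamma$; the only effect of the sign flip is to reverse the sign of the cross-covariance, so that
\begin{align*}
\cov(\bW_*,\tilde\bW_0) = -\cov(\bW_*,\bW_0) = -\bH\bar\bGamma.
\end{align*}
Thus $(\bW_*,\tilde\bW_0)$ is a $(p+q)$-variate Gaussian vector with mean $\bxi_* = (\bxi^\top,\0^\top)^\top$ and covariance
\begin{align*}
\begin{pmatrix} \bPsi+\bH\bar\bGamma\bH^\top & -\bH\bar\bGamma\\ -\bar\bGamma\bH^\top & \bar\bGamma \end{pmatrix} = \bOmega_*,
\end{align*}
precisely the matrix in the claim. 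The event $\{\bW_* \leq \bx,\; \bW_0 + \btau > \0\}$ then rewrites as $\{\bW_* \leq \bx,\; \tilde\bW_0 \leq \btau\}$, whose probability is $\Phi_{p+q}(\bx_*;\bxi_*,\bOmega_*)$ with $\bx_* = (\bx^\top,\btau^\top)^\top$. Combining with the denominator gives the stated formula.

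The proof is essentially a bookkeeping exercise once Proposition~\ref{p5} is in hand; the only real point to watch is the sign flip in the cross-covariance when passing from $\bW_0$ to $-\bW_0$, which is what produces the off-diagonal $-\bH\bar\bGamma$ block in $\bOmega_*$ as opposed to the $+\bH\bar\bGamma$ block appearing in the joint distribution in Proposition~\ref{p5}. I do not anticipate any serious obstacle; the main thing is to state the substitution $\tilde\bW_0 = -\bW_0$ cleanly so that the conditioning event $\bW_0 + \btau > \0$ turns into the upper-orthant event $\tilde\bW_0 \leq \btau$ needed to realize a standard $(p+q)$-dimensional cdf.
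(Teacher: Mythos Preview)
Your proposal is correct and follows essentially the same route as the paper's own proof: both invoke the selection representation of Proposition~\ref{p5}, flip the sign of the latent block via $\bY_0=-\bW_0$ (your $\tilde\bW_0$), and read off the numerator and denominator as Gaussian probabilities. Your write-up is in fact a bit more explicit than the paper's in tracking how the sign flip produces the $-\bH\bar\bGamma$ cross-covariance in $\bOmega_*$.
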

\begin{proof}
 By Proposition \ref{p5}, we have \(\bX\myeq(\bW_*|\bW_0 + \btau > \0)\) which is equivalent to \((\bW_*|-\bW_0 < \btau)=(\bW_*|\bY_0 < \btau)\) where \(\bY_0=-\bW_0 \sim{\cal N}(\0,\bar\bGamma)\). Then,
$\begin{pmatrix}
\bW_*\\
\bY_0
\end{pmatrix} 
\sim {\cal N}_{p+q}(\bxi_*,\bOmega_*)$ and 
\begin{align*}
    F(\bx)=\Prob(\bX\leq \bx)=\Prob(\bW_* \leq \bx|\bY_0 < \btau)=\frac{\Prob(\bW_* \leq \bx,\bY_0 < \btau)}{\Prob(\bY_0 < \btau)}.
\end{align*}
\end{proof}
\subsection*{S.4 CNN-based Neural Bayes Estimator}
\begin{figure}[H]
  \centering
  \includegraphics[width=0.8\linewidth]{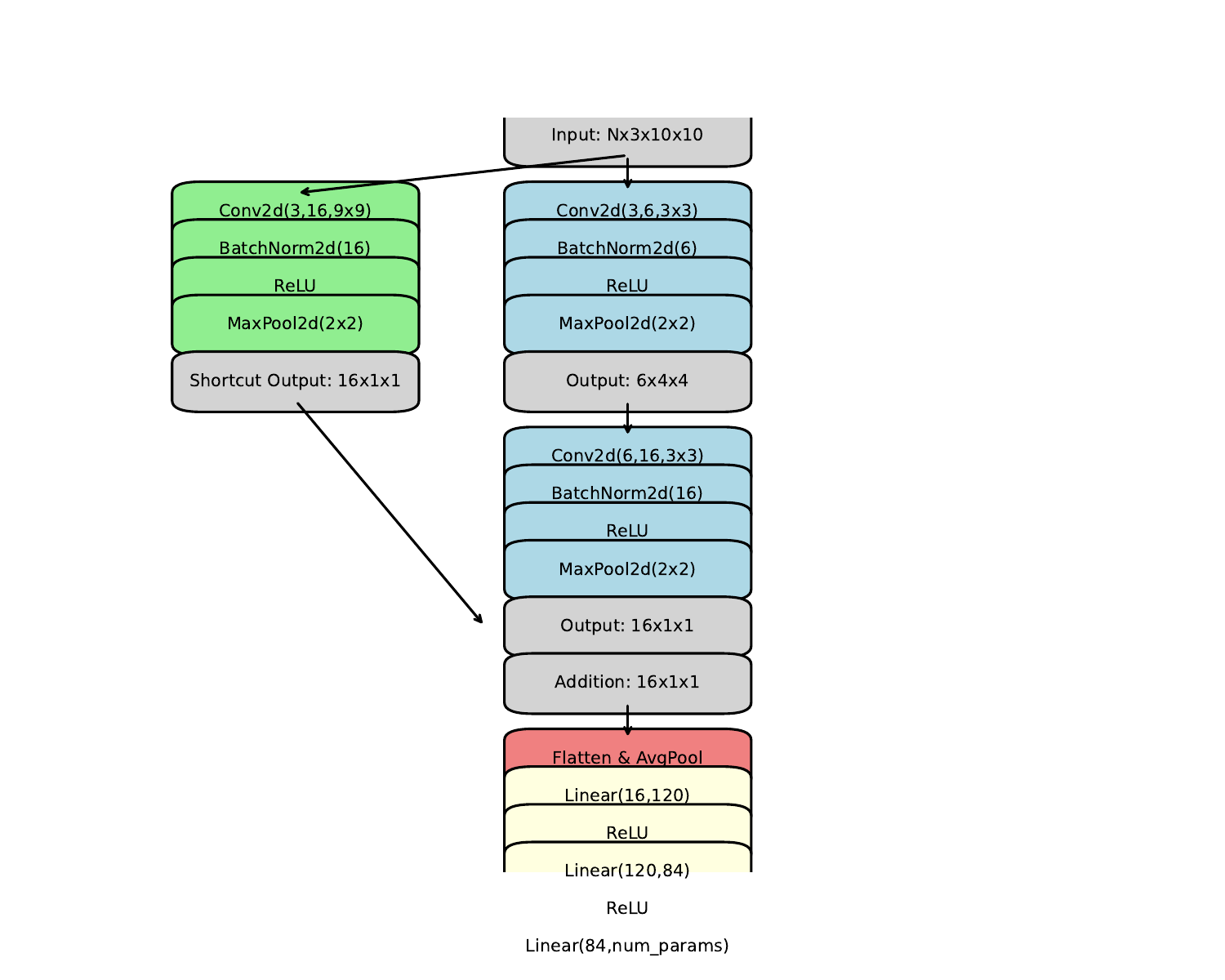}
\caption{CNN-based Architecture for the Neural Bayes Estimator.}
\label{arc_neural_bayes}
\end{figure}
First, we have $\bZ \sim {\cal SUN}_{n,n}(\0,\bSigma(\btheta_1),\bH,\0,\bSigma(\btheta_2)) \in \R^{n \times n}$. Next, we transform the set of locations $\bs = (s_1, s_2)^\top$ into $\R^{2\times n \times n}$ (putting $s_1^i$ and $s_2^i$, where $i = 1,\dots, n$, on two $n$-by-$n$ grids, respectively and stacking them together) and stack the dimensionally transformed $\bs$ with $\bZ$ to create a three-channel dataset, in symbols $\bZ \| \bs \in \R^{3 \times n \times n}$. This new dataset is similar to RGB image data, where instead of colors, spatial realizations are grouped with their locations across different channels. With this approach, a multi-channel CNN can effectively capture the dependency structure among the observed samples alongside their spatial information. Furthermore, this reformulation can be widely applied to both regularly and irregularly spaced data.

The diagram in Figure \ref{arc_neural_bayes} shows our proposed architecture of the Neural Bayes Estimator. The network comprises three CNN blocks for information aggregation and one DNN block for prediction. The blue CNN blocks represent information aggregation on the forward path. The green CNN block denotes the shortcut or residual block, which is a commonly used technique proposed in \cite{targ2016resnet} to address the vanishing gradient problem \citep{lau2018review} during training. Additionally, for each CNN block, 2-D batch normalization \citep{bjorck2018understanding} is applied to ensure identical scaling, along with the rectified linear unit (ReLU) activation function and 2-D max pooling \citep{nagi2011max} to filter out trivial features. In the final block, the network performs average pooling \citep{gholamalinezhad2020pooling} across the batch dimension to summarize the features learned in each replicate. Similar to the CNN blocks, ReLU activation functions ensure that only significant features pass through. The two linear layers in this block function to map distinct features to point summaries of $\bTheta$. 

The number of CNN layers, kernel sizes, and neurons for the linear layers are tuned according to the convergence efficiency. In short, training loss of the proposed architecture in Figure \ref{arc_neural_bayes} reaches below the threshold and stabilizes at around 30 million simulations. If one block is added in the forward path, the estimator takes longer to be properly trained. In addition, if the shortcut is removed, the estimator fails to reach below the threshold; see Figure \ref{training_loss_CNN} for more details.

To train the CNN-based neural Bayes Estimator, we use the same prior distributions and settings as in Section \ref{train_simulation}. Nonetheless, we adopt a different simulation scheme for training data. In short, we simulate $N = 160$ replicates of $\bZ \sim {\cal SUN}_{n,n}(\0, \bSigma(\btheta_1),\bH,\0, \textbf{C}(\btheta_2))$ with $n =100$ and $\bTheta = (1,0.15,1,0.1,0.5,0.55,-0.3)^\top$ from randomly generated locations. Then, we stack the dataset together as one input. By doing this, we can have more replicates of smaller spatial data, which helps to stabilize the loss during training. Moreover, the number of effective data remains the same as in Section \ref{train_simulation}.

\subsection*{S.5 Training and Hyper-parameter Tuning for the Neural Bayes Estimators}
\begin{figure}[H]
  \centering
  \includegraphics[width=0.8\linewidth]{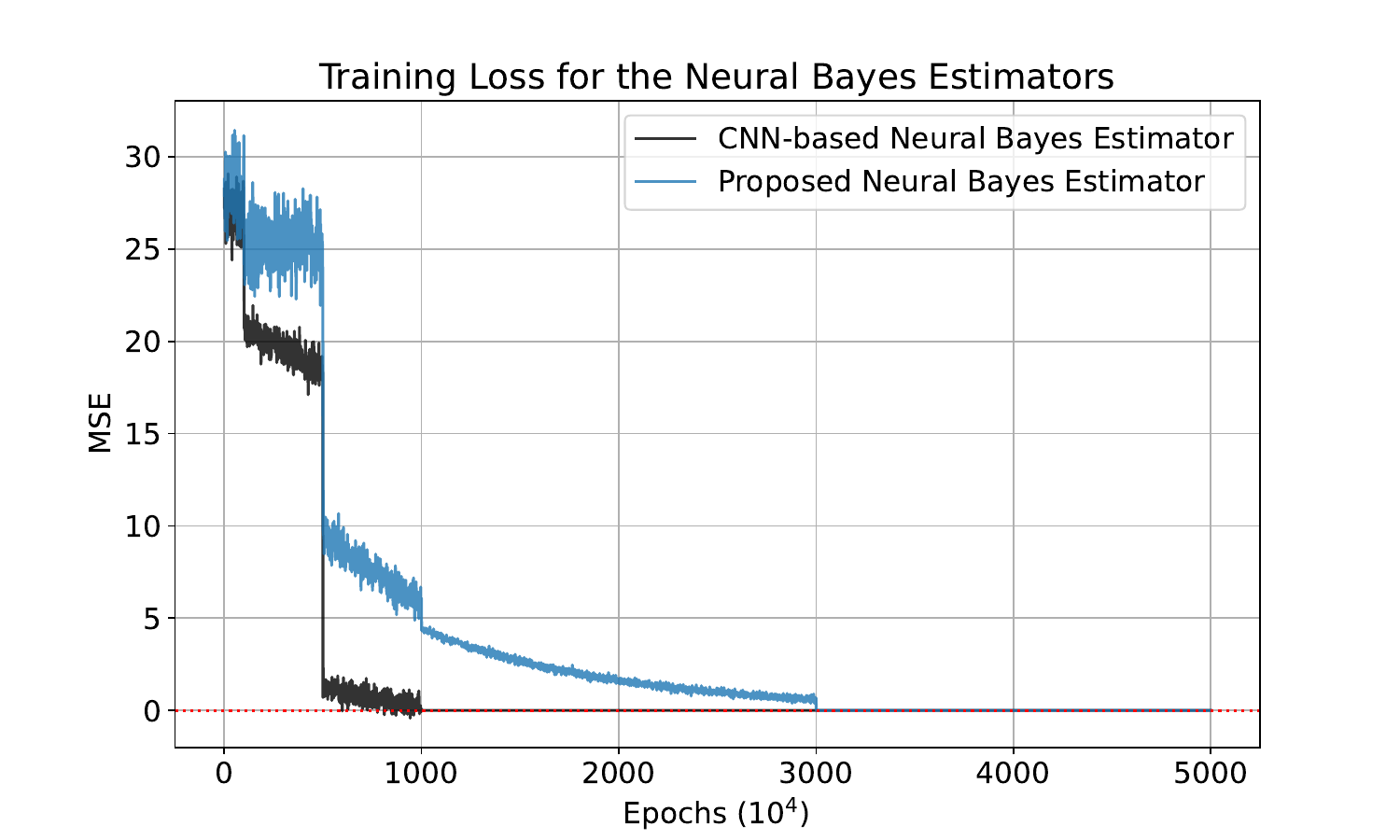}
\caption{Training loss curves for our proposed neural Bayes estimator (Blue) and the CNN-based Bayes estimator. The red line denotes the stopping criterion $10^{-5}$. The learning rates for both estimators start as $10^{-3}$ and are manually adjusted (multiplied by 0.1) at $(100,500,1000,3000) \times 10^4$ epochs.  }
\label{training_loss}
\end{figure}

\begin{figure}[H]
  \centering
  \includegraphics[width=0.8\linewidth]{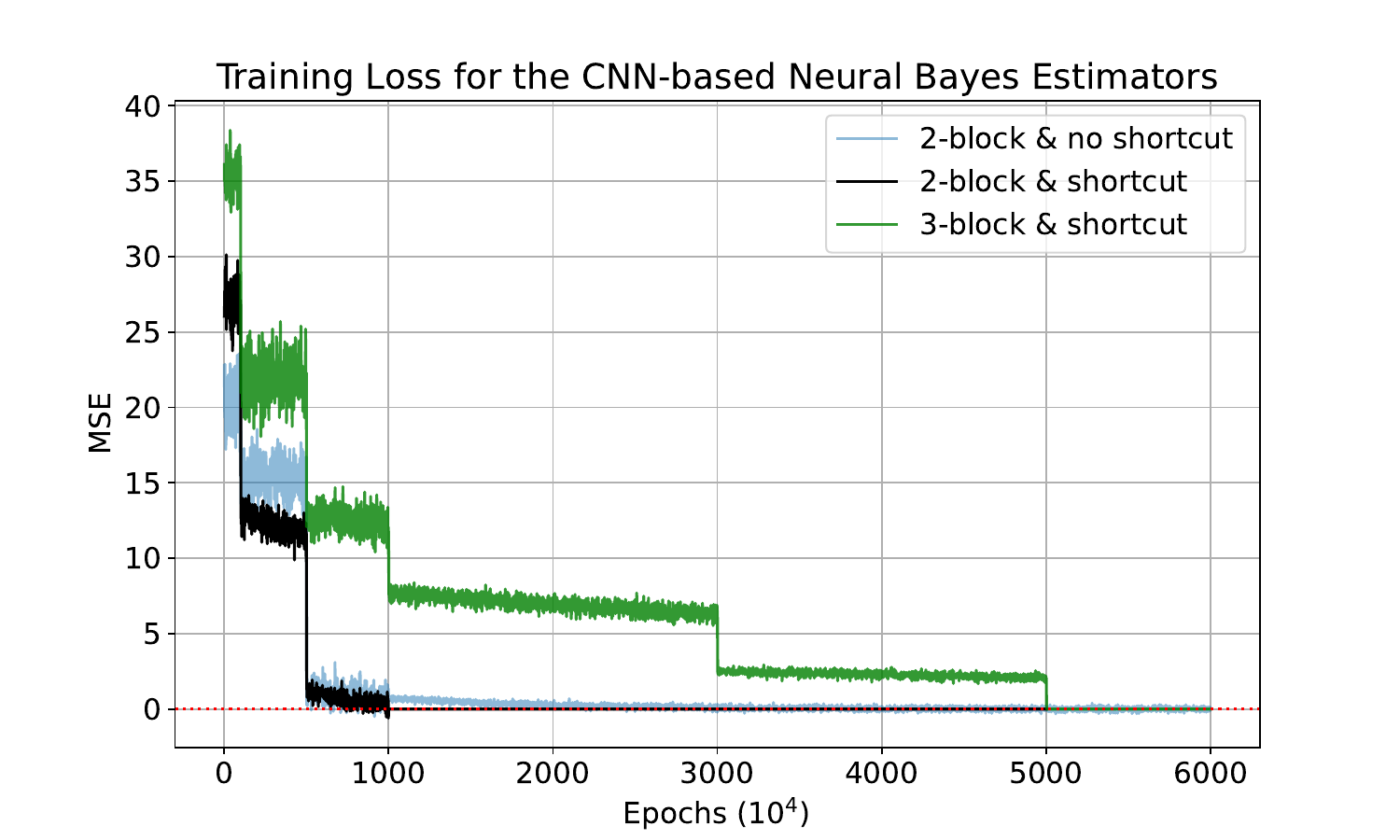}
\caption{Training loss curves for the CNN-based Bayes estimators with various hyper-parameters. The red line denotes the stopping criterion $10^{-5}$. The learning rates for both estimators start as $10^{-3}$ and are manually adjusted (multiplied by 0.1) at $(100,500,1000,3000) \times 10^4$ epochs.   }
\label{training_loss_CNN}
\end{figure}

In short, Figure~\ref{training_loss} demonstrates that our proposed neural Bayes estimator reaches below the threshold and converges at a slower rate compared with the conventional CNN-based architectures because graph and transformer architectures are more difficult to train. In addition, Figure~\ref{training_loss_trans} shows that having an encoder transformer block indeed helps the Bayes estimator to converge, which can be attributed to the enhanced modeling capacity and more profound understandings of the inherent dependence structure. Otherwise, the estimator experiences relatively large volatility even with $5\times 10^7$ simulated training data. In addition, Figures~\ref{training_loss_CNN} and \ref{training_loss_GAT} have demonstrated that having more layers of a particular architecture does not always lead to better efficiency. Redundant layers can cause the neural Bayes estimator to converge at a slower rate (more simulations needed to be properly trained). Lastly, Figure~\ref{training_loss_CNN} also illustrates that having a shortcut connection in the CNN architectures can help the Bayes estimator to reach a lower threshold (more properly trained). 
\bibliographystyle{apalike}
\bibliography{reference}

\begin{thebibliography}{}

\bibitem[Abdulah et~al., 2018]{abdulah2018exageostat}
Abdulah, S., Ltaief, H., Sun, Y., Genton, M.~G., and Keyes, D.~E. (2018).
\newblock {ExaGeoStat}: A high performance unified software for geostatistics
  on manycore systems.
\newblock {\em IEEE Transactions on Parallel and Distributed Systems},
  29(12):2771--2784.

\bibitem[Allard and Naveau, 2007]{allard2007new}
Allard, D. and Naveau, P. (2007).
\newblock A new spatial skew-normal random field model.
\newblock {\em Communications in Statistics—Theory and Methods},
  36(9):1821--1834.

\bibitem[Arellano-Valle and Azzalini, 2006]{arellano2006unification}
Arellano-Valle, R.~B. and Azzalini, A. (2006).
\newblock On the unification of families of skew-normal distributions.
\newblock {\em Scandinavian Journal of Statistics}, 33(3):561--574.

\bibitem[Arellano-Valle and Azzalini, 2022]{AA22}
Arellano-Valle, R.~B. and Azzalini, A. (2022).
\newblock Some properties of the unified skew-normal distribution.
\newblock {\em Statistical Papers}, 63:461–487.
\newblock Correction in 2023 at
  \url{https://doi.org/10.1007/s00362-023-01412-5}.

\bibitem[Arellano-Valle and Genton, 2010]{arellano2010multivariate}
Arellano-Valle, R.~B. and Genton, M.~G. (2010).
\newblock Multivariate unified skew-elliptical distributions.
\newblock {\em Chilean Journal of Statistics}, 1(1):17--33.

\bibitem[Arellano-Valle and Genton, 2025]{AG22}
Arellano-Valle, R.~B. and Genton, M.~G. (2025).
\newblock {\em Multivariate Statistics Beyond Normality}.
\newblock Chapman and Hall/CRC, in preparation.

\bibitem[Azzalini, 2023]{sn}
Azzalini, A.~A. (2023).
\newblock {\em The {R} package \texttt{sn}: The skew-normal and related
  distributions such as the skew-$t$ and the {SUN} (version 2.1.1).}
\newblock Universit\`a degli Studi di Padova, Italia.
\newblock Home page: \url{http://azzalini.stat.unipd.it/SN/}.

\bibitem[Ba, 2016]{ba2016layer}
Ba, J.~L. (2016).
\newblock Layer normalization.
\newblock {\em arXiv preprint arXiv:1607.06450}.

\bibitem[Bevilacqua et~al., 2021]{bevilacqua2021non}
Bevilacqua, M., Caama{\~n}o-Carrillo, C., Arellano-Valle, R.~B., and
  Morales-O{\~n}ate, V. (2021).
\newblock Non-gaussian geostatistical modeling using (skew) t processes.
\newblock {\em Scandinavian Journal of Statistics}, 48(1):212--245.

\bibitem[Bjorck et~al., 2018]{bjorck2018understanding}
Bjorck, N., Gomes, C.~P., Selman, B., and Weinberger, K.~Q. (2018).
\newblock Understanding batch normalization.
\newblock {\em Advances in Neural Information Processing Systems}, 31.

\bibitem[Dominguez-Molina et~al., 2003]{dominguez2003multivariate}
Dominguez-Molina, J., Gonzalez-Farias, G., and Gupta, A. (2003).
\newblock The multivariate closed skew normal distribution.
\newblock {\em Technical Report 03-12}.

\bibitem[Genton and Zhang, 2012]{genton2012identifiability}
Genton, M.~G. and Zhang, H. (2012).
\newblock Identifiability problems in some non-gaussian spatial random fields.
\newblock {\em Chilean Journal of Statistics}, 3(2):171--179.

\bibitem[Gholamalinezhad and Khosravi, 2020]{gholamalinezhad2020pooling}
Gholamalinezhad, H. and Khosravi, H. (2020).
\newblock Pooling methods in deep neural networks, a review.
\newblock {\em arXiv preprint arXiv:2009.07485}.

\bibitem[Gilmer et~al., 2017]{gilmer2017neural}
Gilmer, J., Schoenholz, S.~S., Riley, P.~F., Vinyals, O., and Dahl, G.~E.
  (2017).
\newblock Neural message passing for quantum chemistry.
\newblock In {\em 34th International Conference on Machine Learning (ICML)}.

\bibitem[Goodfellow et~al., 2016]{goodfellow2016deep}
Goodfellow, I., Bengio, Y., and Courville, A. (2016).
\newblock {\em Deep {L}earning}.
\newblock MIT press.

\bibitem[Gupta et~al., 2013]{gupta2013some}
Gupta, A.~K., Aziz, M.~A., and Ning, W. (2013).
\newblock On some properties of the unified skew normal distribution.
\newblock {\em Journal of Statistical Theory and Practice}, 7:480--495.

\bibitem[Haran, 2011]{haran2011gaussian}
Haran, M. (2011).
\newblock Gaussian random field models for spatial data.
\newblock {\em Handbook of Markov Chain Monte Carlo}, pages 449--478.

\bibitem[Hornik et~al., 1989]{hornik1989multilayer}
Hornik, K., Stinchcombe, M., and White, H. (1989).
\newblock Multilayer feedforward networks are universal approximators.
\newblock {\em Neural Networks}, 2(5):359--366.

\bibitem[Kim and Mallick, 2004]{kim2004bayesian}
Kim, H.-M. and Mallick, B.~K. (2004).
\newblock A bayesian prediction using the skew gaussian distribution.
\newblock {\em Journal of Statistical Planning and Inference},
  120(1-2):85--101.

\bibitem[Kipf and Welling, 2016]{kipf2016semi}
Kipf, T.~N. and Welling, M. (2016).
\newblock Semi-supervised classification with graph convolutional networks.
\newblock {\em arXiv preprint arXiv:1609.02907}.

\bibitem[Lau and Lim, 2018]{lau2018review}
Lau, M.~M. and Lim, K.~H. (2018).
\newblock Review of adaptive activation function in deep neural network.
\newblock In {\em 2018 IEEE-EMBS Conference on Biomedical Engineering and
  Sciences (IECBES)}, pages 686--690. IEEE.

\bibitem[Lenzi et~al., 2023]{lenzi2023neural}
Lenzi, A., Bessac, J., Rudi, J., and Stein, M.~L. (2023).
\newblock Neural networks for parameter estimation in intractable models.
\newblock {\em Computational Statistics \& Data Analysis}, 185:107762.

\bibitem[Mahmoudian, 2018]{mahmoudian2018existence}
Mahmoudian, B. (2018).
\newblock On the existence of some skew-gaussian random field models.
\newblock {\em Statistics \& Probability Letters}, 137:331--335.

\bibitem[Minozzo and Ferracuti, 2012]{minozzo2012existence}
Minozzo, M. and Ferracuti, L. (2012).
\newblock On the existence of some skew-normal stationary processes.
\newblock {\em Chilean Journal of Statistics (ChJS)}, 3(2).

\bibitem[Mondal et~al., 2023]{mondal2023tile}
Mondal, S., Abdulah, S., Ltaief, H., Sun, Y., Genton, M.~G., and Keyes, D.~E.
  (2023).
\newblock Tile low-rank approximations of non-gaussian space and space-time
  tukey g-and-h random field likelihoods and predictions on large-scale
  systems.
\newblock {\em Journal of Parallel and Distributed Computing}, 180:104715.

\bibitem[Nagi et~al., 2011]{nagi2011max}
Nagi, J., Ducatelle, F., Di~Caro, G.~A., Cire{\c{s}}an, D., Meier, U., Giusti,
  A., Nagi, F., Schmidhuber, J., and Gambardella, L.~M. (2011).
\newblock Max-pooling convolutional neural networks for vision-based hand
  gesture recognition.
\newblock In {\em 2011 IEEE {I}nternational {C}onference on {S}ignal and
  {I}mage {P}rocessing {A}pplications (ICSIPA)}, pages 342--347. IEEE.

\bibitem[Richards et~al., 2023]{richards2023likelihood}
Richards, J., Sainsbury-Dale, M., Zammit-Mangion, A., and Huser, R. (2023).
\newblock Likelihood-free neural bayes estimators for censored
  peaks-over-threshold models.
\newblock {\em arXiv preprint arXiv:2306.15642}.

\bibitem[Rimstad and Omre, 2014]{rimstad2014skew}
Rimstad, K. and Omre, H. (2014).
\newblock Skew-gaussian random fields.
\newblock {\em Spatial Statistics}, 10:43--62.

\bibitem[Sainsbury-Dale et~al., 2023]{sainsbury2023neural}
Sainsbury-Dale, M., Richards, J., Zammit-Mangion, A., and Huser, R. (2023).
\newblock Neural bayes estimators for irregular spatial data using graph neural
  networks.
\newblock {\em arXiv preprint arXiv:2310.02600}.

\bibitem[Sainsbury-Dale et~al., 2024]{sainsbury2024likelihood}
Sainsbury-Dale, M., Zammit-Mangion, A., and Huser, R. (2024).
\newblock Likelihood-free parameter estimation with neural bayes estimators.
\newblock {\em The American Statistician}, 78(1):1--14.

\bibitem[Srivastava et~al., 2014]{srivastava2014dropout}
Srivastava, N., Hinton, G., Krizhevsky, A., Sutskever, I., and Salakhutdinov,
  R. (2014).
\newblock Dropout: a simple way to prevent neural networks from overfitting.
\newblock {\em The Journal of Machine Learning Research}, 15(1):1929--1958.

\bibitem[Tagle et~al., 2019]{tagle2019non}
Tagle, F., Castruccio, S., Crippa, P., and Genton, M.~G. (2019).
\newblock A non-gaussian spatio-temporal model for daily wind speeds based on a
  multi-variate skew-t distribution.
\newblock {\em Journal of Time Series Analysis}, 40(3):312--326.

\bibitem[Targ et~al., 2016]{targ2016resnet}
Targ, S., Almeida, D., and Lyman, K. (2016).
\newblock Resnet in resnet: Generalizing residual architectures.
\newblock {\em arXiv preprint arXiv:1603.08029}.

\bibitem[Tukey et~al., 1977]{tukey1977exploratory}
Tukey, J.~W. et~al. (1977).
\newblock {\em Exploratory {D}ata {A}nalysis}, volume~2.
\newblock Springer.

\bibitem[Vaswani, 2017]{vaswani2017attention}
Vaswani, A. (2017).
\newblock Attention is all you need.
\newblock {\em Advances in Neural Information Processing Systems}.

\bibitem[Veli{\v{c}}kovi{\'c} et~al., 2018]{velickovic2018graph}
Veli{\v{c}}kovi{\'c}, P., Cucurull, G., Casanova, A., Romero, A., Li{\`o}, P.,
  and Bengio, Y. (2018).
\newblock Graph attention networks.
\newblock In {\em 6th International Conference on Learning Representations
  (ICLR)}.

\bibitem[Walchessen et~al., 2024]{walchessen2024neural}
Walchessen, J., Lenzi, A., and Kuusela, M. (2024).
\newblock Neural likelihood surfaces for spatial processes with computationally
  intensive or intractable likelihoods.
\newblock {\em Spatial Statistics}, 62:100848.

\bibitem[Wang et~al., 2023a]{wang2023parameterization}
Wang, K., Abdulah, S., Sun, Y., and Genton, M.~G. (2023a).
\newblock Which parameterization of the {M}at{\'e}rn covariance function?.
\newblock {\em Spatial Statistics}, 58:100787.

\bibitem[Wang et~al., 2023b]{nonke}
Wang, K., Arellano-Valle, R.~B., Azzalini, A., and Genton, M.~G. (2023b).
\newblock On the non-identifiability of unified skew-normal distributions.
\newblock {\em Stat}, 12(1):e597.

\bibitem[Wang et~al., 2024]{wang2024multivariate}
Wang, K., Karling, M.~J., Arellano-Valle, R.~B., and Genton, M.~G. (2024).
\newblock Multivariate unified skew-t distributions and their properties.
\newblock {\em Journal of Multivariate Analysis}, 203:105322.

\bibitem[Wilhelm and Manjunath, 2023]{tmvtnorm}
Wilhelm, S. and Manjunath, B. (2023).
\newblock {\em {tmvtnorm}: Truncated Multivariate Normal and Student t
  Distribution}.
\newblock R package version 1.6.

\bibitem[Xu and Genton, 2017]{xu2017tukey}
Xu, G. and Genton, M.~G. (2017).
\newblock Tukey g-and-h random fields.
\newblock {\em Journal of the American Statistical Association},
  112(519):1236--1249.

\bibitem[Zaheer et~al., 2017]{zaheer2017deep}
Zaheer, M., Kottur, S., Ravanbakhsh, S., Poczos, B., Salakhutdinov, R.~R., and
  Smola, A.~J. (2017).
\newblock Deep sets.
\newblock {\em Advances in neural information processing systems}, 30.

\bibitem[Zammit-Mangion et~al., 2024]{zammit2024neural}
Zammit-Mangion, A., Sainsbury-Dale, M., and Huser, R. (2024).
\newblock Neural methods for amortised parameter inference.
\newblock {\em arXiv preprint arXiv:2404.12484}.

\bibitem[Zareifard and Khaledi, 2013]{zareifard2013non}
Zareifard, H. and Khaledi, M.~J. (2013).
\newblock Non-{G}aussian modeling of spatial data using scale mixing of a
  unified skew {G}aussian process.
\newblock {\em Journal of Multivariate Analysis}, 114:16--28.

\bibitem[Zhang, 2004]{zhang2004inconsistent}
Zhang, H. (2004).
\newblock Inconsistent estimation and asymptotically equal interpolations in
  model-based geostatistics.
\newblock {\em Journal of the American Statistical Association},
  99(465):250--261.

\bibitem[Zhang and El-Shaarawi, 2010]{zhang2010spatial}
Zhang, H. and El-Shaarawi, A. (2010).
\newblock On spatial skew-gaussian processes and applications.
\newblock {\em Environmetrics}, 21(1):33--47.

\bibitem[Zhou, 2020]{zhou2020universality}
Zhou, D.-X. (2020).
\newblock Universality of deep convolutional neural networks.
\newblock {\em Applied and Computational Harmonic Analysis}, 48(2):787--794.

\end{thebibliography}

\end{document}